\definecolor{darkblue}{RGB}{25, 50, 112}
\theoremstyle{plain}
\newtheorem{theorem}{Theorem}[]
\newtheorem{proposition}[theorem]{Proposition}
\newtheorem*{proposition*}{Proposition}
\theoremstyle{remark}
\newtheorem*{definition*}{Definition}
\DeclareMathOperator{\argmax}{arg\ max}
\newcommand{\ie}{\emph{i.e.}}
\newcommand{\eg}{\emph{e.g.}}
\newcommand{\E}{\mathbb{E}}
\renewcommand{\P}{\mathbb{P}}
\newcommand{\R}{\mathbb{R}}
\def\gA{{\mathcal{A}}}
\def\gE{{\mathcal{E}}}
\def\gH{{\mathcal{H}}}
\def\gL{{\mathcal{L}}}
\def\gS{{\mathcal{S}}}
\def\gT{{\mathcal{T}}}
\def\gX{{\mathcal{X}}}
\def\x{{\mathbf{x}}}
\def\s{{\mathbf{s}}}
\def\vtheta{\boldsymbol{\theta}}
\def\vphi{\boldsymbol{\phi}}
\def\ra{{\rightarrow}}
\def\dra{{\dashrightarrow}}
\def\la{{\leftarrow}}
\icmltitlerunning{Generative Flow Networks for Discrete Probabilistic Modeling}
\begin{document}

\twocolumn[
\icmltitle{ Generative Flow Networks for Discrete Probabilistic Modeling }


\icmlsetsymbol{equal}{*}

\begin{icmlauthorlist}
\icmlauthor{Dinghuai Zhang}{milaudem}
\icmlauthor{Nikolay Malkin}{milaudem}
\icmlauthor{Zhen Liu}{milaudem}
\icmlauthor{Alexandra Volokhova}{milaudem}
\icmlauthor{Aaron Courville}{milaudem}
\icmlauthor{Yoshua Bengio}{milaudem}
\end{icmlauthorlist}

\icmlaffiliation{milaudem}{Mila - Quebec AI Institute and Universit\'e de Montr\'eal, Montreal, Quebec, Canada}

\icmlcorrespondingauthor{Dinghuai Zhang}{dinghuai.zhang@mila.quebec}

\icmlkeywords{Machine Learning, ICML}

\vskip 0.3in
]



\printAffiliationsAndNotice{}  

\begin{abstract}
We present energy-based generative flow networks (EB-GFN), a novel probabilistic modeling algorithm for high-dimensional discrete data. Building upon the theory of generative flow networks \citep[GFlowNets;][]{bengio2021foundations}, we model the generation process by a stochastic data construction policy and thus amortize expensive MCMC exploration into a fixed number of actions sampled from a GFlowNet.
We show how GFlowNets can approximately perform large-block Gibbs sampling to mix between modes. 
We propose a framework to
jointly train a GFlowNet with an energy function, so that the GFlowNet learns to sample from the energy distribution, while the energy learns with an approximate MLE objective with negative samples from the GFlowNet.
We demonstrate EB-GFN's effectiveness on various probabilistic modeling tasks.
Code is publicly available at 
\href{https://github.com/zdhNarsil/EB_GFN}{\tt github.com/zdhNarsil/EB\_GFN}.
\end{abstract}
\vspace{-4mm}

\section{Introduction}
\label{sec:introduction}

Probabilistic modeling in discrete spaces, especially those with compositional structure, is important due to the universality of applications of discrete data structures, such as in natural language processing \citep{Tai2015ImprovedSR} or in symbolic reasoning \citep{Besold2017NeuralSymbolicLA}.
However, distributions in high-dimensional discrete spaces are generally hard to model, as they may feature rapid combinatorial growth of modes. These modes can be well separated from each other, which poses a challenge for Markov Chain Monte Carlo (MCMC) methods \citep{Salakhutdinov2009LearningIM}.
Mixing between modes is generally slow without a priori knowledge of the specific latent structure of the distribution.

\begin{figure}[t]
\begin{minipage}{0.07\textwidth}
\hspace{0.cm}
\end{minipage}
\begin{minipage}{0.35\textwidth}
    \centering
    \includegraphics[width=\textwidth]{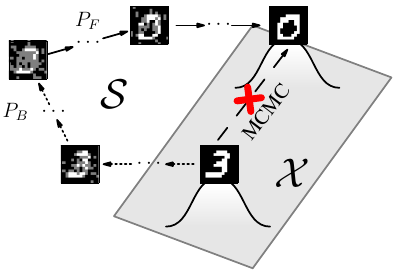}
    \end{minipage}
\caption{
    \textbf{The EB-GFN framework for learning an energy function on a discrete space $\gX$}, in this case the set of all binary images, 
    to maximize likelihood of a dataset 
    (\eg, MNIST). An energy landscape with well-separated modes  is difficult to explore with local search methods. A GFlowNet, parametrized with a pair of stochastic policies $P_F$ (painting) and $P_B$ (erasure), exploits a broader state space $\gS$ with partial choices, thus aiding the exploration of $\gX$ that is necessary for updating the energy function with contrastive-divergence-like objectives. 
    The trained GFlowNet is also a generative model on its own;
    see Figs.~\ref{fig:gfn_state_space}
     and \ref{fig:mnist}.
    }
    \label{fig:main_fig}
\vspace{-6mm}
\end{figure}

Furthermore, 
generative modeling methods such as energy-based models \citep[EBMs;][]{LeCun2006ATO} also suffer from these mixing issues when generating negative samples with MCMC
\citep{Tieleman2009UsingFW}.
The incapability of MCMC to capture the energy landscape results in the spurious mode problem \citep{Desjardins2010TemperedMC, Bengio2013BetterMV}, in which new modes that do not occur in the true data distribution appear in the learned energy distribution.

In this paper, we propose to take advantage of generative flow networks, or GFlowNets~\citep{bengio2021flow,bengio2021foundations}, instead of MCMC methods in order to simultaneously learn a sampler and train an energy function from data. GFlowNets are generative models for compositional objects, \ie, they learn a stochastic policy that iteratively constructs the sampled object through a sequence of simpler steps. In past work, GFlowNets were trained to query a given energy function, rather than from a dataset (like typical generative models). We propose to adapt the GFlowNet methodology to the problem of learning from data, and jointly train the GFlowNet sampler and the energy function.

Because a GFlowNet can {\em learn from the low-energy configurations it has already encountered}, it has a chance to guess and sample yet-unvisited modes if there are learnable regularities in the underlying data distribution. The compositional generative structure of GFlowNet further enhances its ability to discover regularities and thus jump between modes without having to travel long low-probability paths in between (Fig.~\ref{fig:main_fig}). Instead of having to exploit a priori structure to design long jumps in the MCMC, GFlowNets can discover that structure when it is not explicitly known.

This paper makes the following key contributions:
\vspace{-3mm}
\begin{enumerate}[(1)]
\item We cast a non-autoregressive sequential generation model for high-dimensional discrete data as a generative flow network (\S\ref{sec:gfn_setup}).\vspace{-2.5mm}
\item We introduce a GFlowNet-based MCMC proposal enabling efficient large jumps with low probability of rejection, taking advantage of the compositional structure learned by the GFlowNet generative policy (\S\ref{sec:joint_training}).\vspace{-2.5mm}
\item We describe a procedure based on such proposals for jointly training the GFlowNet sampler with an energy model given a dataset and state conditions under which this estimates the true data log-likelihood gradient with respect to the energy function’s parameters (\S\ref{sec:gfn_training}, \S\ref{sec:joint_training}).\vspace{-2.5mm}
\item We test the algorithm on a variety of synthetic and real tasks, achieving competitive results (\S\ref{sec:experiments}).
\end{enumerate}
\vspace{-3mm}



\vspace{-1.5mm}
\section{Preliminaries}
\vspace{-1.5mm}
\subsection{GFlowNets}
\vspace{-0.5mm}
\label{sec:prelim_gfn}

\begin{figure}[t]
    \centering
    \includegraphics[width=0.4\textwidth]{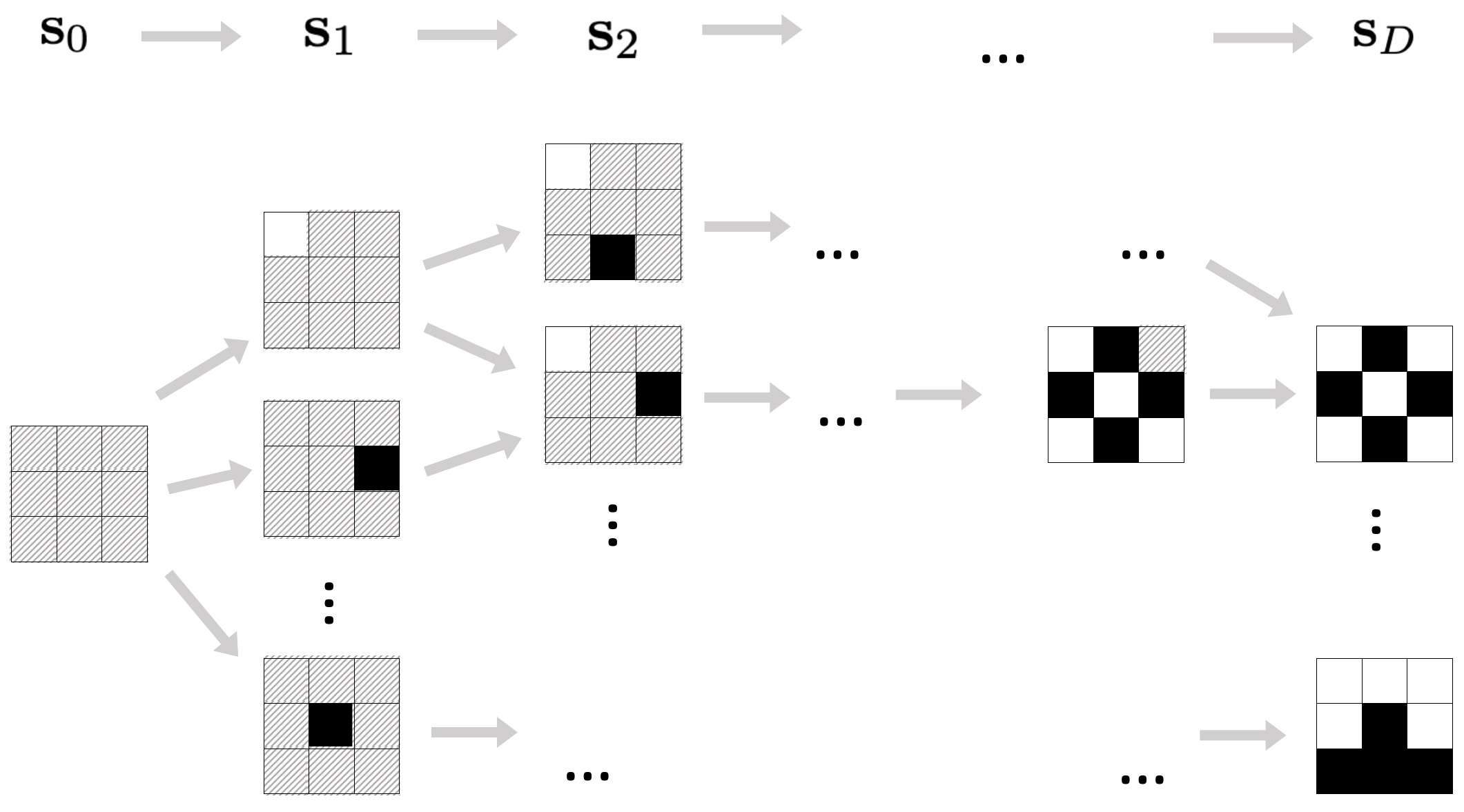}
    \vspace{-2mm}
    \caption{
    The state space $\gS$ and the GFlowNet's forward modeling process in a $9$-dimensional discrete data space.
    The states are the vertices of a DAG whose edges are the transitions -- actions of painting a grey pixel into black (1) or white (0). 
    }
    \label{fig:gfn_state_space}
\vspace{-5mm}
\end{figure}

Generative flow networks, or GFlowNets for short \citep{bengio2021flow,bengio2021foundations}, are trainable generative policies on which this paper is built. They model the generation process of objects $\x\in\gX$ by a sequence of discrete \emph{actions} that incrementally modify a partially constructed object (\emph{state}). The space of possible action sequences is represented by a directed acyclic graph (DAG, see Fig.~\ref{fig:gfn_state_space}) $G=(\gS,\gA)$, where the vertices in $\gS$ are states and the edges in $\gA$ are actions that modify one state to another. 
We use the terms \emph{parents} and \emph{children}, respectively, for the tails of incoming edges and the heads of outgoing edges of a state. Generation begins at a special \emph{initial state} $\s_0$ and terminates upon a transition to any \emph{terminal state}, which is a state with no outgoing actions. The set of terminal states is identified with the target space $\gX$. Note that multiple possible action sequences may lead to the same terminal state.

A \emph{complete trajectory} is a sequence of states $\s_0 \ra \s_1 \ra\dots\ra\s_n$, where each transition $\s_t\ra\s_{t+1}$ is an action in $\gA$ and $\s_n\in\gX$ is terminal. 
A \emph{trajectory flow} is a measure (unnormalized density) on the set of all complete trajectories $\gT$, \ie, a non-negative function $F:\gT\to\R_{\geq0}$, which can be thought of as a number of particles flowing from $\s_0$ to terminal states along each route. 
The flow is called \emph{Markovian} 
if there exist distributions $P_F(\cdot|\s)$ over the children of every non-terminal state $\s$, and a constant $Z$, such that for any complete trajectory $\tau=(\s_0\ra\s_1\ra\dots\ra\dots\ra\s_n)$, we have $P_F(\tau)=F(\tau)/Z$ with
\vspace{-1mm}
\begin{equation}
    P_F(\tau)=P_F(\s_1|\s_0)P_F(\s_2|\s_1)\dots P_F(\s_n|\s_{n-1}).
    \label{eq:markovian_factorization}
\vspace{-1mm}
\end{equation}
In this case, $P_F(\s_{t+1}|\s_t)$ is called a \emph{forward policy}, and can be used to sample complete trajectories from the density $F$ and thus also their terminal states, objects $\x\in\gX$.\footnote{It is helpful to think of the particle analogy. For a Markovian flow, the distribution over complete trajectories ($P_F(\tau)\propto F(\tau)$) satisfies a history-independence property: a particle's choice of route after reaching a state $\s$ is independent of how it reached $\s$. } 
We write $P_T(\x)$ for the probability that a trajectory sampled from $P_F$ terminates in $\x$.

Past work on GFlowNets \citep{bengio2021flow,bengio2021foundations,tbarxiv, Jain2022BiologicalSD} has considered the problem of fitting a Markovian flow to a fixed \emph{reward function} on $\gX$. Given a non-negative reward function $R:\gX\to\R_{\geq0}$, one seeks a Markovian flow $F$ such that the likelihood of a trajectory sampled from $F$ terminating in a given $\x\in\gX$ is proportional to $R(\x)$, \ie, $P_T(\x)\propto R(\x)$. This is accomplished by the reward matching condition:
\vspace{-1mm}
\begin{equation}
    R(\x)=\sum_{\tau=(\s_0\ra\ldots\ra\s_n), \s_n=\x} F(\tau).
    \label{eq:reward_matching}
\vspace{-1mm}
\end{equation}
By Eq.~(\ref{eq:markovian_factorization}), one can specify a Markovian flow by a learned scalar $\log Z_{\vtheta}$ and a neural net with parameters ${\vtheta}$ that outputs $P_F(\cdot|\s;{\vtheta})$ for any input state $\s$. Algorithms for learning $P_F$ from maximum-entropy reinforcement learning \citep{Haarnoja2017ReinforcementLW} would solve this problem if there was only one way to construct an object $\x$ (the DAG is a tree) while GFlowNets are applicable in the more general DAG setting \citep{bengio2021flow}. The solution originally proposed by \citet{bengio2021flow} recasts an \emph{unnormalized} forward policy as a network flow on $G$ (in the classical sense of \citet{ford-fulkerson}) and performs gradient descent on the error in a flow conservation constraint at states sampled from a training policy, amounting to a generalization of temporal difference objectives from \citet{Sutton2005LearningTP}.

Alternative objectives proposed by \citet{bengio2021foundations,tbarxiv} require a model to produce 3 outputs: the scalar $\log Z_{\vtheta}$, a forward policy $P_F(\cdot|\cdot;{\vtheta})$, and a \emph{backward policy} that produces distributions $P_B(\cdot|\s;{\vtheta})$ over the parents of an input state $\s$. (When action sequences are sampled in reverse from $P_B$, we will use dashed arrows $\s'\dra \s$ to indicate that the action $\s\ra\s'$ has been sampled \emph{against} the direction of the DAG edges. 
The $\vtheta$ will be dropped from $P_F$ and $P_B$ notation when it does not cause ambiguity.)

In modeling tasks with a fixed reward \citep{bengio2021flow,tbarxiv}, only the forward policy $P_F$ is used for sampling from the GFlowNet, and the backward policy $P_B$ is simply an training artifact. However, our approach makes use of the backward policy to perform local exploration in a contrastive divergence-like algorithm (\S\ref{sec:joint_training}, Figure~\ref{fig:main_fig}).






\vspace{-.5mm}
\subsection{Energy-based models}
\label{sec:prelim_ebm}
\vspace{-.5mm}

Energy-based models (EBMs) \citep{LeCun2006ATO, Song2021HowTT} are a popular approach for probabilistic inference and modeling.
An EBM specifies a distribution over a space $\gX$ by a density $p_{\vphi}(\x)=\frac{1}{Z_{\vphi}}\exp(-\gE_{\vphi}(\x))$, where $\gE_{\vphi}$ is the energy function, with model parameter $\vphi$, and $Z_{\vphi}$ is a normalizing constant independent of $\x$. The normalizing factor is not explicitly parametrized, but, for finite spaces, can theoretically be computed as $Z_{\vphi}=\sum_{\x\in\gX}\exp(-\gE_{\vphi}(\x))$. This summation is finite, and thus the energy function $\gE_{\vphi}(\x)$ can take an arbitrary form.
However, this flexibility comes with a price:
calculating $Z_{\vphi}$ can involve an exponentially large summation, 
making evaluation of the exact probability intractable. 
To avoid such expensive calculations, it is desirable to amortize sampling from such models by training a generative model.
 


We can train ${\cal E}_{\vphi}$ through maximum likelihood estimation (MLE), \ie\ seeking \ $\argmax_{\vphi}\E_{\x\sim p_{\text{data}}(\x)}\left[\log p_{\vphi}(\x)\right]$, 
where $p_{\text{data}}$ is the empirical distribution associated with the training data.
The gradient of negative log-likelihood (NLL) 
with respect to the model parameter ${\vphi}$  is given by
\vspace{-0.5mm}
\begin{align}
-\nabla_{\vphi} \log p_{\vphi}(\x) & =\nabla_{\vphi} \gE_{\vphi}(\x) + \nabla_{\vphi} \log Z_{\vphi}\nonumber\\
=
\nabla_{\vphi} &\gE_{\vphi}(\x)  - \E_{\x'\sim p_{\vphi}(\x')}[\nabla_{\vphi} \gE_{\vphi}(\x')].\label{eq:ebm_gradient}
\vspace{-0.5mm}
\end{align}
Evaluating or estimating the second term in (\ref{eq:ebm_gradient}) involves taking \textit{negative samples} $\x'$ from the EBM distribution, which can require an expensive search. 
The classical Contrastive Divergence (CD) algorithm \citep{Hinton2002TrainingPO} approximates this gradient update by changing the energy function parameter with the following stochastic approximation:
\vspace{-0.5mm}
\begin{align}
\label{eq:ebm_population_gradient}
\E_{\x\sim p_{\text{data}}(\x)}[\nabla_{\vphi} \gE_{\vphi}(\x) - \E_{\x'\sim q_{K}(\x'|\x)}\nabla_{\vphi} \gE_{\vphi}(\x')] 
\vspace{-0.5mm}
\end{align}
where $q_K(\x'|\x)$ is the distribution obtained by using a $K$-step MCMC initialized at $\x$ to \emph{approximately} sample from $p_{\vphi}$. For example, $\x'$ can be taken from a $K$-step Metropolis-Hastings chain starting at a true data sample $\x\sim p_{\rm data}(\x)$, and the parameters updated with $\nabla_{\vphi}(\gE_{\vphi}(\x)-\gE_{\vphi}(\x'))$.
As $K \to \infty$, assuming mixing of MCMC chains, 
the distribution over negative samples $q_K(\x'|\x)$ converges to $p_{\vphi}(\x')$, and we recover, in expectation, the true gradient (\ref{eq:ebm_gradient}).

\citet{Tieleman2008TrainingRB} later proposed persistent CD (PCD), where the MCMC chains that give negative samples do not restart at true data at every training step, but are initialized with a previous state.
While CD-type algorithms are efficient in computation and can accelerate learning, their gradient estimation is biased and thus may not model the true data distribution faithfully \citep{Nijkamp2020OnTA}.

\vspace{-1.5mm}
\section{Methodology}
\vspace{-1.5mm}

\begin{figure}[t]
    \centering
    \includegraphics[width=0.42\textwidth]{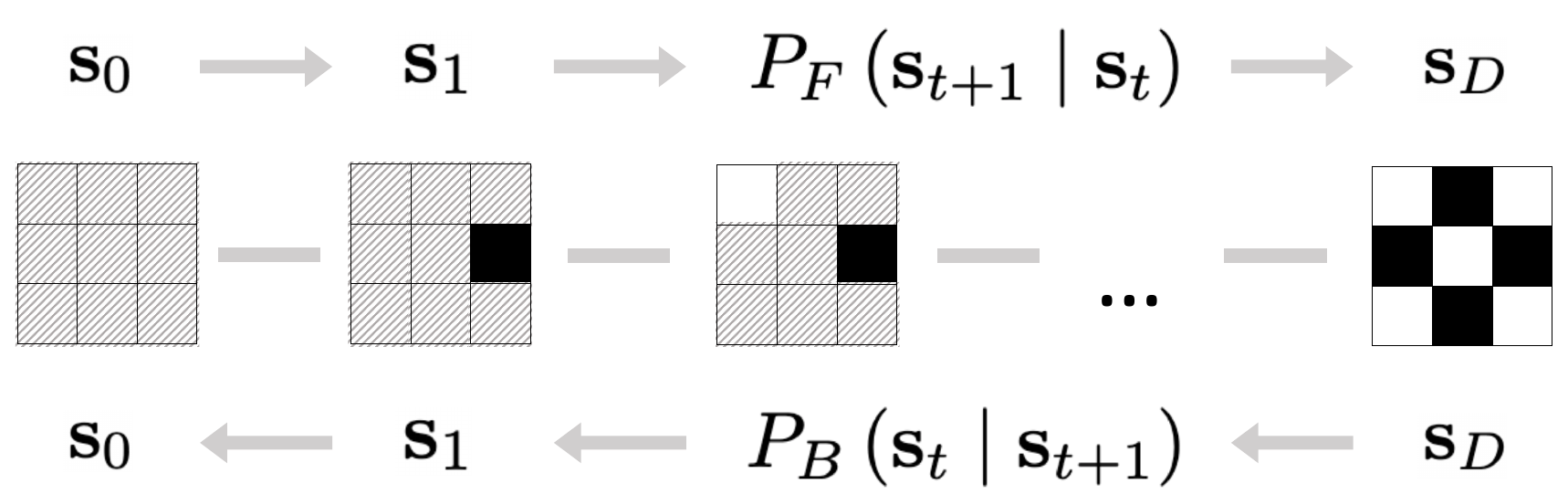}
    \vspace{-0.3cm}
    \caption{
    An illustration of the forward and backward GFlowNet policies in a $9$-dimensional discrete space of the kind studied here.
    The forward policy transforms a state $\s_t$ into $\s_{t+1}$, while the backward policy does the opposite operation.
    We represent $0 / 1$ with black / white patches, and use grey patches to denote unspecified entries $\oslash$ in incomplete (non-terminal) states.
    }
    \label{fig:gfn_pf_pb}
\vspace{-0.4cm}
\end{figure}

\subsection{GFlowNet generative process}
\vspace{-0.5mm}
\label{sec:gfn_setup} 


In this work, we aim to model a target distribution over discrete data with a GFlowNet.
In the domains we consider, the data is in $D$-dimensional binary space, \ie, $\x\in\gX\triangleq\{0, 1\}^D$.
As an example, $\x$ could be an image with $D$ pixels taking binary values.
We model the generation of vectors in $\gX$ by a GFlowNet. The state space $\gS$ of the GFlowNet consists of vectors of length $d$ with entries in $\{0,1,\oslash\}$, where the void symbol $\oslash$ represents a yet unspecified entry that may be turned to 0 or 1 by a future action. 
To be precise:
\vspace{-2mm}
\begin{equation}
\gS \triangleq \{(\s^1, \ldots, \s^D)\mid \s^d\in\{0, 1, \oslash\}, d=1,\ldots, D \}.
\vspace{-0.5mm}
\end{equation}
The DAG structure on $\gS$ is the $D$-th Cartesian power of the DAG with states $\{\oslash,0,1\}$, where 0 and 1 are children of $\oslash$. Concretely, the children of a state $\s=(\s^1,\dots,\s^D)$ are vectors that can be obtained from $\s$ by changing any one entry $\s^d$ from $\oslash$ to 0 or 1, and its parents are states that can be obtained by changing a single entry $\s^d\in\{0,1\}$ to $\oslash$.

We define $|\s| \triangleq \#\{\s^d\mid \s^d\in\{0, 1\}, d=1,\ldots, D\},$
the number of non-void entries in $\s$, so $\gX$ is naturally identified with $\{\s\in\gS:|\s|=D\}$.
There is an initial state 
$\s_0\triangleq (\oslash, \oslash, \ldots, \oslash)$.
Any trajectory from $\s_0$ to $\x\in\gX$ has exactly $D$ actions. A choice of trajectory from $\s_0$ to $\x$ amounts to a choice of the order in which the entries of $\x$ are assigned.

In this setting, the forward policy $P_F(\cdot|\s;{\vtheta})$ of a GFlowNet, introduced in \S\ref{sec:prelim_gfn}, is a distribution over all ways to select a position with a void entry in $\s$ and a value (0 or 1) to assign to this entry. Thus the action space for a state $\s$ has size \mbox{$2(D-|\s|)$}. Correspondingly, the backward policy $P_B(\cdot|\s;{\vtheta})$ is a distribution over the $|\s|$ ways to select a position with a nonvoid entry in $\s$. We illustrate the mechanism of the forward and backward policies in Figure~\ref{fig:gfn_pf_pb}.

In our experiments, we take $P_F(\cdot|\s;\vtheta)$ and $P_B(\cdot|\s;\vtheta)$ to be neural networks with a multilayer perceptron (MLP) architecture, where the input is a vector $\s\in\{\oslash,0,1\}^D$ encoded using a value of $-1$ for $\oslash$, and $P_F$ and $P_B$ share all weights except the final weight matrix that outputs logits for the forward and backward actions. The scalar $Z_{\vtheta}$ is parametrized in the log domain, as suggested by \citet{tbarxiv}. 
Specific implementation details are given in \S\ref{sec:experiments}.

\vspace{-0.5mm}
\subsection{GFlowNet training towards a target distribution}
\vspace{-0.5mm}
\label{sec:gfn_training}

Recall from \S\ref{sec:prelim_gfn} that, given a non-negative reward function $R:\gX\to\R_{\geq0}$, a GFlowNet can be trained so that its terminating probability distribution matches the reward distribution. To be precise, the marginal likelihood that a trajectory sampled from the GFlowNet's forward policy $P_F(\cdot|\cdot;{\vtheta})$ terminates at a given state is proportional to the state's reward, $P_T(\x) \propto R(\x)$. 
We now describe how 
GFlowNets could be trained towards matching a given reward.


\vspace{-0.3cm}
\paragraph{Trajectory balance.} To train the parameters ${\vtheta}$ of the GFlowNet, we use the trajectory balance objective proposed by \citet{tbarxiv}. Trajectory balance optimizes the following objective along complete trajectories $\tau=(\s_0\ra\s_1\ra\dots\ra\dots\ra\s_n)$:
\begin{equation}
    \gL_{\vtheta}(\tau)=\left[\log\frac{ Z_{\vtheta}\prod_{t=0}^{n-1}P_F(\s_{t+1}\mid s_t;\vtheta)}{ R(\s_n)\prod_{t=0}^{n-1}P_B(\s_t\mid \s_{t+1};\vtheta)}\right]^2.
    \label{eq:tb_objective}
\end{equation}
Proposition 1 of \citet{tbarxiv} shows that if this objective is globally minimized (\ie, zeroed out) for all complete trajectories $\tau$, then $P_T(\x)\propto R(\x)$, \ie, the forward policy samples proportionally to the reward. 
Trajectory balance improves training of GFlowNets under various metrics and characteristics of the reward landscape \citep{tbarxiv} relative to previously proposed objectives.

\vspace{-0.3cm}
\paragraph{Training policy.} 
With the trajectory balance objective, we train the GFlowNet with stochastic gradient
\vspace{-1mm}
\begin{equation}
\E_{\tau\sim\pi_{\vtheta}(\tau)}\left[\nabla_{\vtheta}\mathcal{L}_{\vtheta}(\tau)\right]
\label{eqn:tb_stochastic_gradient}
\vspace{-1mm}
\end{equation}
with some training trajectory distribution $\pi_{\vtheta}(\tau)$. Akin to on-policy RL settings, \citet{tbarxiv} took $\pi_{\vtheta}$ to be the distribution over trajectories sampled from the current policy $P_F(\cdot\mid\cdot;\vtheta)$, or a perturbed / tempered version of it. 
That is, $\tau$ is sampled with $\s_{t+1}\sim P_F(\cdot|\s_t;\vtheta)$ starting from $\s_0$, perhaps raised to a power or mixed with a uniform action policy to ensure $\pi_{\vtheta}$ has full support, 
which
is a condition for obtaining the desired distribution~\citep{tbarxiv}.


In addition to this forward sampling approach, we propose a complementary strategy to benefit from the circumstances where we are given some terminating states (data examples $\x\in\gX$). For a terminating state $\x$, one can sample a reverse trajectory $\tau=(\x=\s_D\dra \s_{D-1}\dra\dots\dra \s_0)$, where \mbox{$\s_t \sim P_B(\cdot|\s_{t+1};\vtheta)$}. Empirically, this backward trajectory sampling technique enables us to obtain a different trajectory distribution from the forward sampling distribution, as such backward trajectories visit regions of $\gS$ near the true data samples that may be poorly explored by $P_F$, and could thus stabilize the optimization.
Hereafter, we use $P_F(\tau)$ and $P_B(\tau|\x)$ to denote the trajectory distributions that sample forward from $\s_0$ using $P_F$ and backward from $\x$ using $P_B$, respectively.
In experiments, we take the training trajectory distribution $\pi_{\vtheta}$ to be a mixture of these two sampling methods (see steps 3-9 of Algorithm~\ref{alg:gfn_training}).



\vspace{-0.4cm}
\paragraph{Canonical design of $P_B$.} \citet{bengio2021foundations} noted that while there may be multiple Markovian flows satisfying (\ref{eq:reward_matching}), for any choice of a \emph{fixed} backward policy $P_B$, there is a unique forward policy $P_F$ such that the corresponding $P_T(\x)$ is proportional to the reward. \citet{tbarxiv} suggested fixing $P_B(\cdot|\s)$ to be uniform over the parents of every state $\s$ as a canonical choice. We find this to be beneficial in some domains.
Furthermore, in our setting, this choice also enforces a maximum-entropy property on the forward policy, as the following proposition shows.

\begin{definition*}
The \emph{entropy} of a Markovian flow $F$, denoted $\gH[F]$, is the expected total entropy of its forward policy distributions along a complete trajectory:\vspace{-1mm}
\begin{equation}
\vspace{-1mm}
    \gH[F]=\E_{(\s_0\ra\dots\ra \s_n)\sim P_F(\cdot|\cdot)}\left[\sum_{t=0}^{n-1}\gH[P_F(\cdot|\s_t)]\right].
    \label{eqn:entropy}
\end{equation}
\end{definition*}

\def\uniformprop{
Suppose $G$ is the DAG defined in \S\ref{sec:gfn_setup}. Let $R$ be a nonnegative reward function on $\gX$ and let $P_B^\circ$ be the uniform backward policy on $G$. Let $F^\circ$ be the Markovian flow uniquely determined by $P_B^\circ$ and $R$ subject to the reward matching constraint (\ref{eq:reward_matching}). Then $F^\circ$ has maximal entropy among all Markovian flows satisfying (\ref{eq:reward_matching}).
}
\begin{proposition}
\uniformprop
\label{prop:uniform}
\end{proposition}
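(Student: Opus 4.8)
The plan is to reduce the statement to a term-by-term entropy maximization over the trajectories that reach each terminal state, exploiting the special structure of the Cartesian-power DAG $G$ of \S\ref{sec:gfn_setup}: every $\x\in\gX$ is reached by exactly $D!$ complete trajectories, one for each order in which the $D$ coordinates of $\x$ get assigned.

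First I would rewrite the flow entropy as an ordinary Shannon entropy over complete trajectories. Using the Markovian factorization (\ref{eq:markovian_factorization}) and conditioning on the trajectory prefix at each step, $\E_{\tau\sim P_F}[-\log P_F(\s_{t+1}\mid\s_t)]=\E_{\tau\sim P_F}[\gH[P_F(\cdot\mid\s_t)]]$, so summing over $t$ gives $\gH[F]=\E_{\tau\sim P_F}[-\log P_F(\tau)]=H(P_F)$, the entropy of $P_F$ viewed as a probability distribution on $\gT$ (this is a finite distribution here since every complete trajectory has length exactly $D$). Next I would apply the chain rule of entropy along the deterministic map sending a trajectory $\tau=(\s_0\ra\dots\ra\s_n)$ to its terminal state $\x=\s_n$: since $P_F(\tau)=P_T(\x)\,P_F(\tau\mid\x)$, this yields $\gH[F]=H(P_T)+\E_{\x\sim P_T}[H(P_F(\cdot\mid\x))]$, where $P_F(\cdot\mid\x)$ is the conditional law over the trajectories ending at $\x$.

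The key observation is that the reward matching constraint (\ref{eq:reward_matching}) fixes $P_T(\x)=R(\x)/\sum_{\x'}R(\x')$ for \emph{every} Markovian flow satisfying it, so $H(P_T)$ is the same constant for all admissible flows, and maximizing $\gH[F]$ is equivalent to maximizing $\E_{\x\sim P_T}[H(P_F(\cdot\mid\x))]$. For each fixed $\x$ there are exactly $D!$ complete trajectories from $\s_0$ to $\x$, so $H(P_F(\cdot\mid\x))\le\log D!$, with equality iff $P_F(\cdot\mid\x)$ is uniform. It then remains to check that $F^\circ$ realizes this uniform conditional for every $\x$ with $R(\x)>0$. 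Because $F^\circ$ satisfies (\ref{eq:reward_matching}) with backward policy $P_B^\circ$ — equivalently, the trajectory balance condition (\ref{eq:tb_objective}) holds for $F^\circ$ with $P_B=P_B^\circ$ — we get, for $\tau=(\s_0\ra\dots\ra\s_D=\x)$, $Z\,P_{F^\circ}(\tau)=R(\x)\prod_{t=0}^{D-1}P_B^\circ(\s_t\mid\s_{t+1})$ with $Z$ a constant, and since $P_T(\x)=R(\x)/Z$ this gives $P_{F^\circ}(\tau\mid\x)=\prod_{t=0}^{D-1}P_B^\circ(\s_t\mid\s_{t+1})$. As $\s_{t+1}$ has $t+1$ non-void entries and $P_B^\circ$ is uniform over them, each factor equals $1/(t+1)$, hence $P_{F^\circ}(\tau\mid\x)=1/D!$ independently of $\tau$. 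Therefore $\gH[F^\circ]=H(P_T)+\log D!\ge\gH[F]$ for every Markovian flow $F$ satisfying (\ref{eq:reward_matching}), which is the assertion.

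The steps needing care are mostly bookkeeping: justifying $\gH[F]=H(P_F)$ and the entropy chain rule (routine once one conditions on prefixes; the fixed trajectory length $D$ removes any convergence issue), and noting that terminal states with $R(\x)=0$ contribute $0$ to $\E_{\x\sim P_T}[\cdot]$ and may be ignored. The one genuinely structural ingredient — where the specific DAG of \S\ref{sec:gfn_setup} enters — is the counting fact that every terminal state admits precisely $D!$ trajectories and that the uniform backward policy induces the uniform distribution over them; everything else is a generic consequence of the GFlowNet framework.
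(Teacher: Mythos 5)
Your proof is correct, and it reaches the same bound $\log D!$ as the paper, but by a different decomposition. The paper works entirely with state and edge flows: it establishes the identity $\gH[F]=\E_{\tau\sim P_F}\left[\sum_t\gH[P_B(\cdot|\s_t)]\right]+\Delta$ (forward entropy equals expected backward-policy entropy plus a flow-independent constant), then maximizes the backward entropy pointwise at each state, using the fact that a state with $d$ non-void entries has exactly $d$ parents, so each trajectory contributes at most $\sum_{d=1}^D\log d=\log D!$. You instead use the information-theoretic chain rule through the terminal state, $\gH[F]=H(P_T)+\E_{\x\sim P_T}[H(P_F(\cdot\mid\x))]$, bound each conditional term by $\log D!$ via the count of $D!$ trajectories per terminal state, and verify that the uniform $P_B$ induces the uniform conditional. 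The two decompositions are in fact the same identity in disguise: the paper's constant $\Delta=\frac1Z\left(Z\log Z-\sum_{\x}R(\x)\log R(\x)\right)$ is exactly $H(P_T)$, and for a Markovian flow the conditional law $P_F(\cdot\mid\x)$ over trajectories ending at $\x$ coincides with $P_B(\tau\mid\x)$, so the paper's expected backward entropy equals your expected conditional entropy. What your route buys is brevity and a cleaner interpretation of the maximizer (the flow whose trajectories are uniformly distributed over the $D!$ coordinate orderings given the terminal state); what the paper's route buys is an intermediate identity valid for an arbitrary DAG, with the special structure of $G$ entering only in the last line. The one step in your argument that must not be skipped is the verification that $F^\circ$ attains the per-$\x$ bound simultaneously for all $\x$ — a priori the conditionals for different terminal states are coupled through the shared Markovian policy — and you do carry it out correctly via $P_{F^\circ}(\tau\mid\x)=\prod_t P_B^\circ(\s_t\mid\s_{t+1})=1/D!$.
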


\vspace{-0.4cm}
\paragraph{Estimating GFlowNet data likelihood.}
The most commonly used metric in probabilistic modeling is the model's likelihood on a test set.  
A well-trained model should assign a high likelihood to data from the same underlying distribution as the training data, that is,
the terminating probability distribution 
$P_T(\x) = \sum_{\tau=(\s_0\ra\ldots\ra\s_D), \s_D=\x} P_F(\tau) $ 
would be close to the true data distribution. 
We overcome the intractability of the sum defining $P_T(\x)$ (the number of terms is factorial in $D$) 
by importance sampling:
\vspace{-1mm}
\begin{equation}
P_T(\x) = \E_{ P_B(\tau\mid\x)}\frac{P_F(\tau)}{P_B(\tau|\x)}
\approx \frac1M\sum_{
j=1
}^M\frac{P_F(\tau^j)}{P_B(\tau^j|\x)},
\label{eqn:ll_estimate}
\vspace{-1mm}
\end{equation}
where $\tau^j\sim P_B(\tau|\x)$ are trajectories sampled backward from $\x$ using $P_B$. 
We can then use the average GFlowNet log likelihood on test set, estimated using (\ref{eqn:ll_estimate}) with $M$ large enough, as an evaluation metric.


\begin{algorithm}[t]
\begin{algorithmic}[1]
\INPUT Training dataset $\{\x_i\}_i$, hyperparameter $\alpha\in[0, 1]$
\STATE Initialize GFlowNet's $P_F,P_B,Z$ with parameters $\vtheta$, and the energy function $\gE_{\vphi}$ with parameters $\vphi$.
\REPEAT
\STATE $X\sim\rm{Bernoulli}(\alpha)$
\IF {$X=1$}{}
\STATE Sample forward trajectory $\tau\sim P_F(\tau)$.
\ELSE 
\STATE Uniformly sample $\x_i$ from dataset.
\STATE Sample backward trajectory $\tau\sim P_B(\tau|\x_i)$.
\ENDIF
\STATE Update the GFlowNet via gradient step on $\gL_{\vtheta}(\tau)$ with reward $R(\x)=e^{-\gE(\x;\vphi)}$ (Eq.~(\ref{eq:tb_objective})).
\STATE Update 
energy function 
with Algorithm~\ref{alg:ebm_training}.
\UNTIL 
some convergence condition
\end{algorithmic}
\caption{EB-GFN joint training framework}
\label{alg:gfn_training}
\vspace{-0.5mm}
\end{algorithm}

\vspace{-0.5mm}
\subsection{Interleaved updates of GFlowNet and energy}
\vspace{-0.5mm}
\label{sec:joint_training}

The training of GFlowNets relies on a given function $R(\x)$ to provide reward signals.
In generative modeling, we typically set $R(\x)$ to be the unnormalized target probability.
However, in many settings, we do not have access to this exact quantity, but only to a collection of data samples from a target distribution.

To address this issue, we propose to introduce an energy-based model $\gE_{\vphi}(\x)$ as an intermediate object between the data and the GFlowNet, to serve as the reward function with which the GFlowNet can be trained. The ``final products'' of training are then twofold: the trained energy model $\gE_{\vphi}(\x)$ and the GFlowNet sampling policy $P_F$.




We train the EBM associated with the GFlowNet in an approximate MLE manner similar to Eq.~(\ref{eq:ebm_population_gradient}), but using the GFlowNet 
policy
to generate negative examples, $\x'\sim P_T(\x')$. In the basic form of the energy function training procedure,
updates to $\vphi$ are made proportionally to 
\vspace{-1mm}
\begin{equation}
\E_{\x\sim p_{\text{data}}(\x)}\nabla_{\vphi} \gE_{\vphi}(\x) - \E_{\x'\sim P_{T}(\x')}\nabla_{\vphi} \gE_{\vphi}(\x'),
\label{eq:basic_ebm_gfn_update}
\vspace{-1mm}
\end{equation}
where the GFlowNet terminating probability distribution $P_T(\x)$ corresponds to marginalizing the
forward trajectory distribution $P_F(\tau)$ on its non-terminating states.
If the GFlowNet is perfectly trained (with zero training loss), its terminating probability distribution will be equal to the energy distribution, and thus this approximate MLE objective is an unbiased estimate of the maximum likelihood training of EBM.
In this way, we amortize the MCMC sampling computation into the GFlowNet training process.

\vspace{-0.3cm}
\paragraph{GFlowNet as an MCMC transition kernel.}
In contrastive divergence training of EBMs (\ref{eq:ebm_population_gradient}), $K$-step MCMC is used to generate the negative samples.
For each step of traditional MCMC methods, one first proposes a local random perturbation of the current sample $\x$ (seen as a state of the Markov chain) to some nearby point $\x'$, and then decides whether to accept this transition\footnote{This should be distinguished from the `transition' (action) in GFlowNets, which happens between two states $\s_t$ and $\s_{t+1}$ in $\gS$.} according to the Metropolis-Hastings (MH) rejection rule \citep{metropolis-hastings}. 



Here, we show that a GFlowNet can also be used for generating a proposal $\x'$ from a given point $\x$ (Fig.~\ref{fig:main_fig}). 
Given $\x$ and a fixed number of steps $K$ ($1\leq K\leq D$), we first sample a $K$-step trajectory with the backward policy $P_B$:
\vspace{-1mm}
\[
\tau=
(\x=\s_D\dra \s_{D-1}\dra \ldots\dra \s_{D-K})
, \s_{t}\sim P_B(\s_t|\s_{t+1}),
\vspace{-1mm}
\]
and then sample with the forward policy $P_F$, starting at $\s_{D-K}$ until a new terminal state $\x'$ is reached:
\vspace{-1mm}
\[
\tau'=(\s'_{D-K}\ra\ldots\ra \s'_D=\x'), \quad\s'_{t+1}\sim P_F(\s'_{t+1}|\s'_t),
\vspace{-1mm}
\]
where we have $\s'_{D-K}=\s_{D-K}$. For convenience, we denote this back-and-forth trajectory by $(\tau, \tau')$ and the reverse trajectory by
\vspace{-1mm}
\[
(\tau'_{-}, \tau_{-}) = (\s'_{D}\dra\ldots\dra \s'_{D-K} = \s_{D-K} \ra \ldots\ra \s_D).
\vspace{-1mm}
\]
Similar to $K$-step MCMC proposals, this GFlowNet proposal only changes the values of at most $K$ different entries.

\begin{algorithm}[t]
\begin{algorithmic}[1]
\INPUT Training dataset $\{\x_i\}_i$, GFlowNet providing $\{P_F,P_B,Z\}$, energy function $\gE_{\vphi}$, horizon $K$.
\STATE Uniformly sample $\x$ from dataset.
\STATE Sample a $K$-step backward trajectory from $P_B(\cdot|\cdot;\vtheta)$: \mbox{$\tau=(\x=\s_D\dra\s_{D-1}\dra\dots\dra\s_{D-K})$}.
\STATE Sample a $K$-step forward trajectory from $P_F(\cdot|\cdot;\vtheta)$: \mbox{$\tau'=(\s_{D-K}\ra\s_{D-K+1}'\ra\dots\ra\s_D'=\x')$}.
\STATE Accept or reject $\x'$ via Eq. (\ref{eq:mh_prob}); set $\x'\,\la\,\x$ if reject.
\STATE Update $\vphi$ with gradient of $\gE_{\vphi}(\x)-\gE_{\vphi}(\x')$.
\end{algorithmic}
\caption{GFlowNet-guided energy function update}
\label{alg:ebm_training}
\vspace{-0.5mm}
\end{algorithm}

With $\tau,\tau'$ as above, we extend the definitions 
for complete trajectories (Eq.~\ref{eq:markovian_factorization}) to
$P_F(\tau)=\prod_{t=D-K}^{D-1}P_F(\s_{t+1}|\s_t)$ and $P_B(\tau|\x) = \prod_{t=D-K}^{D-1}P_B(\s_t|\s_{t+1})$. The probability of a transition from $\x$ to $\x'$ along the back-and-forth trajectory $(\tau, \tau')$ is $P_B(\tau|\x)P_F(\tau')$. Similarly, the probability of going $\x'$ to $\x$ along the reverse trajectory $(\tau_{-}', \tau_{-})$ is $P_B(\tau'|\x')P_F(\tau)$. 
With the MH rule, if the move from $\x$ to $\x'$ is accepted with probability
\vspace{-2mm}
\begin{align}
\hspace{-4mm}
A_{\tau,\tau'}(\x\to\x')\triangleq\min\left[1, \frac{e^{-\gE_{\vphi}(\x')}}{e^{-\gE_{\vphi}(\x)}}\frac{P_B(\tau|\x)P_F(\tau')}{P_B(\tau'|\x')P_F(\tau)}\right]\hspace{-1mm},
\label{eq:mh_prob}
\vspace{-3mm}
\end{align}
then the stationary distribution of an iterated application of such steps is equal to the desired reward distribution.

The following proposition shows that with a perfectly trained GFlowNet, we can cheaply obtain a high-dimensional form of Gibbs sampling, where $K$ variables are updated at each step\footnote{Vanilla block Gibbs sampling would require computation exponential in $K$ in order to compute $2^K$ possible $K$-bit changes, with their energies and their normalizing constant.}:
\def\mhstepprop{If a GFlowNet fits the reward perfectly, \ie, satisfies (\ref{eq:reward_matching}), then $A_{\tau,\tau'}(\x\to\x')=1$, so the MH rejection step will always accept the proposal.}
\begin{proposition}
\label{prop:mh_step_noneffective}
\mhstepprop
\end{proposition}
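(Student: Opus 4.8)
The plan is to directly compute the Metropolis--Hastings acceptance ratio in Eq.~(\ref{eq:mh_prob}) and show every factor cancels. First I would recall what ``fits the reward perfectly'' means in this setting: by the reward matching condition (\ref{eq:reward_matching}) together with the trajectory balance characterization, a perfectly trained GFlowNet satisfies the \emph{trajectory balance} identity
\[
Z\,P_F(\tau) = R(\s_n)\,P_B(\tau\mid\s_n)
\]
for every complete trajectory $\tau=(\s_0\ra\dots\ra\s_n)$, where $R(\x)=e^{-\gE_{\vphi}(\x)}$. The first step is to upgrade this to a statement about the \emph{partial} forward/backward trajectories $\tau,\tau'$ appearing in the proposal, which only cover the last $K$ steps $\s_{D-K}\ra\dots\ra\s_D$. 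Writing $\sigma$ for a full completion $\s_0\ra\dots\ra\s_{D-K}$ of the shared intermediate state $\s_{D-K}$, trajectory balance for the concatenation $\sigma\tau$ and for $\sigma\tau'$ (terminating at $\x$ and $\x'$ respectively) lets one divide out the common prefix $\sigma$ and obtain a ``partial trajectory balance'' relation linking $P_F(\tau),P_B(\tau\mid\x),e^{-\gE_{\vphi}(\x)}$ and the marginal flow $F(\s_{D-K})$ through the shared state.

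The key step is then the cancellation. From partial trajectory balance applied once ending at $\x$ and once ending at $\x'$ (both passing through $\s_{D-K}=\s'_{D-K}$), I expect
\[
\frac{P_F(\tau')}{P_B(\tau'\mid\x')} = \frac{e^{-\gE_{\vphi}(\x')}}{F(\s_{D-K})},
\qquad
\frac{P_F(\tau)}{P_B(\tau\mid\x)} = \frac{e^{-\gE_{\vphi}(\x)}}{F(\s_{D-K})},
\]
so that
\[
\frac{e^{-\gE_{\vphi}(\x')}}{e^{-\gE_{\vphi}(\x)}}\cdot\frac{P_B(\tau\mid\x)\,P_F(\tau')}{P_B(\tau'\mid\x')\,P_F(\tau)}
= \frac{e^{-\gE_{\vphi}(\x')}}{e^{-\gE_{\vphi}(\x)}}\cdot\frac{e^{-\gE_{\vphi}(\x)}}{e^{-\gE_{\vphi}(\x')}} = 1,
\]
and hence $A_{\tau,\tau'}(\x\to\x')=\min[1,1]=1$. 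Plugging this into the definition of $A_{\tau,\tau'}$ finishes the argument; the stationarity remark then follows from the standard MH construction already invoked in the text.

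The main obstacle I anticipate is the first step: making the passage from the global trajectory balance identity (which involves complete trajectories from $\s_0$) to a clean identity about the partial trajectories $\tau,\tau'$ rigorous, in particular identifying the correct ``intermediate'' normalizer and verifying it is genuinely the same quantity $F(\s_{D-K})$ on both the $\x$ side and the $\x'$ side — it is, precisely because $\s_{D-K}=\s'_{D-K}$ is shared by construction, but this needs to be stated carefully. Everything after that is bookkeeping. An alternative, perhaps cleaner, route avoids introducing $F(\s_{D-K})$ explicitly: pick \emph{any} fixed completion $\sigma$ of $\s_{D-K}$, apply global trajectory balance to $\sigma\tau$ and to $\sigma\tau'$, and take the ratio of the two equations; the $Z$ and the $P_F(\sigma)$, $P_B(\sigma\mid\cdot)$ factors for the prefix cancel immediately since $\sigma$ is common, leaving exactly the identity needed for the MH ratio. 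I would likely present this version, as it sidesteps any discussion of marginal state flows.
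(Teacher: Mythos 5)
Your overall strategy is the right one and is essentially the paper's: establish a detailed-balance identity for the back-and-forth pair $(\tau,\tau')$ through the shared state $\s_{D-K}$, then cancel it against the energy ratio. Your two partial-balance identities are correct, and your derivation of them is actually more self-contained than the paper's proof, which simply cites Eq.~(21) of the trajectory balance paper for the equivalent statement $R(\x)P_B(\tau|\x)P_F(\tau')=R(\x')P_B(\tau'|\x')P_F(\tau)$; your ``cancel a common prefix $\sigma$'' variant is a clean way to get there without invoking the marginal state flow at all.

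However, your final cancellation is inconsistent with your own identities. From
$P_F(\tau)/P_B(\tau|\x)=e^{-\gE_{\vphi}(\x)}/F(\s_{D-K})$ and $P_F(\tau')/P_B(\tau'|\x')=e^{-\gE_{\vphi}(\x')}/F(\s_{D-K})$ one gets
\[
\frac{P_B(\tau|\x)\,P_F(\tau')}{P_B(\tau'|\x')\,P_F(\tau)}=\frac{e^{-\gE_{\vphi}(\x')}}{e^{-\gE_{\vphi}(\x)}},
\]
not its reciprocal as your middle step asserts; substituting the correct value into the ratio as literally written in Eq.~(\ref{eq:mh_prob}) yields $e^{2(\gE_{\vphi}(\x)-\gE_{\vphi}(\x'))}$ rather than $1$. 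The source of the discrepancy is the orientation of the proposal densities in Eq.~(\ref{eq:mh_prob}): with $q(\x'|\x)=P_B(\tau|\x)P_F(\tau')$, the standard Metropolis--Hastings ratio is $\frac{e^{-\gE_{\vphi}(\x')}}{e^{-\gE_{\vphi}(\x)}}\cdot\frac{P_B(\tau'|\x')P_F(\tau)}{P_B(\tau|\x)P_F(\tau')}$, i.e., the reverse-proposal probability belongs in the numerator, and with that orientation your identities give exactly $1$. So the proposition is true and your identities do prove it, but as written your last display forces the cancellation by silently inverting one of your own equations; you should either flip the fraction in the acceptance rule or explicitly note the orientation issue in Eq.~(\ref{eq:mh_prob}) before concluding.
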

\vspace{-2mm}
The proof is in \S\ref{app:proofs}. 
Notice that if $K=D$, the proposed transition is equivalent to directly sampling from $\s_{D-K}=\s_0$ with $P_F$, which is independent of the backward policy and of the starting sample $\x$.

As a relaxation of (\ref{eq:basic_ebm_gfn_update}) that can accelerate learning, we propose to generate negative samples using this back-and-forth GFlowNet transition proposal. 
Unlike MCMC methods, we do not iterate this kernel, but perform only one single
step to generate negative samples $\x'$ for each EBM parameter update, similarly to one-step contrastive divergence. 
As with MCMC-based contrastive divergence, it may be beneficial to begin with a small $K$ and gradually increase it over the course of training. In all of our experiments, we either use a constant $K=D$ (unconditional samples from the GFlowNet are used as negative examples, as in Eq.~(\ref{eq:basic_ebm_gfn_update})) or gradually increase $K$ from 1 to $D$. We summarize our use of the GFlowNet proposal in EBM training in Algorithm~\ref{alg:ebm_training}.

\vspace{-3mm}
\paragraph{Summary.} 

We propose a joint training framework (Algorithm~\ref{alg:gfn_training}), 
where the EBM and the GFlowNet are optimized alternately:
the energy function serves as the (negative log-) reward function for the GFlowNet, which is trained with the trajectory balance objective to sample from the evolving energy model, while the energy function is trained with an approximate MLE gradient, where the GFlowNet provides negative samples $\x'$ through an MCMC transition proposal that approximates 
block
Gibbs sampling. 


\begin{table}[t]
\vspace{-4mm}
    \centering
        \caption{Mean negative log-RMSE (higher is better) between data-generating matrix $J$ and learned matrix $J_{\vphi}$ for different values of $\sigma$. We find standard deviation $<0.1$ between runs for each setting.}
    \label{tab:ising_results}
    \resizebox{\linewidth}{!}{
   \begin{tabular}{lccccccc}
\toprule
 & \multicolumn{5}{c}{$D=10^2$} & \multicolumn{2}{c}{$D=9^2$} \\
\cmidrule(lr){2-6}\cmidrule(lr){7-8}
 Method $\backslash$ $\sigma$ & $0.1$ & $0.2$ & $0.3$ & $0.4$ & $0.5$ & $-0.1$ & $-0.2$  \\ 
 \midrule
Gibbs & $4.8$ & $4.7$ & $\bf 3.4$ & $\bf 2.6$ & $\bf 2.3$ & $4.8$ & $4.7$ \\
GWG & $4.8$ & $4.7$ & $\bf 3.4$ & $\bf 2.6$ & $\bf 2.3$ & $4.8$ & $4.7$ \\
EB-GFN & $\bf 6.1$ & $\bf 5.1$ & $3.3$ & $\bf 2.6$ & $\bf 2.3$  & $\bf 5.7$ & $\bf 5.1$ \\
\bottomrule
    \end{tabular}
    }
\vspace{-5mm}
\end{table}

\vspace{-2mm}
\section{Experiments}
\vspace{-1mm}
\label{sec:experiments}

\subsection{Ising models}
\vspace{-0.5mm}

\begin{figure}[t]
    \centering
    \includegraphics[width=0.49\linewidth,trim=0 200 0 8,clip]{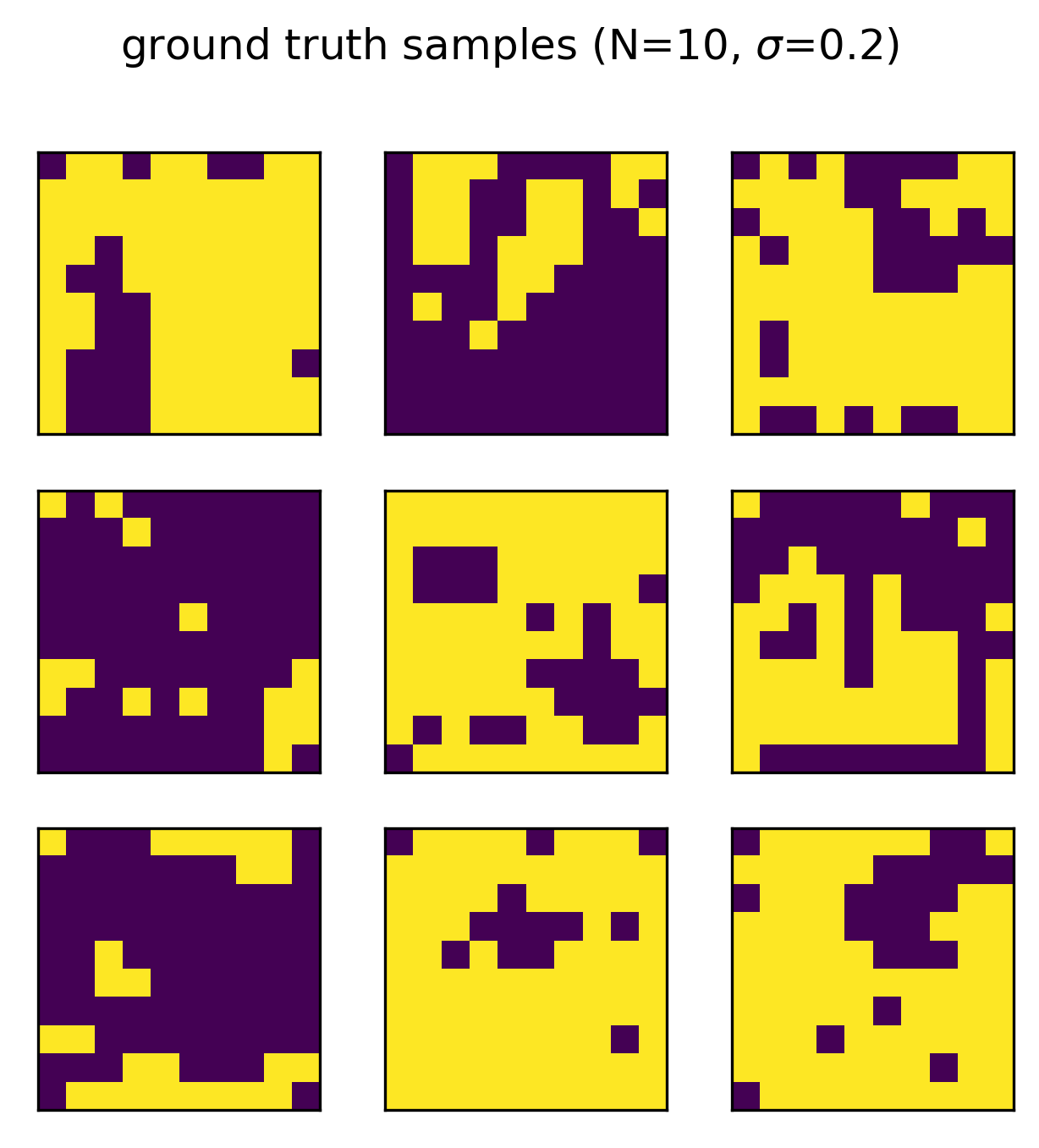}
    \hfill
    \includegraphics[width=0.49\linewidth,trim=0 200 0 8,clip]{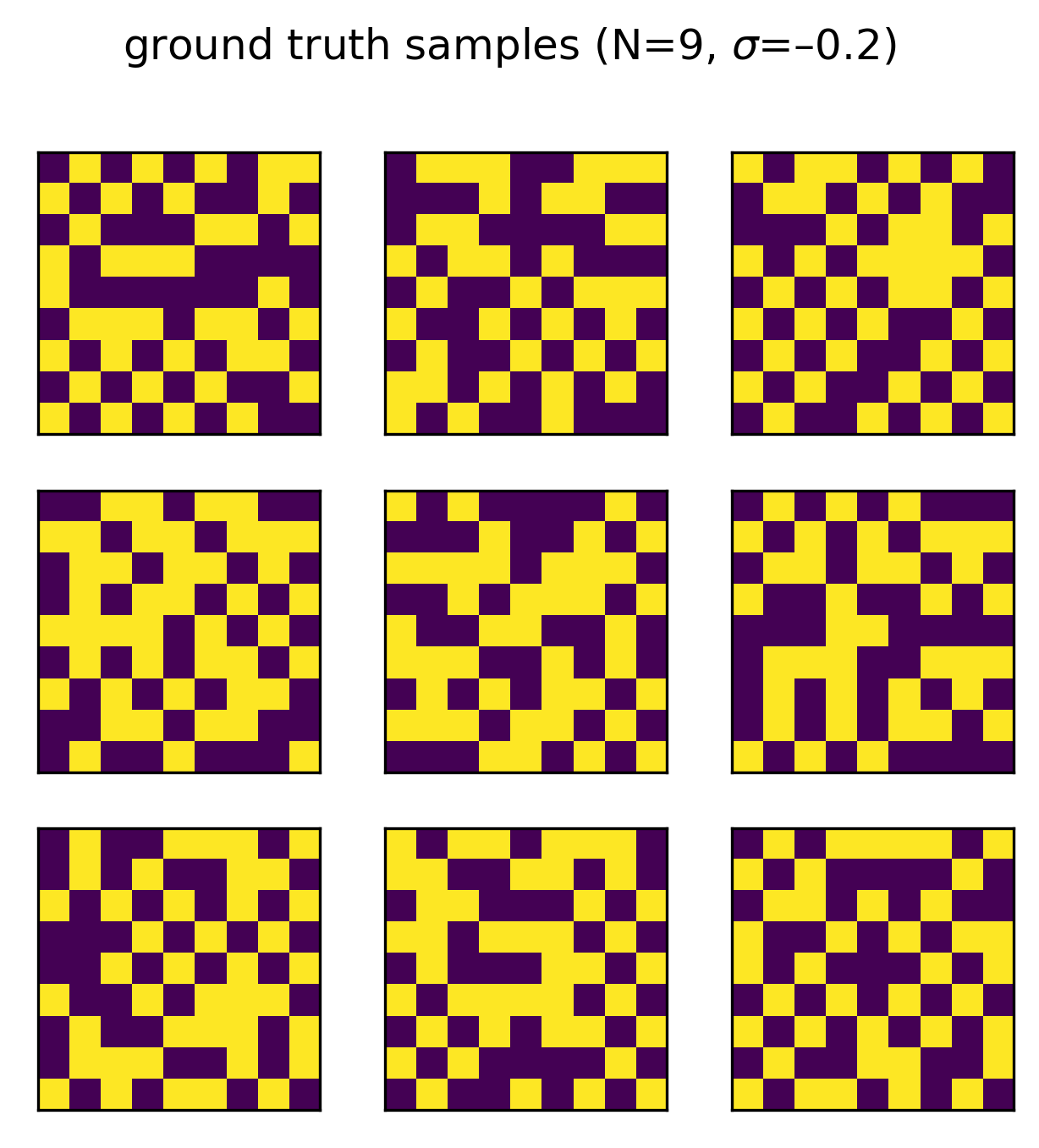}
    \\[0.2cm]
    \includegraphics[width=0.49\linewidth,trim=0 200 0 0,clip]{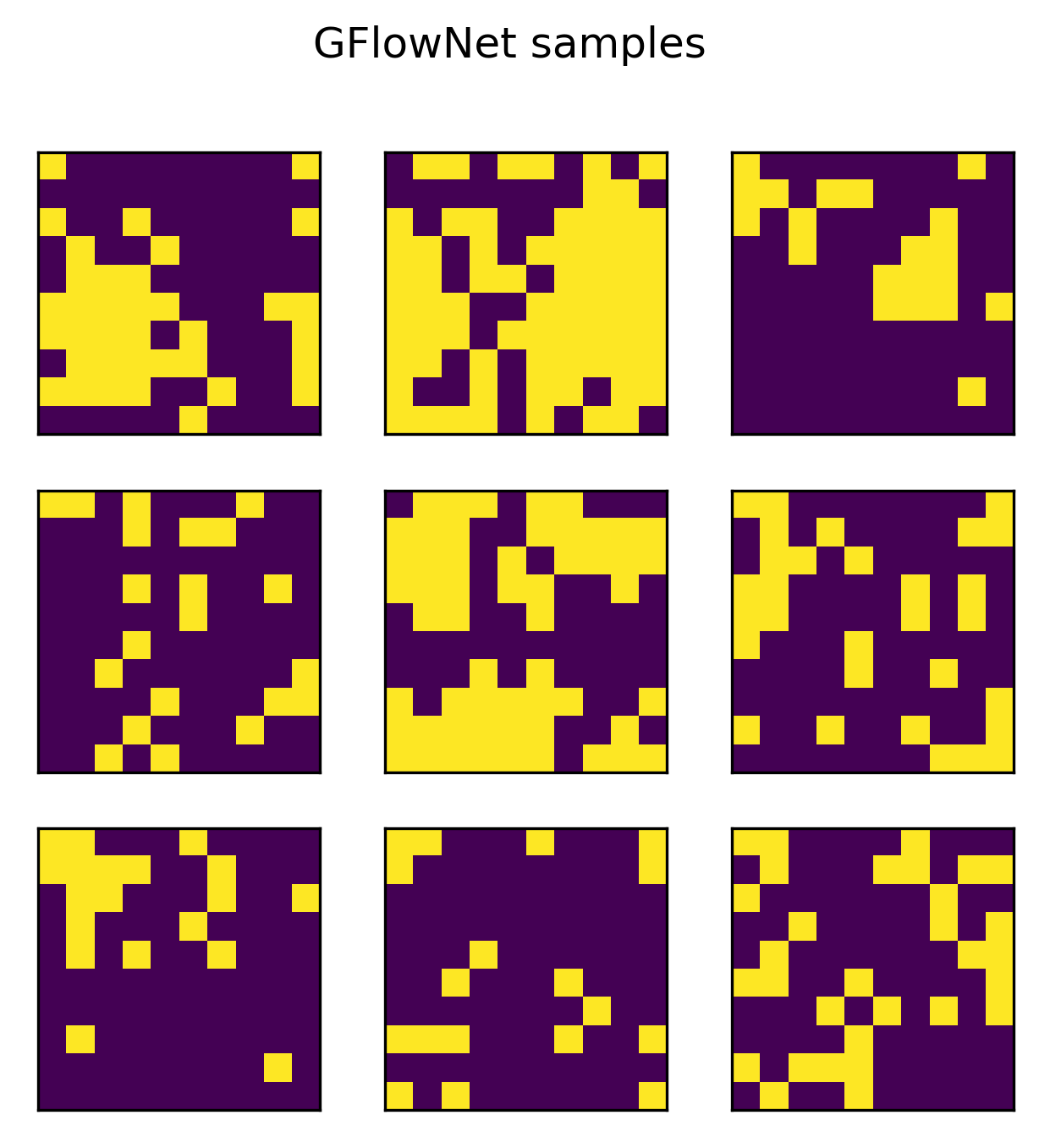}
    \hfill
    \includegraphics[width=0.49\linewidth,trim=0 200 0 0,clip]{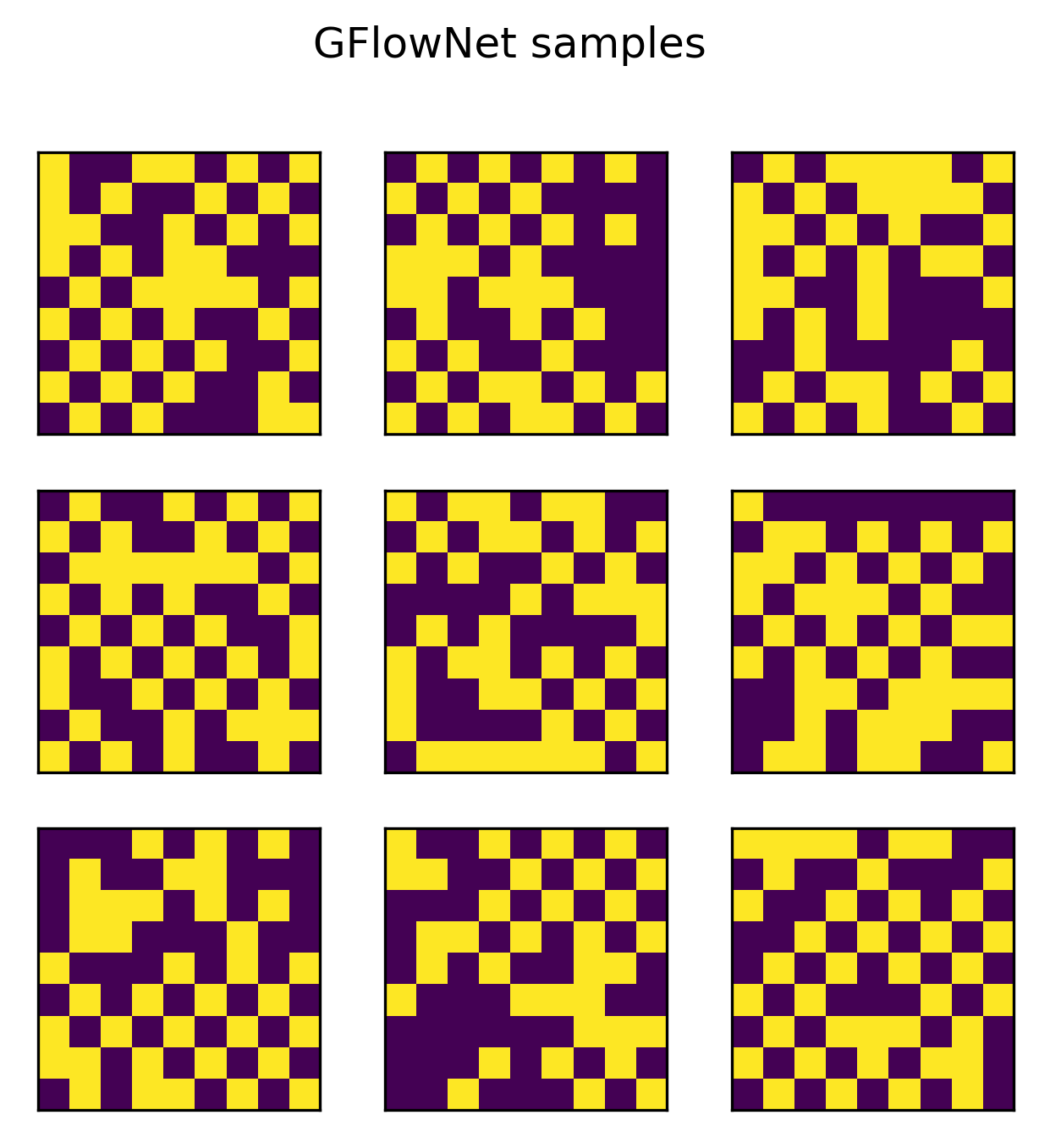}
    \\
    \includegraphics[width=0.49\linewidth,trim=20 40 20 20,clip]{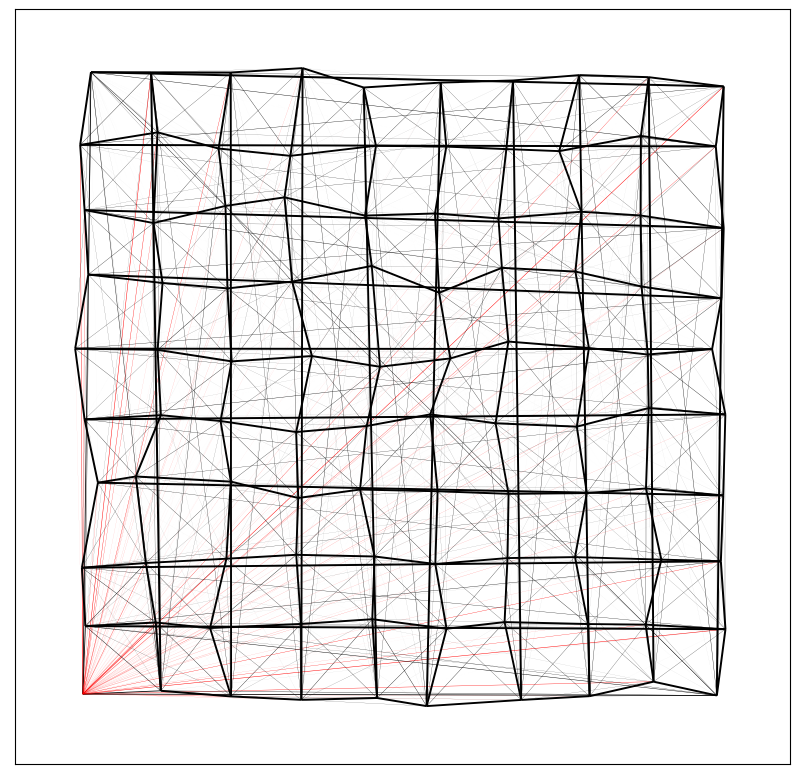}
    \hfill
    \includegraphics[width=0.49\linewidth,trim=20 40 20 20,clip]{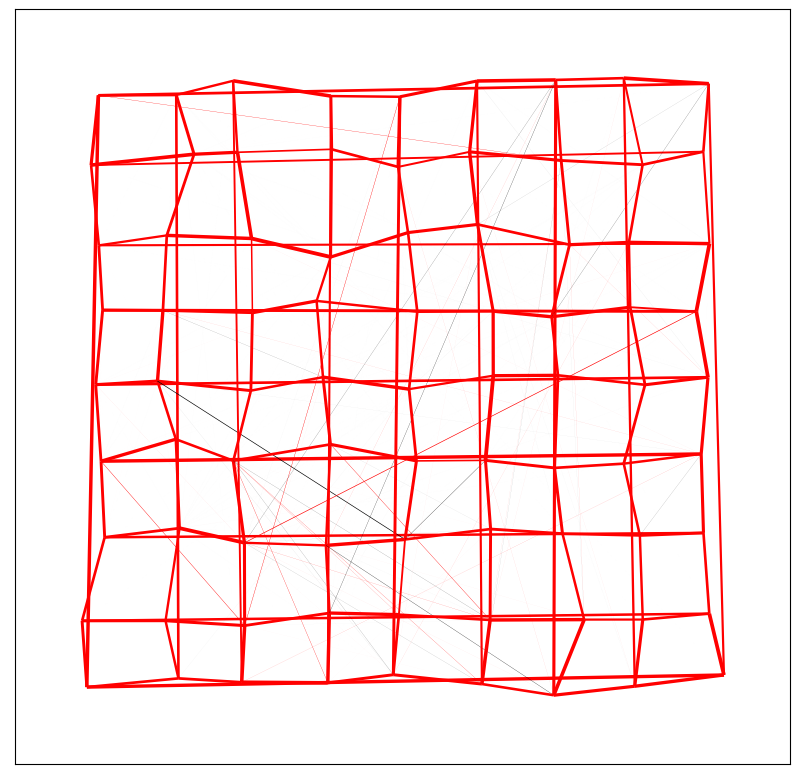}
    \\
    \vspace{-4mm}
    \caption{
    \textit{Top:} Samples from an Ising model and samples from a GFlowNet trained jointly with an energy function on 2000 such samples. The generated samples display similar characteristics to the ground truth samples: contiguous regions with the same spin for $\sigma>0$ and checkerboard texture for $\sigma<0$.
    \textit{Bottom:} A visualization of the learned matrix $J_{\vphi}$ as a graph, where the thickness of the edge between nodes $i$ and $j$ is proportional to $(J_{\vphi})_{ij}$ and red/black edges represent negative/positive entries. Despite EB-GFN having no a priori knowledge of the grid structure, it almost perfectly recovers the full $D\times D$ matrix -- 4950 (left) or 4851 (right) degrees of freedom -- from 2000 discrete samples.
    }
    \label{fig:ising_vis}
    \vspace{-6mm}
\end{figure}

\newcommand\mpwid{0.16}
\newcommand\hinterval{0.5cm}
\newcommand\samplempwid{0.146}
\newcommand\samplehinterval{0.23cm}
\newcommand\samplefigwid{\textwidth}
\begin{figure*}[t]
\centering
\begin{minipage}{\textwidth}
    \begin{minipage}[t]{\samplempwid\textwidth}
    \centering
    \small{2spirals}\\
    \includegraphics[width=\samplefigwid,
    trim=35 35 10 10,clip
    ]{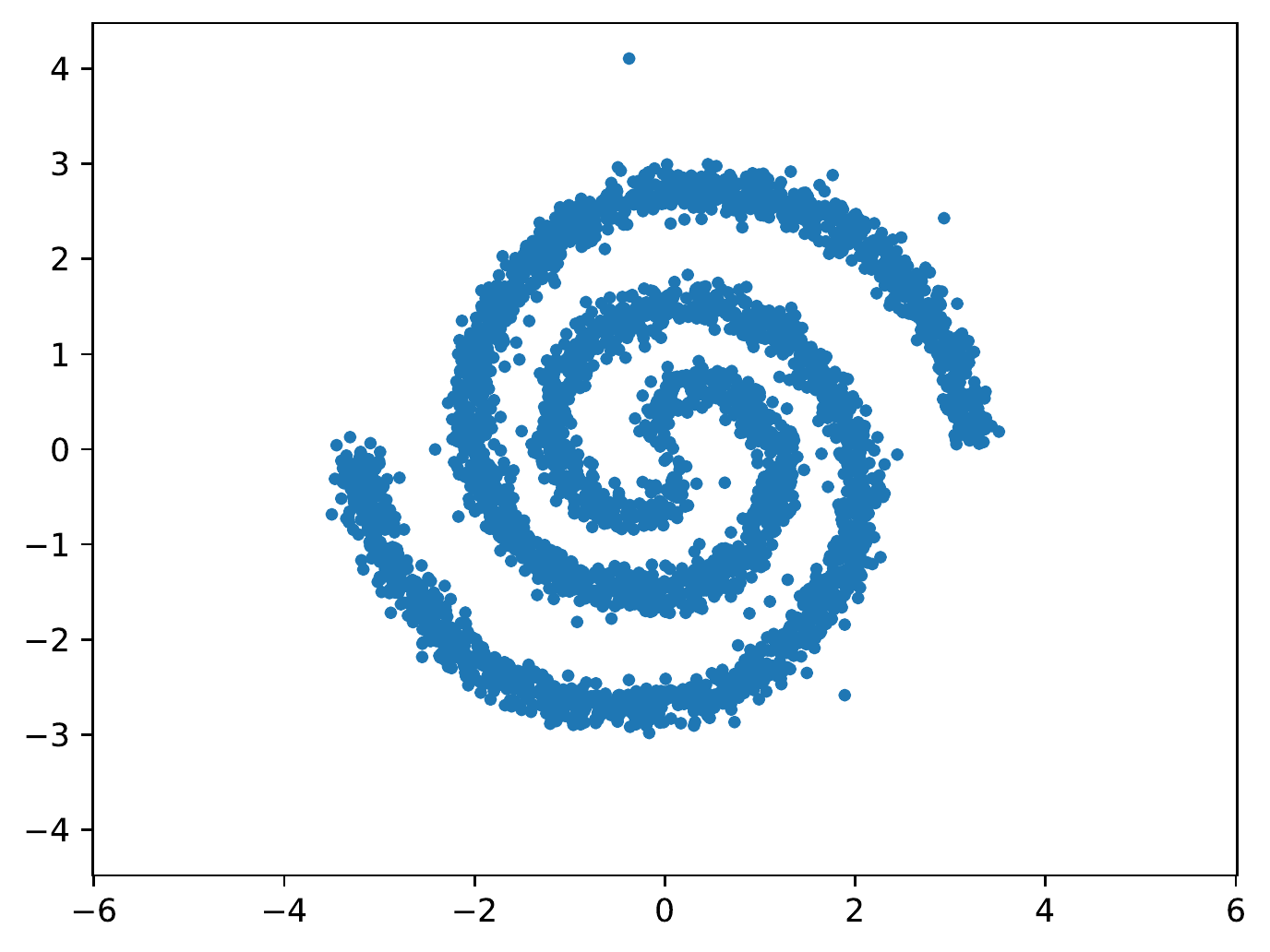}
    \end{minipage}
\hspace{-\samplehinterval}
    \begin{minipage}[t]{\samplempwid\textwidth}
    \centering
    \small{8gaussians}\\
    \includegraphics[width=\samplefigwid
    ,trim=35 35 10 10,clip
    ]{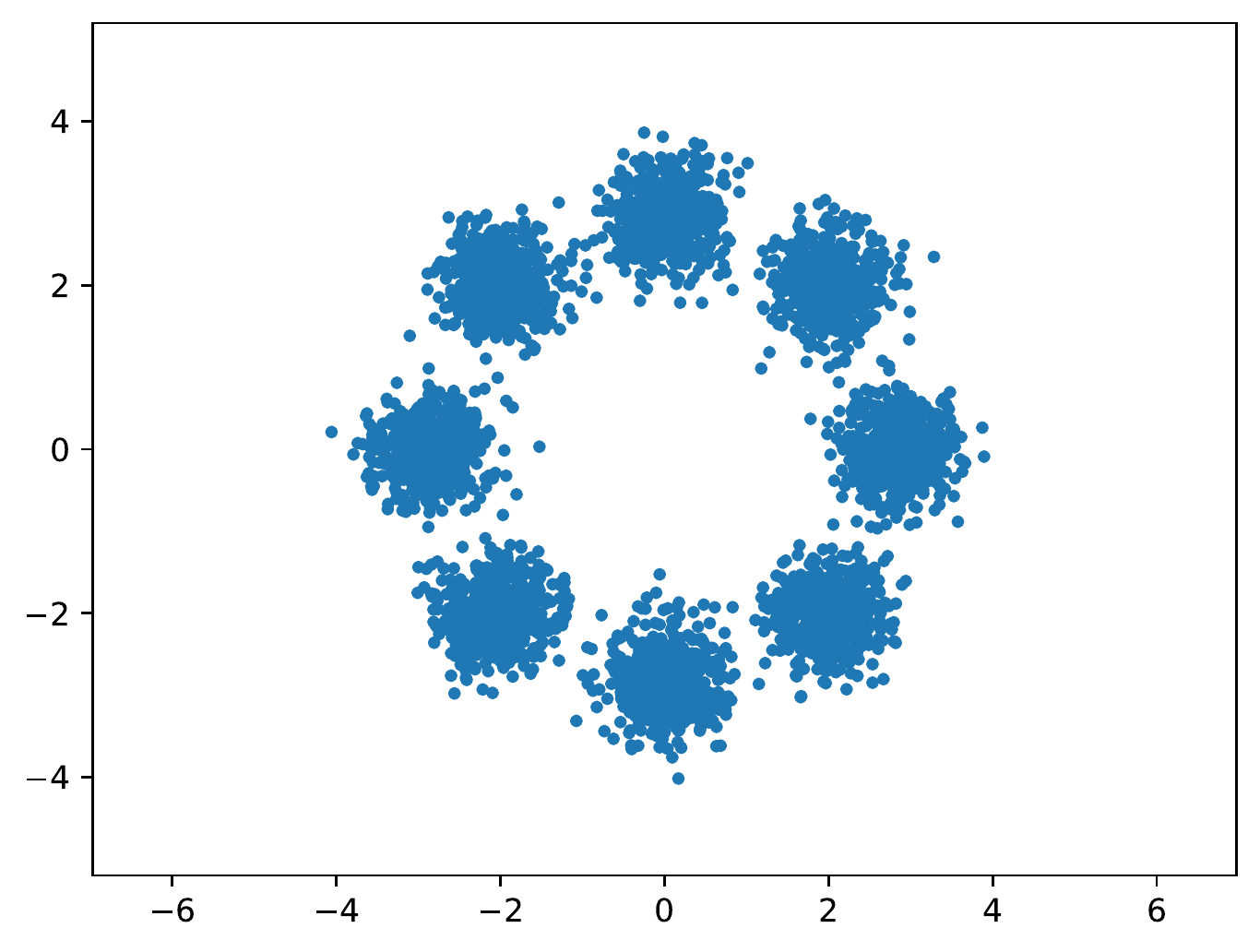}
    \end{minipage}
\hspace{-\samplehinterval}
    \begin{minipage}[t]{\samplempwid\textwidth}
    \centering
    \small{circles}\\
    \includegraphics[width=\samplefigwid
    ,trim=35 35 10 10,clip
    ]{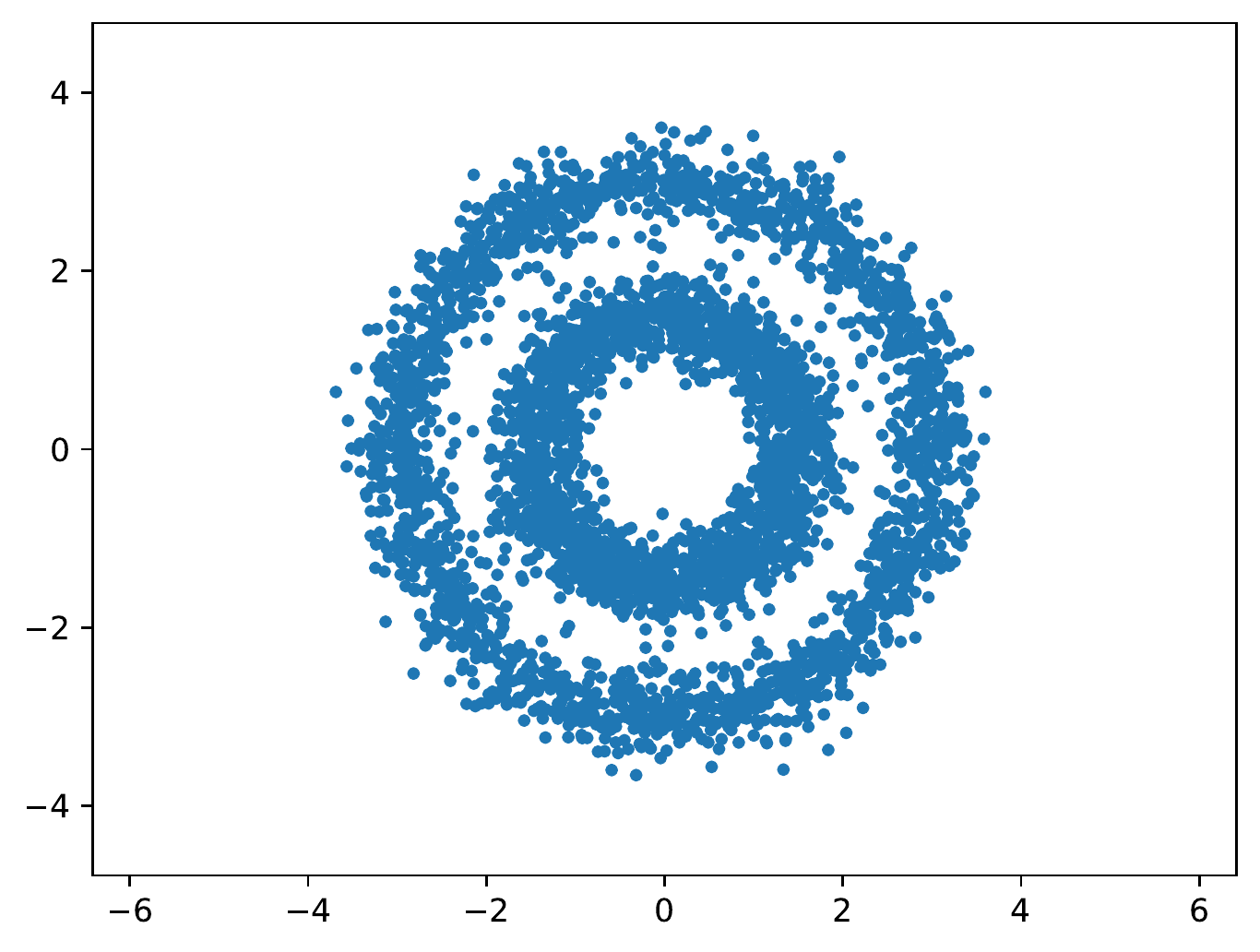}
    \end{minipage}
\hspace{-\samplehinterval}
    \begin{minipage}[t]{\samplempwid\textwidth}
    \centering
    \small{moons}\\
    \includegraphics[width=\samplefigwid
    ,trim=35 35 10 10,clip
    ]{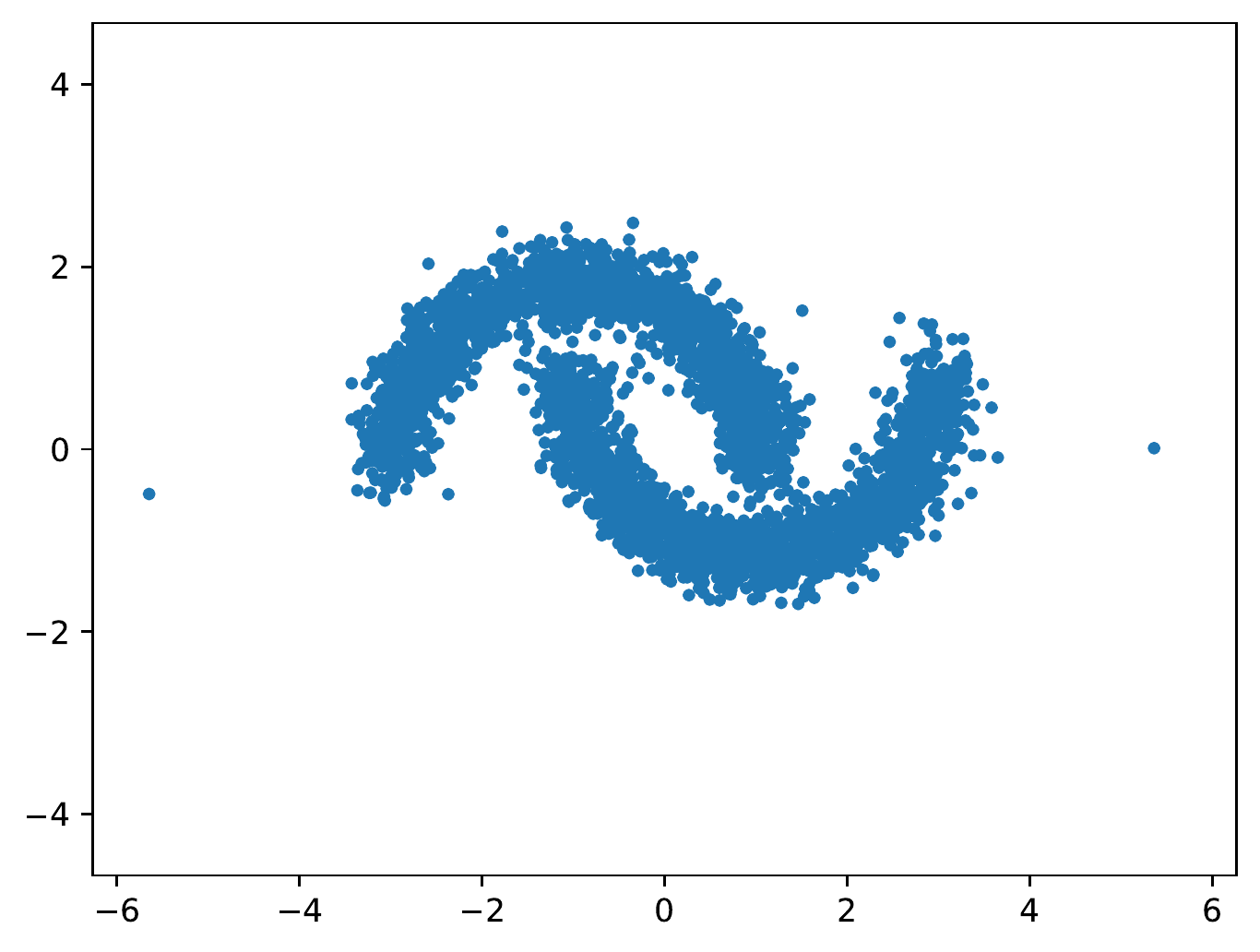}
    \end{minipage}
\hspace{-\samplehinterval}
    \begin{minipage}[t]{\samplempwid\textwidth}
    \centering
    \small{pinwheel}\\
    \includegraphics[width=\samplefigwid
    ,trim=35 35 10 10,clip
    ]{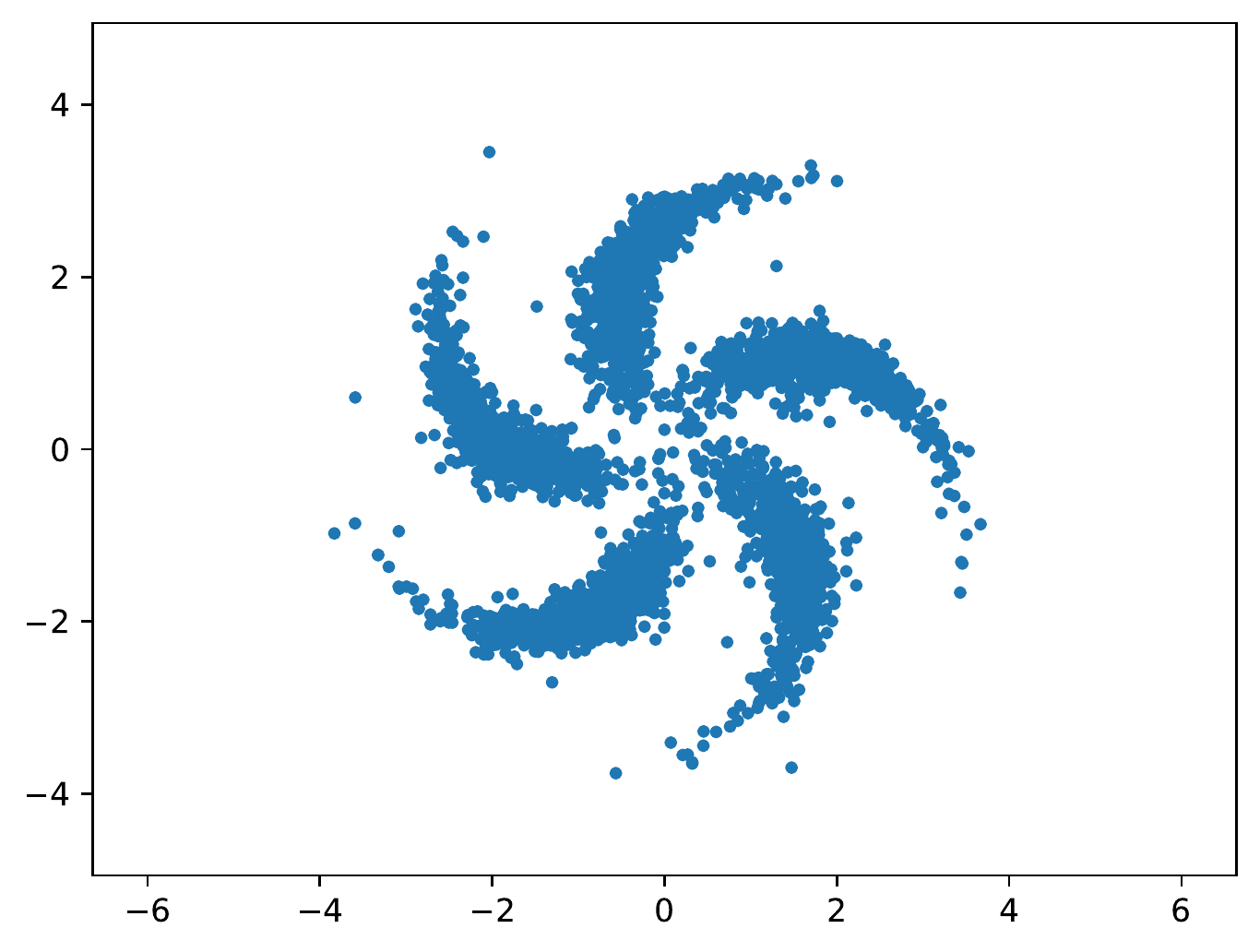}
    \end{minipage}
\hspace{-\samplehinterval}
    \begin{minipage}[t]{\samplempwid\textwidth}
    \centering
    \small{swissroll}\\
    \includegraphics[width=\samplefigwid
    ,trim=35 35 10 10,clip
    ]{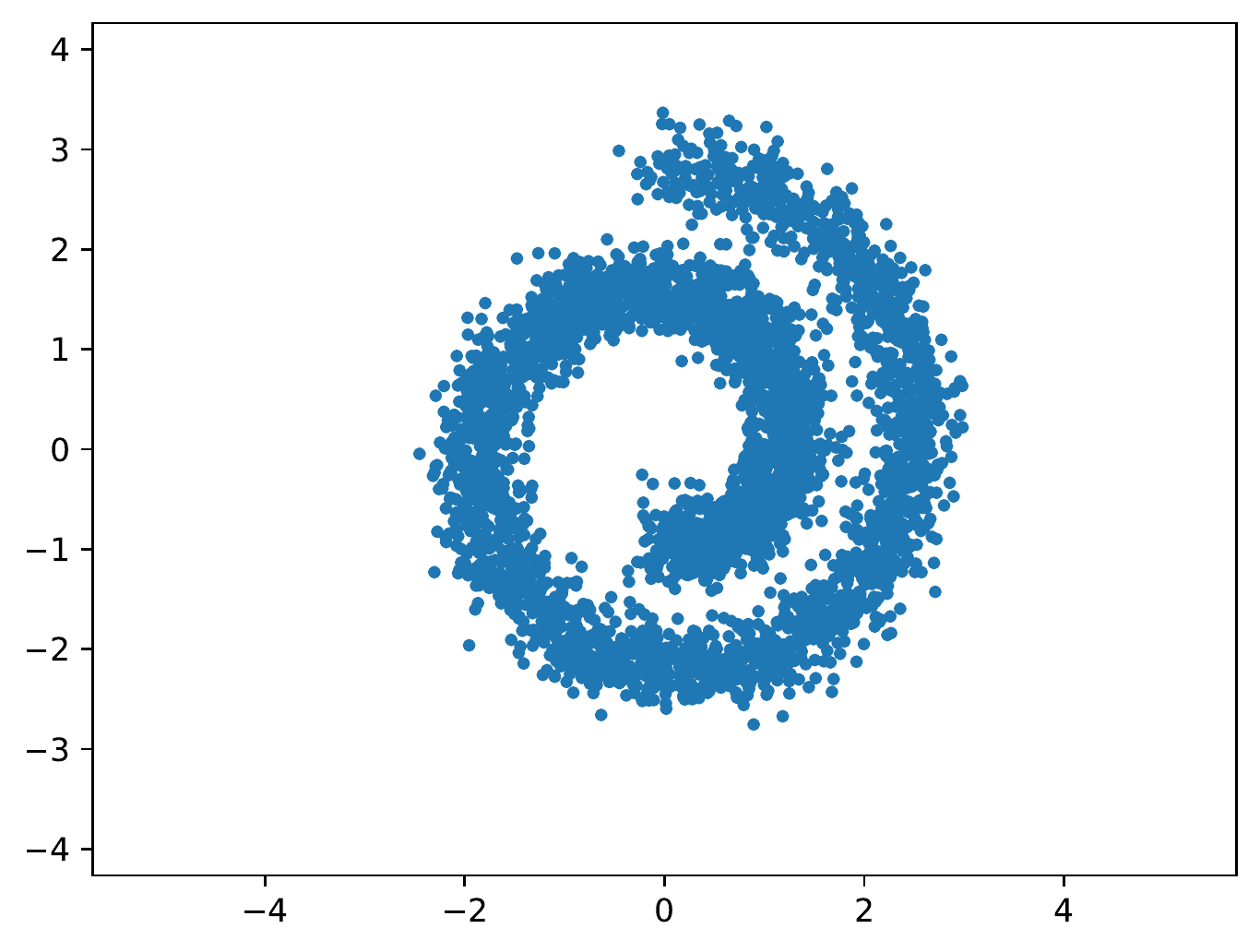}
    \end{minipage}
\hspace{-\samplehinterval}
    \begin{minipage}[t]{\samplempwid\textwidth}
    \centering
    \small{checkerboard}\\
    \includegraphics[width=\samplefigwid
    ,trim=35 35 10 10,clip
    ]{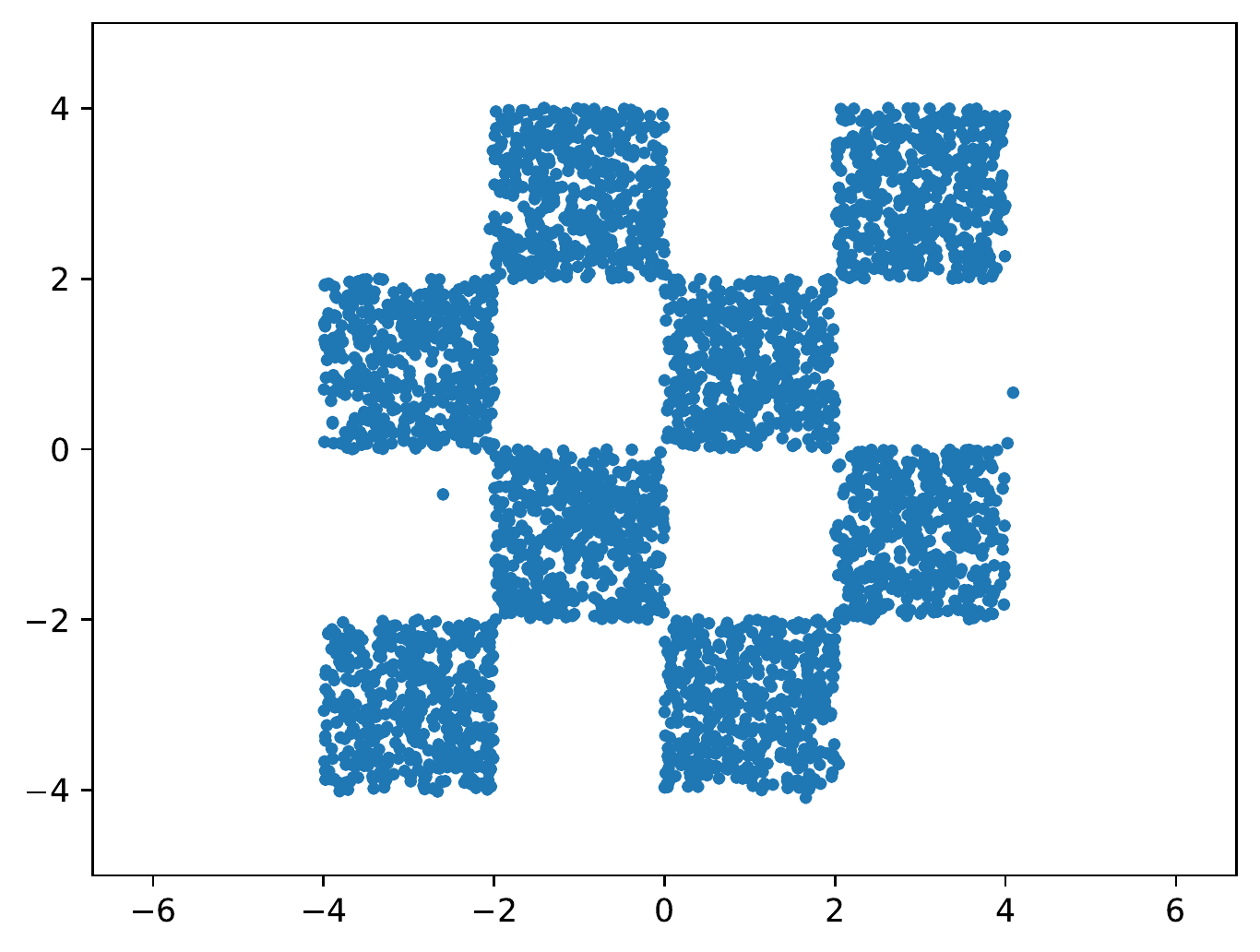}
    \end{minipage}
\hspace{-\samplehinterval}
\end{minipage}
\begin{minipage}{\textwidth}
    \begin{minipage}{\mpwid\textwidth}
    \centering
    \includegraphics[width=\textwidth,trim=0 30 0 30,clip
    ]{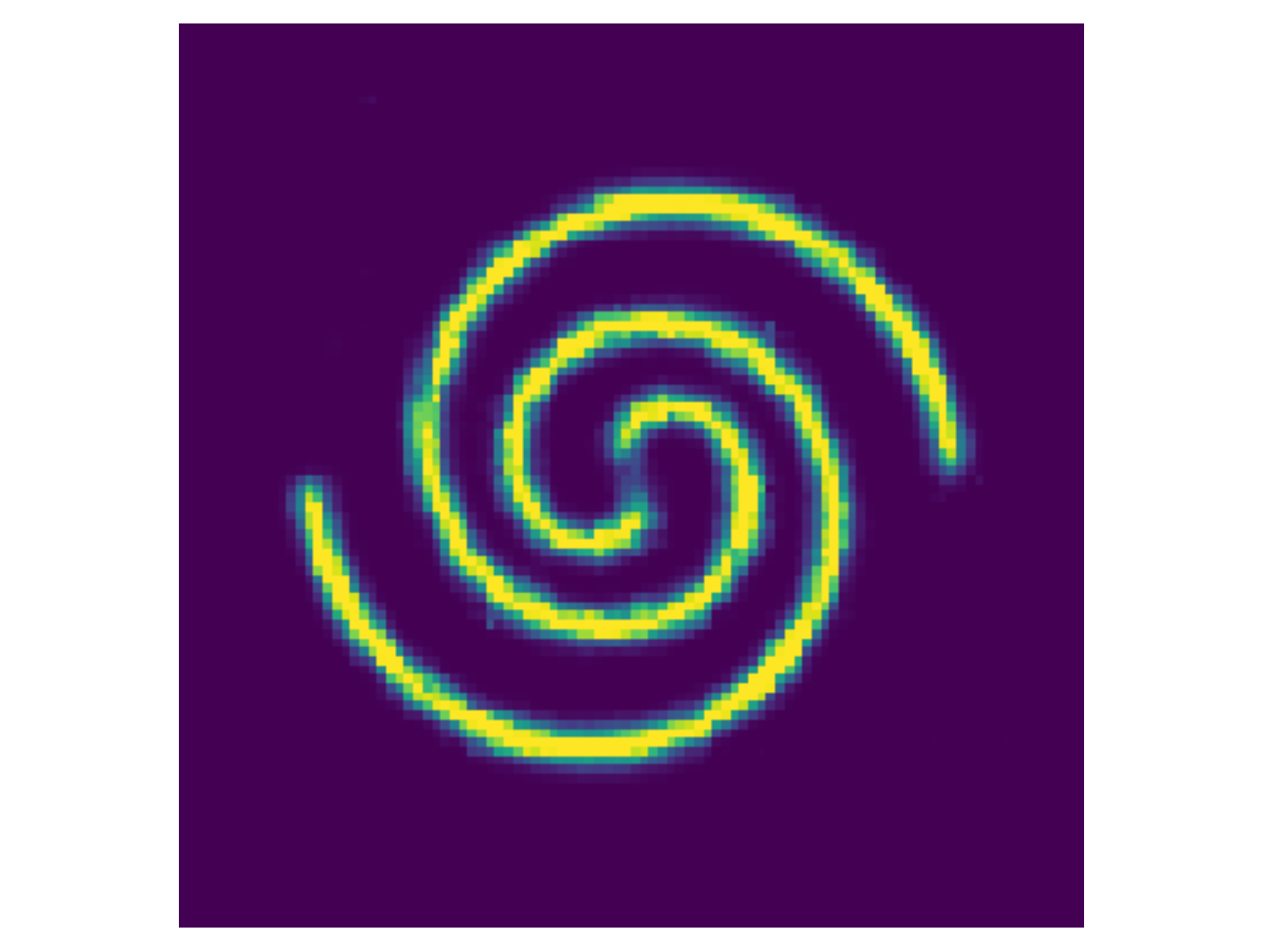}
    \end{minipage}
\hspace{-\hinterval}
    \begin{minipage}{\mpwid\textwidth}
    \centering
    \includegraphics[width=\textwidth,trim=0 30 0 30,clip
    ]{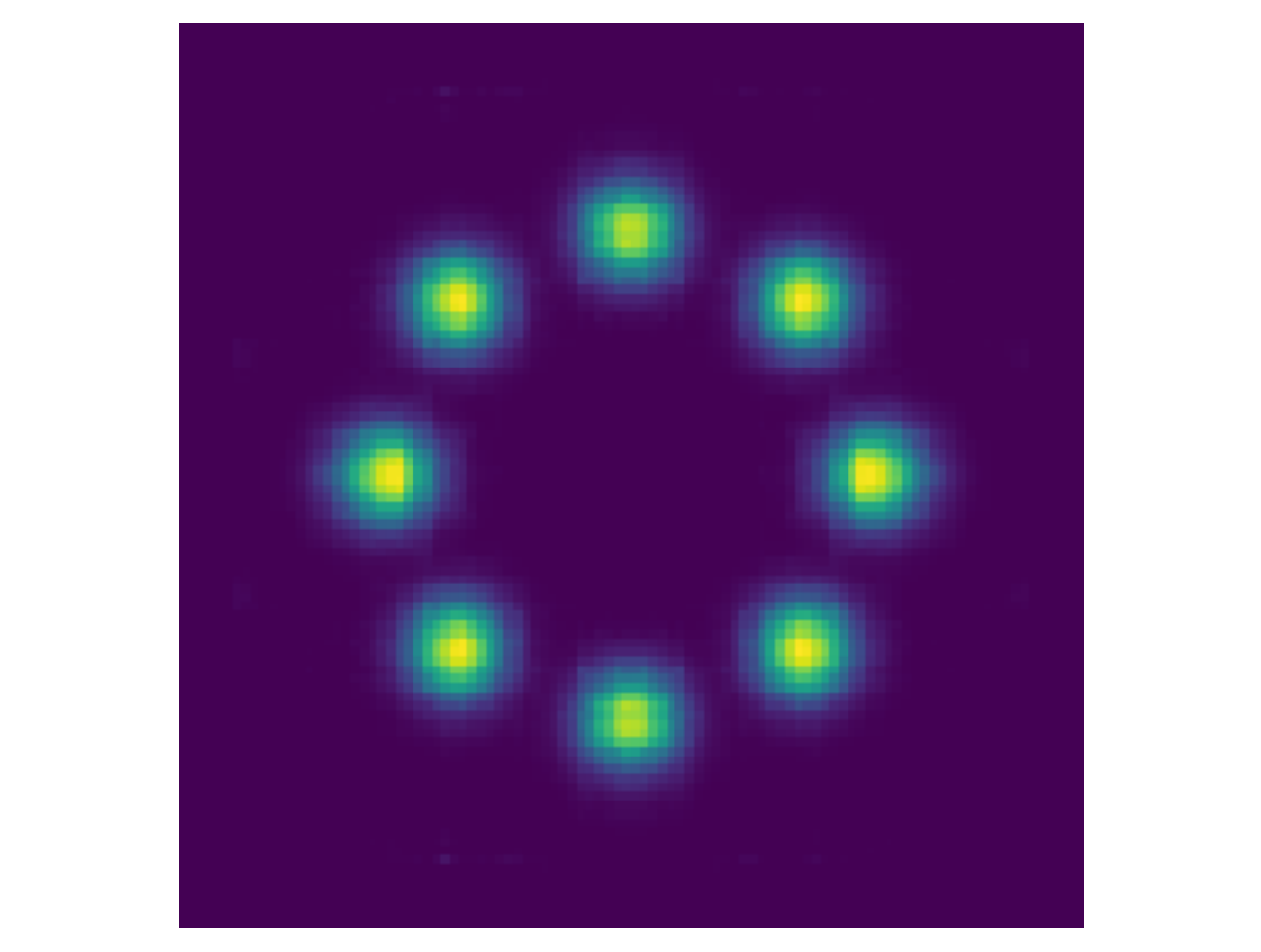}
    \end{minipage}
\hspace{-\hinterval}
    \begin{minipage}{\mpwid\textwidth}
    \centering
    \includegraphics[width=\textwidth,trim=0 30 0 30,clip
    ]{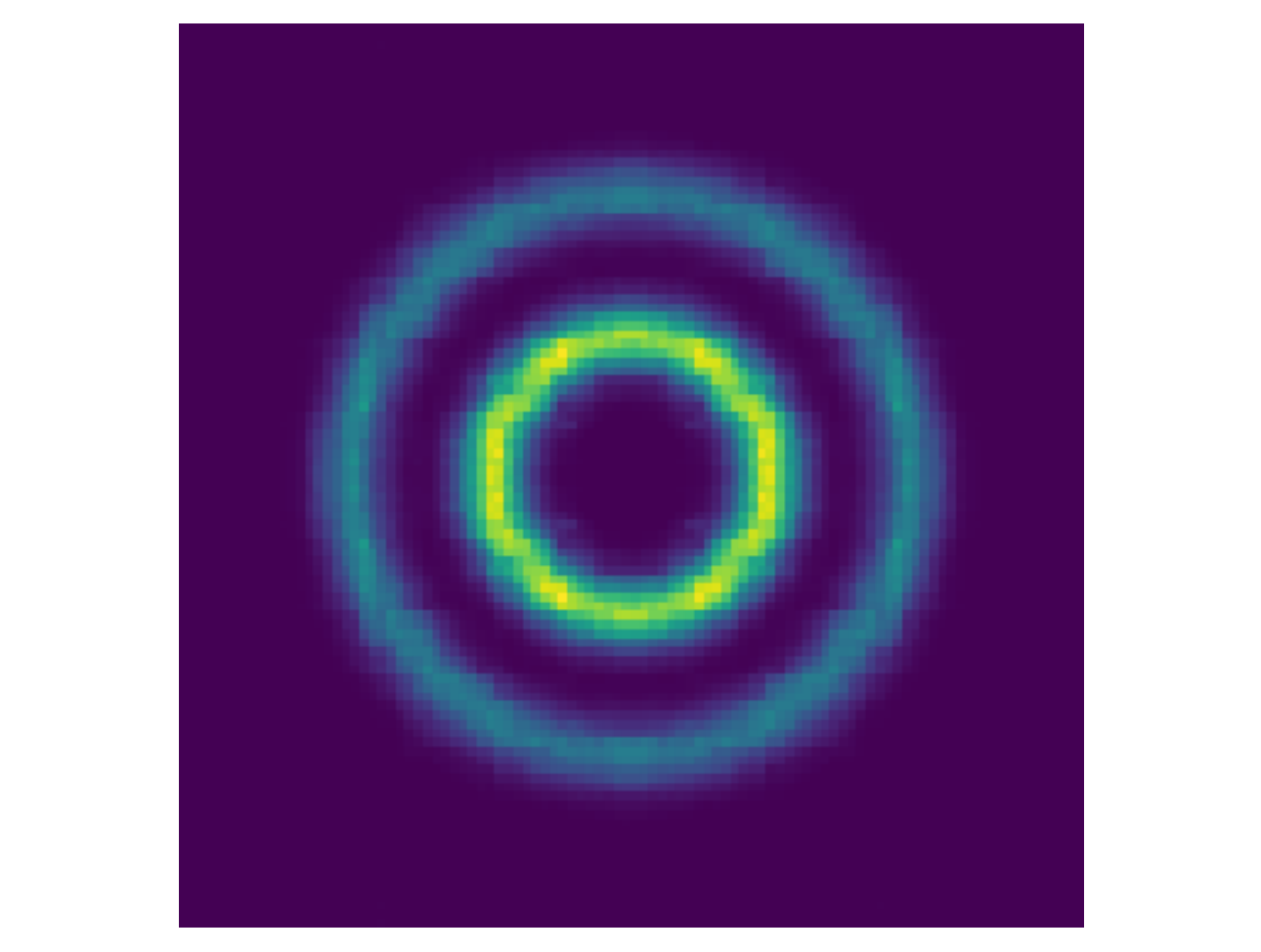}
    \end{minipage}
\hspace{-\hinterval}
    \begin{minipage}{\mpwid\textwidth}
    \centering
    \includegraphics[width=\textwidth,trim=0 30 0 30,clip
    ]{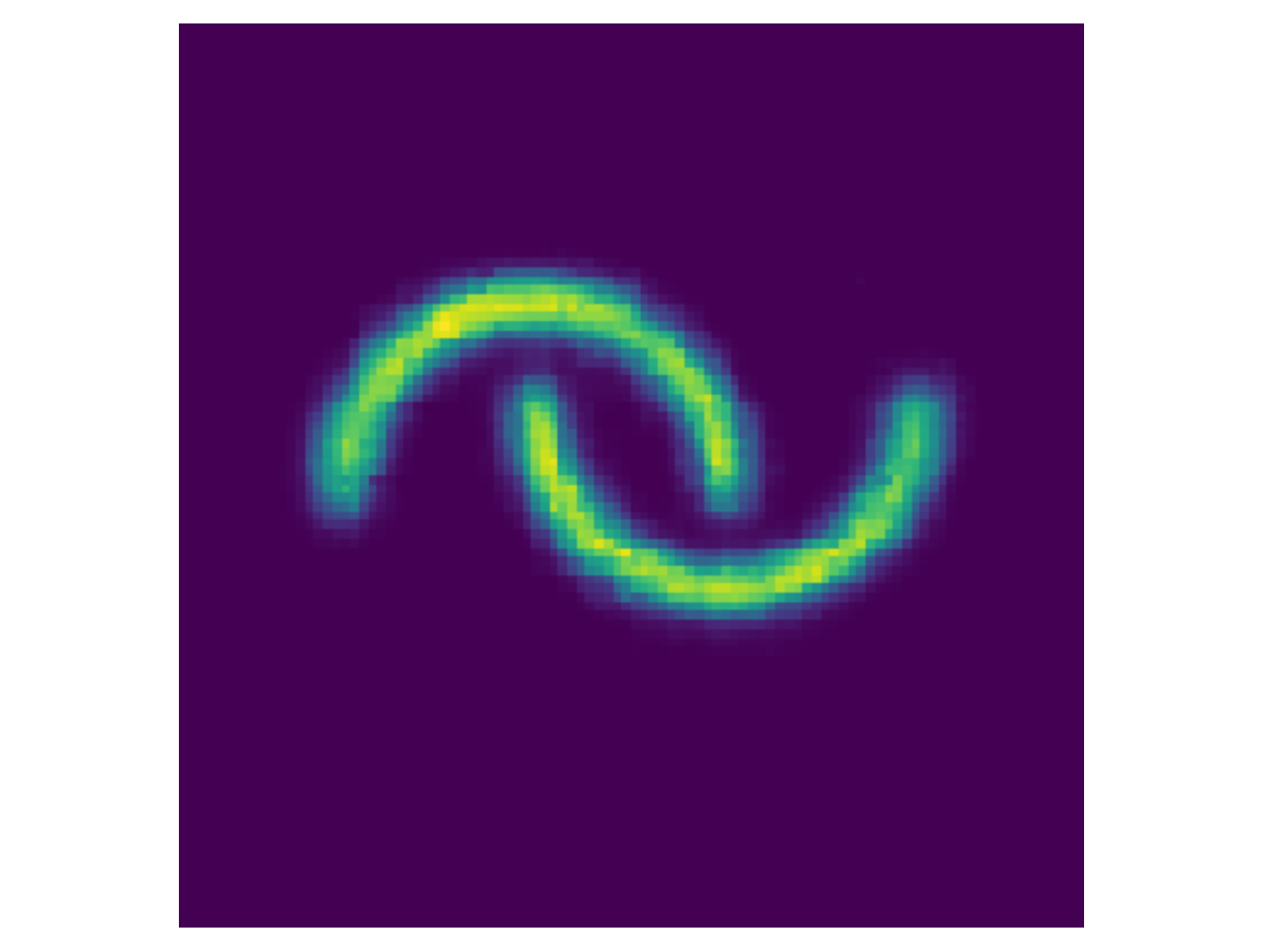}
    \end{minipage}
\hspace{-\hinterval}
    \begin{minipage}{\mpwid\textwidth}
    \centering
    \includegraphics[width=\textwidth,trim=0 30 0 30,clip
    ]{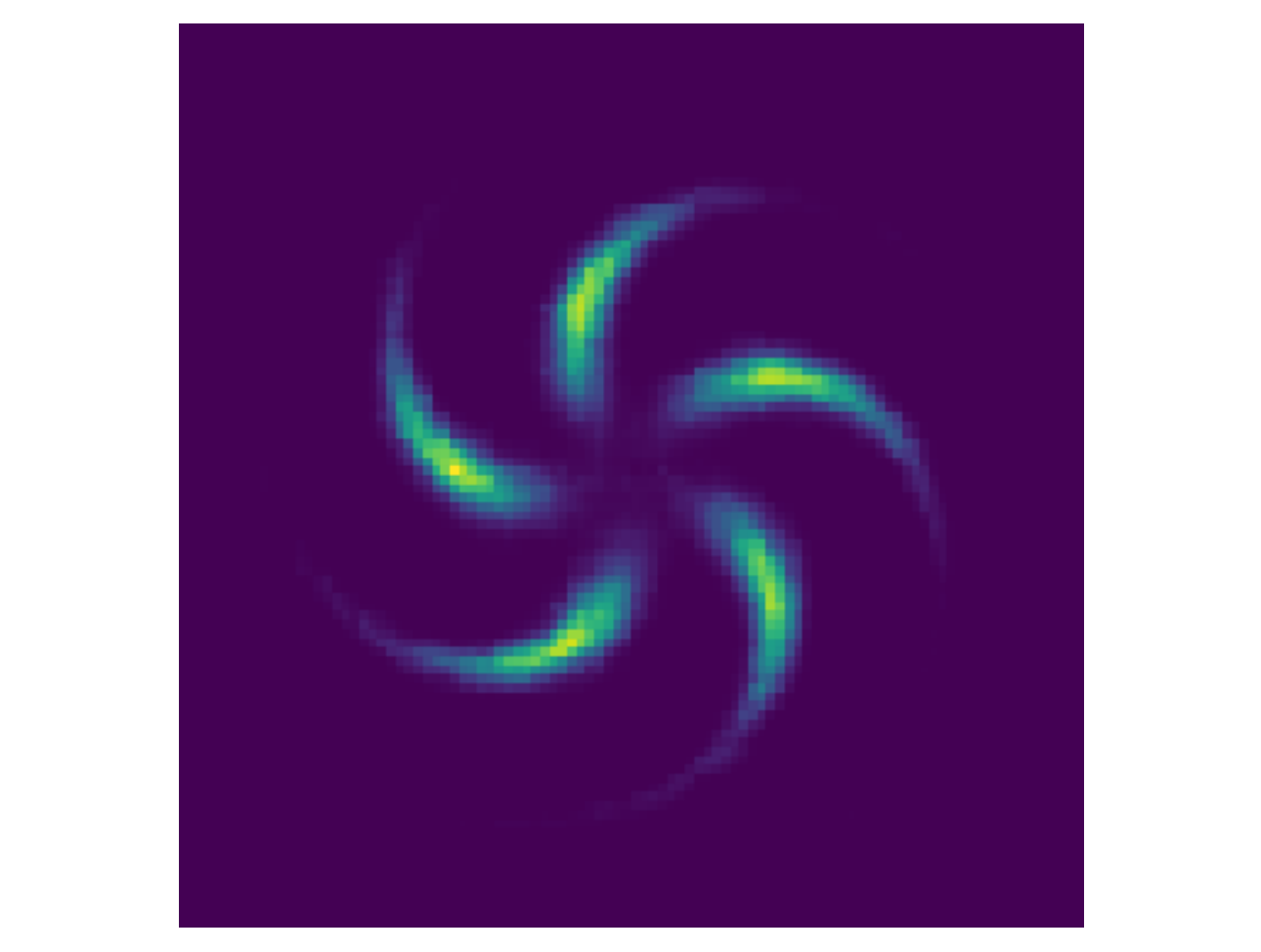}
    \end{minipage}
\hspace{-\hinterval}
    \begin{minipage}{\mpwid\textwidth}
    \centering
    \includegraphics[width=\textwidth,trim=0 30 0 30,clip
    ]{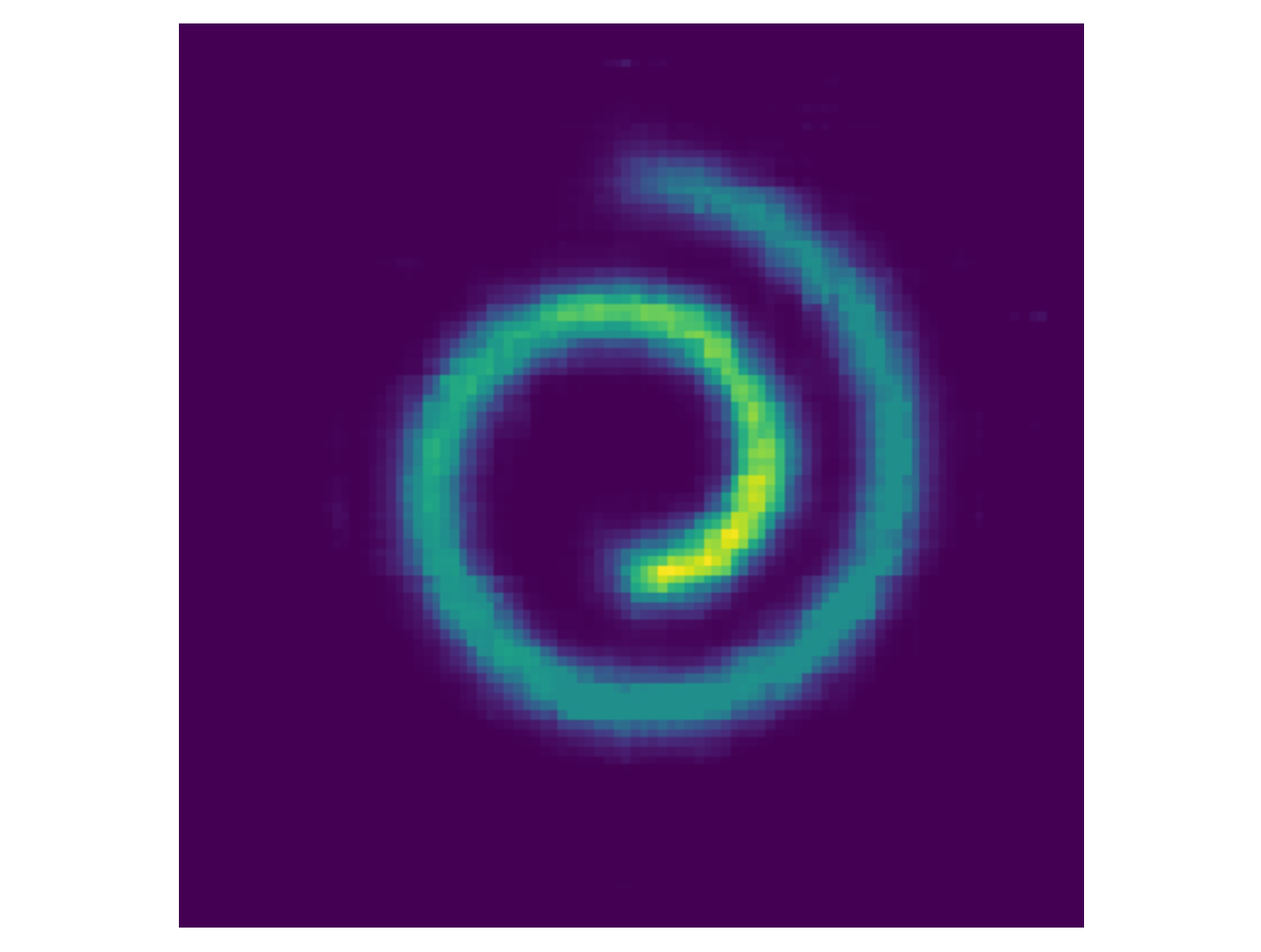}
    \end{minipage}
\hspace{-\hinterval}
    \begin{minipage}{\mpwid\textwidth}
    \centering
    \includegraphics[width=\textwidth,trim=0 30 0 30,clip
    ]{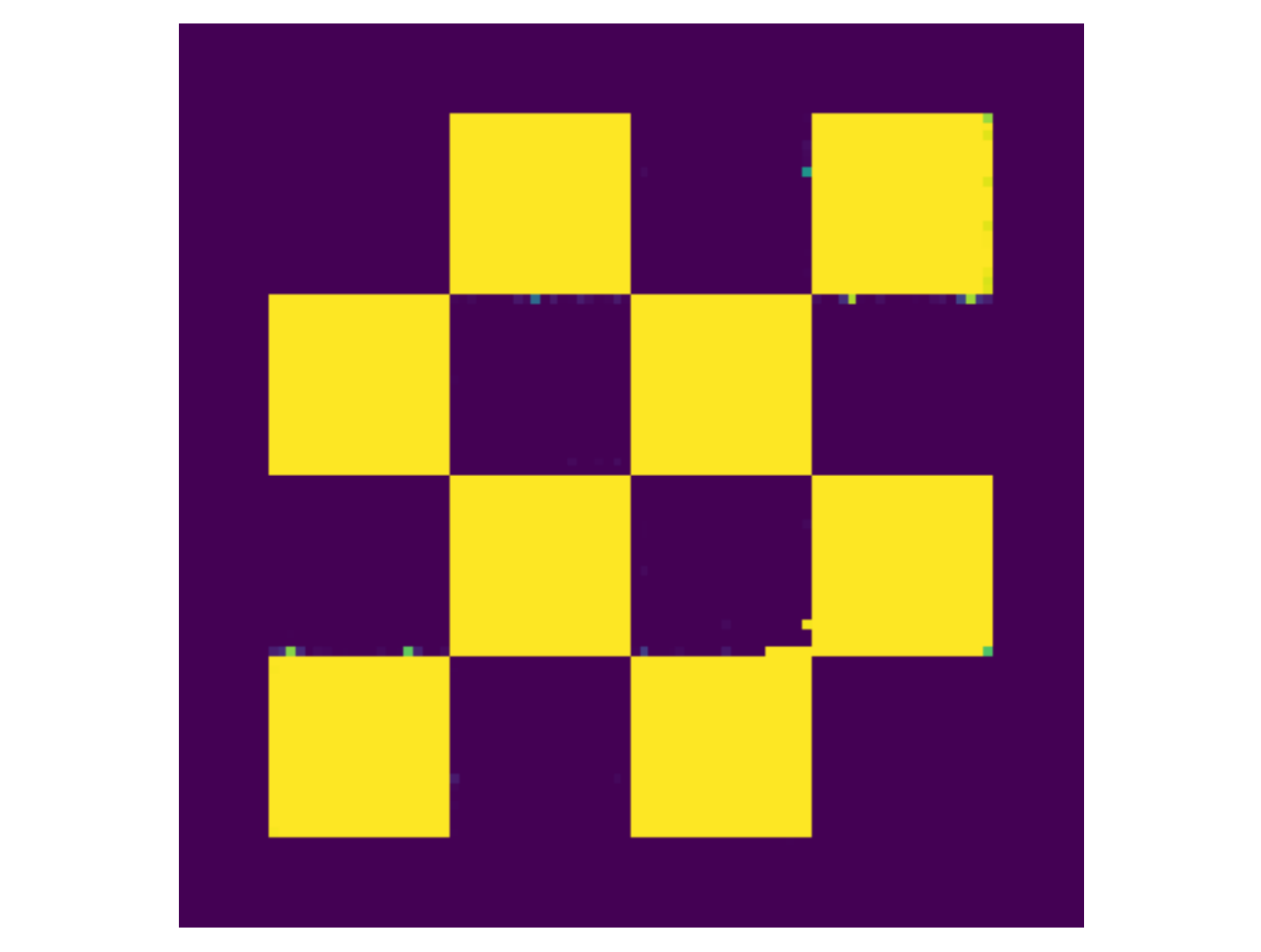}
    \end{minipage}
\hspace{-\hinterval}
\end{minipage}
\centering
\vspace{-2mm}
\caption{
\emph{Top:} Visualization of the samples generated with a learned GFlowNet. 
\emph{Bottom:} Visualization of the energy function learned jointly with the GFlowNet.
Due to limited space, we defer the visualization of baselines to Fig.~\ref{fig:baseline_energy}.
}
\vspace{-3mm}
\label{fig:gfn_synthetic_result}
\end{figure*}

\begin{table*}[t]
\setlength{\tabcolsep}{2.5mm}
\centering
\caption{
Experiment results with seven 2D synthetic problems.
We display the negative log-likelihood (NLL) and MMD (in units of $1\times 10^{-4}$).
Note that ALOE+ uses a thirty times larger parametrization than ALOE and EB-GFN.
}
\label{tab:synthetic_mmd}
\begin{tabular}{c|l|ccccccc}
\toprule
Metric & Method & 2spirals & 8gaussians & circles & moons & pinwheel & swissroll & checkerboard \\
\midrule
\multirow{4}{*}{NLL$\downarrow$} 
& PCD    &  $20.094$ & $19.991$ &$20.565$&$19.763$&$19.593$&$20.172$&$21.214$ \\
& ALOE   &  $20.295$ & $20.350$ & $20.565$ & $19.287$ & $19.821$ & $20.160$ & $54.653$ \\
& ALOE+  &  $20.062$ & $19.984$ & $20.570$ & $19.743$ & $19.576$ & $20.170$ & $21.142$\\
& EB-GFN & $\textbf{20.050}$ & $\textbf{19.982}$ & $\textbf{20.546}$ & $\textbf{19.732}$ & $\textbf{19.554}$ & $\textbf{20.146}$ & $\textbf{20.696}$ \\
\midrule
\multirow{4}{*}{MMD$\downarrow$} 
& PCD    &  $2.160$&$0.954$&$0.188$&$0.962$&$0.505$&$1.382$& $2.831$ \\
& ALOE   &  $21.926$ & $107.320$ & $0.497$ & $26.894$ & $39.091$ & $0.471$ & $61.562$ \\
& ALOE+  &  $ \textbf{0.149} $ & $\textbf{0.078}$ & $0.636$ & $0.516$ & $1.746$ & $0.718$ & $12.138$ \\
& EB-GFN &  $0.583$ & $0.531$ & $\textbf{0.305}$ & $\textbf{0.121}$ & $\textbf{0.492}$ & $\textbf{0.274}$ & $\textbf{1.206}$\\ 
\bottomrule
\end{tabular}
\vspace{-3mm}
\end{table*}

We validate the EB-GFN algorithm on 
the Ising model \citep{ising}. 
The Ising model is an elementary example of a Markov random field and is widely studied in mathematics and physics \citep[see][\S31]{mackay}.
A $D$-particle Ising model is a distribution over $D$-dimensional binary vectors, with entries called \emph{spins}. To keep with established conventions, we call the two possible values of each spin $\{+1,-1\}$ rather than $\{0,1\}$. 
The distribution is given by an energy model, where the energy is a quadratic form (with symmetric $D\times D$ matrix $J$) evaluated on the spin vector:
\vspace{-1mm}
\begin{align*}
    &P(\x)\propto\exp(-\gE_J(\x)),\quad\x\in\{\pm1\}^D,\nonumber\\
     &\gE_J(\x)\triangleq-\x^\top J\x=-\sum_{i=1}^D\sum_{j=1}^DJ_{ij}\x^i\x^j.
     \label{eq:ising_model}
\vspace{-2mm}
\end{align*}
Positive entries $J_{ij}$ encourage $\x^i$ and $\x^j$ to have the same spin, while negative $J_{ij}$ push them to have opposite spins. 
The matrix $J$ is often taken to be a multiple of the adjacency matrix of a graph.
Here we consider models of the form $J=\sigma A_N$, where $\sigma\in\R$ and $A_N$ is the adjacency matrix of a $N\times N$ grid with toroidal wrap-around ($D=N^2$).

We consider $10\times10$ grids with $\sigma=0.1,0.2,\dots,0.5$ and $9\times9$ grids with $\sigma=-0.1,-0.2$.\footnote{We use an odd grid size when $\sigma<0$ because such a model has many modes, each resembling a checkerboard with `seams' where the checkerboard pattern is violated.
} 
For each setting of $\sigma$ and $N$, we use standard methods \citep{swendsen-wang}
to generate 2000 samples $\{\x_i\}_{i=1}^{2000}$ from the Ising model with energy $\gE_J$. We then use the EB-GFN algorithm to jointly fit a symmetric matrix $J_{\vphi}$, giving an estimated Ising EBM $\gE_{J_{\vphi}}$, and a GFlowNet that samples from this Ising EBM.
Note that the EB-GFN algorithm does not have access to the true data-generating matrix $J$, only to the collection of samples $\{\x_i\}$. This is a simple test case for EB-GFN, since the energy is parametrized by a single matrix $J_{\vphi}$, not by a deep model. We evaluate the discrepancy (RMSE) between the true matrix $J$ and the learned matrix $J_{\vphi}$.


For simplicity, we set $K=D$ for the negative sampling step and use a training policy with $\alpha=1$ (no backward paths from training examples).  Details can be found in \S\ref{app:appendix_ising}. We compare EB-GFN with the Gibbs and Gibbs-With-Gradients (GWG) 
PCD algorithms \citep{Grathwohl2021OopsIT}. 
Table~\ref{tab:ising_results} shows the advantage of EB-GFN. Fig.~\ref{fig:ising_vis} shows how faithful both the samples and the energy function obtained by the GFlowNet are, 
suggesting that the GFlowNet is able to discover generalizable structure from the data.




\vspace{-0.5mm}
\subsection{Synthetic tasks}
\vspace{-0.5mm}
\label{sec:synthetic}

The experiment described in this subsection follows the setup of \citet{Dai2020ALOE}. 
We aim to model seven different distributions over 32-dimensional binary data that are discretizations of continuous distributions over the plane (shown in Fig.~\ref{fig:true_synthetic_samples}). To convert planar data $(x,y)\in\R^2$ to 32-dimensional binary data, we quantize both $x$ and $y$ into $2^{16}$ equal-width buckets, then remap them to 16-bit representations via a Gray code \citep{gray}, so that any two neighbouring buckets differ in exactly one bit. 
As a result, the methods are required to model data in $\{0, 1\}^{32}$.

We compare with PCD-10 and ALOE \citep{Dai2020ALOE}, two baselines for energy modeling in discrete spaces.
We use the same energy function architecture 
and training protocol as \citet{Dai2020ALOE}.
The PCD-10 baseline utilizes Gibbs sampling with a replay buffer and the random restart mechanism \citep{Du2019ImplicitGA} for negative sample generation. 
ALOE learns three neural networks for negative sampling: a proposal model to provide initial samples and a pair of models (local search policy and stop policy) used to iteratively refine these samples.
The initial proposal model could be either a simple multinomial distribution or a large autoregressive network.
We name them ALOE and ALOE+ respectively, as the former has a similar total number of parameters to our GFlowNet,
while ALOE+ is $32\times$ larger.
For GFlowNet training, we use a mixed training policy with $\alpha=0.5$ and a schedule with linearly increasing $K$ for the back-and-forth proposal.
See \S\ref{sec:appendix_synthetic} for details.

For qualitative evaluation, we first visualize in Fig.~\ref{fig:gfn_synthetic_result} the heatmaps of the learned energy functions and some GFlowNet-generated samples by remapping the Gray code representations of samples back to 2-D space.
As a comparison, we also visualize the energy model baselines in Fig.~\ref{fig:baseline_energy}.
We observe that for multimodal tasks such as {\em checkerboard} and {\em 8gaussians}, GFlowNets are much better at capturing the modes and their structure than the baselines (as illustrated schematically in Fig.~\ref{fig:main_fig}).
We hypothesize that {\em multimodality is easier to handle with GFlowNets than with MCMC if there are generalizable regularities that make it possible for the GFlowNet to guess new modes from those already visited and from which it has learned. }

We quantitatively evaluate the algorithms in Table~\ref{tab:synthetic_mmd} by showing for each method the NLL of a large independent sample of ground truth data and the exponential Hamming MMD \citep{Gretton2012AKT} between ground truth data and generated samples as performance metrics. 
Our method outperforms the baselines on all datasets and metrics except MMD on {\em 2spirals} and {\em 8gaussians}, where it still exceeds comparable-size baselines. 
Of note, ALOE does not surpass PCD for many tasks without a large initial proposal network, which shows a potential weakness of its local search strategy.
We defer more results and related details to \S\ref{sec:appendix_synthetic}.

\vspace{-0.5mm}
\subsection{Discrete image modeling}
\vspace{-0.5mm}
\label{sec:image}

\begin{table}[t]
\vspace{-3mm}
\caption{Experiments on discrete image modeling. We display the negative log likelihood (NLL) per sample for different algorithms.}
\label{tab:image_logll}
\begin{center}
\begin{tabular}{lccc}
\toprule 
Dataset $\backslash$ Method & Gibbs & GWG & EB-GFN \\
\midrule 
Omniglot & $133.92$ & $114.96$ & $\textbf{112.59}$ \\
Silhouettes & $475.55$ & $188.82$ &  $\textbf{185.57}$ \\
Static MNIST & $173.61$ & $\textbf{99.36}$ & $102.43$ \\
Dynamic MNIST & $162.25$ & $108.29$ & $\textbf{105.75}$ \\
\bottomrule
\end{tabular}
\end{center}
\vspace{-3mm}
\end{table}

Here, we aim to generatively model 
previously studied image datasets in discrete high-dimensional spaces. 
These are generally hard problems as we lose the information of continuous pixel values, and prevalent scalable methods \citep{Welling2011BayesianLV, Ma2015ACR}
are not applicable to training deep EBMs on discrete data.
An MLP is utilized as the energy function, and
the baseline training methods are PCD with Gibbs sampling and the Gibbs-With-Gradients sampling method \citep[GWG;][]{Grathwohl2021OopsIT}.
Experiments are performed on four different binary image datasets.
Following the experimental settings in \citet{Grathwohl2021OopsIT}, the EBM is trained via PCD-100 with different negative sampling methods, and a replay buffer is adopted as in \citet{Du2019ImplicitGA}.
The validation and evaluation protocol is also kept aligned with \citet{Grathwohl2021OopsIT},
where the checkpoint with the best negative log-likelihood on the validation set is reported.

The test set NLLs are displayed in Table~\ref{tab:image_logll}.
The results indicates that
EB-GFN reaches state-of-the-art energy modeling performance, surpassing GWG, on three of the four datasets.
As a supplement, 
a visualization of Dynamic MNIST data is shown in Figure \ref{fig:mnist}. 
More details are in \S\ref{sec:appendix_image}.

\vspace{-1.5mm}
\section{Related Work}
\vspace{-1.5mm}

\textbf{Energy-based models.} EBM, one of the central methods in generative modeling, has proved effective with energy functions parametrized by deep nets \citep{Hinton2006AFL, Salakhutdinov2009DeepBM}. To avoid costly MCMC simulation with deep models, contrastive divergence-type methods \citep{Hinton2002TrainingPO,Tieleman2008TrainingRB,Xie2016ATO,Du2021ImprovedCD} were proposed to approximate the energy gradient. It has also been shown that better objectives beyond vanilla CD helps EBM training \citep{Yu2020TrainingDE}. Training methods have been proposed for better stability, shorter mixing time, faster training \citep{Nijkamp2019LearningNN, Du2019ImplicitGA, Grathwohl2021NoMF, Gao2021LearningEM}. Recent work shows that it can be beneficial to learn the sampler or the proposal distribution as well \citep{Dai2019ExponentialFE, Arbel2021GeneralizedEB}, a finding that this work extends to discrete spaces. 


\begin{figure}[t]
\centering
\vspace{-1mm}
\begin{minipage}{0.45\columnwidth}
\begin{minipage}{\textwidth}
\includegraphics[width=\textwidth]{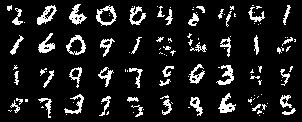}
\end{minipage}
\vspace{0.3cm}

\begin{minipage}{\textwidth}
\includegraphics[width=\textwidth]{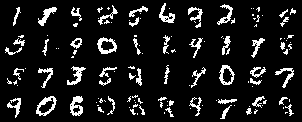}
\end{minipage}
\end{minipage}
\hspace{0.2cm}
\begin{minipage}{0.45\columnwidth}
\centering
\includegraphics[width=\textwidth]{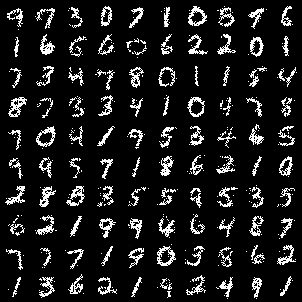}
\end{minipage}
\vspace{-1.5mm}
\caption{Visualization of the Dynamic MNIST samples generated from Gibbs \textit{(top left)}, GWG \textit{(bottom left)} and GFlowNet \textit{(right)}. 
EB-GFN models the details of certain modes better (see, \eg, the bottom right corner of the GWG samples). Note that the original images were binarized, which explains in part why all samples look noisier than the familiar gray-level MNIST images.}
\label{fig:mnist}
\vspace{-4.5mm}
\end{figure}

\textbf{Related methods.}
Autoregressive models \citep{Sutskever2011GeneratingTW, Graves2013GeneratingSW, Germain2015MADEMA}, like our GFlowNets, generate each entry of a data vector sequentially, but in a fixed order. 
Autoregressive models can also be used for data without a natural order \citep{ Uria2016NeuralAD, Oord2016PixelRN, Oord2016ConditionalIG, Meng2021ImprovedAM}. 
Some recent work 
\citep{Emelianenko2019SequenceMW, Li2021DiscoveringNA} allows generation order to be learned. We are the first to interpret learning of generation order as a joint inference of forward (construction) and reverse (erasure) Markov processes.


\textbf{Discrete inference.}
Probabilistic inference in discrete settings is generally harder than in continuous spaces. Many optimized sampling methods for discrete spaces have been developed \citep{Titsias2017TheHB,Zanella2019InformedPF,Han2020SteinVI,zhang2022langevin}.
Most applicable to our experiment domains, \citet{Grathwohl2021OopsIT} uses a continuous relaxation to approximate the local energy landscape, while
\citet{Dai2020ALOE} introduces a local search strategy in the variational distribution to initialize negative sample generation.


\vspace{-2mm}
\section{Conclusion}
\vspace{-1.5mm}

We have extended GFlowNets to the setting where one is given a dataset rather than a fixed energy function, and we learn both the GFlowNet sampler and the energy function. In doing so, we introduced a new proposal for MH MCMC that approaches block Gibbs sampling as the GFlowNet training converges. The main advantage of this proposal is that it can perform large jumps in the state space, unlike simple Gibbs sampling, taking advantage of the compositional structure that the GFlowNet may have uncovered that allows it to generalize across modes of the distribution and more easily jump between them (Fig.~\ref{fig:main_fig}). Future work can consider iterating such proposals from a trained GFlowNet for efficient exploration of the space, rather than just sampling complete trajectories ($K=D$) from the GFlowNet.

The cost of such an approach is that in addition to training the energy function, we also have to train the sampler. However, this cost can be amortized if we intend to use the sampler later, since generating samples from the GFlowNet is often much cheaper than from a MCMC (if we want to make sure to cover the modes well). 
We hypothesize that the above advantages explain the good comparative results obtained here, and expect that the proposed approach may be extended to many other types of generative tasks.


\section*{Acknowledgement}
We thank Yilun Du, Ricky T. Q. Chen, and Mila GFlowNet group for helpful discussion.
Dinghuai Zhang thanks the never-ending snow storm in Montreal for preventing him from any form of outdoor activity :).
Zhen Liu thanks miHoYo
for the joy from their awesome games in the tough winters.
Aaron Courville thanks the support of Microsoft Research, Hitachi and CIFAR.
Yoshua Bengio acknowledges the funding from CIFAR, Samsung, IBM and Microsoft.


\bibliography{ref}
\bibliographystyle{icml2022}

\newpage 
\onecolumn
\appendix

\counterwithin{figure}{section}
\counterwithin{table}{section}

\section{Summary of GFlowNet notation}

\begin{table}[h]
\centering
\begin{tabular}{ll}
 \toprule
 Symbol & Description \\ 
 \midrule
 $D$ & data dimension \\
 $\gS$ & state space $\{0,1,\oslash\}^D$ \\
 $\gX$ & data space $\{0,1\}^D$, identified with the set of terminal states in $\gS$\\
 $\gA$ & action / transition space (edges $\s\to\s'$)\\
 $\gT$ & set of complete trajectories \\
 $G$ & directed acyclic graph $(\gS,\gA)$ \\
 $\s$ & state in $\gS$\\
 $\s_0$ & initial state $(\oslash,\dots,\oslash)\in\gS$ \\
 $\x$ & terminal state in $\gX$ \\
 $F: \gT \to \R$ & Markovian flow\\
 $F: \gS \to \R$ & state flow, used in the proof of Proposition~\ref{prop:uniform}\\
 $F: \gA \to \R$ & edge flow, used in the proof of Proposition~\ref{prop:uniform}\\
 $P_F$ & forward policy (distribution over children) \\
 $P_B$ & backward policy (distribution over parents) \\
 $Z$ & scalar, equal to $\sum_{\tau\in\gT}F(\tau)$ for a Markovian flow
 \\
 \bottomrule
\end{tabular}
\end{table}

\section{Proofs of propositions}
\label{app:proofs}


Recall Proposition~\ref{prop:uniform}:
\begin{proposition*}
\uniformprop
\end{proposition*}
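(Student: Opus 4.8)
The idea is to rewrite the flow entropy as the Shannon entropy of the distribution over complete trajectories, split off the (fixed) terminal‑state entropy using the chain rule, and then reduce to a per‑endpoint maximization that the hypercube DAG solves uniformly.

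\emph{Step 1: $\gH[F]$ is the entropy of the trajectory law.} I first note that for a Markovian flow with forward policy $P_F$, writing $\log P_F(\tau)=\sum_{t=0}^{n-1}\log P_F(\s_{t+1}\mid\s_t)$ and conditioning on the prefix $\s_0,\dots,\s_t$ before applying the tower rule gives $\gH[F]=-\E_{\tau\sim P_F}[\log P_F(\tau)]$, the Shannon entropy of the law of complete trajectories. In the DAG of \S\ref{sec:gfn_setup} every complete trajectory has exactly $D$ steps, so there is no subtlety with the random upper limit of the sum.

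\emph{Step 2: peel off the terminal‑state entropy.} A complete trajectory is determined by its terminal state $\x=\s_D$ together with the order in which the $D$ coordinates were filled, so by the chain rule for entropy $\gH[F]=\gH[P_T]+\E_{\x\sim P_T}\!\bigl[\gH[P_F(\tau\mid\x)]\bigr]$, where $P_F(\tau\mid\x)$ is the law over trajectories from $\s_0$ to $\x$. The reward‑matching constraint (\ref{eq:reward_matching}) forces $P_T(\x)\propto R(\x)$, hence $\gH[P_T]$ is the same constant for every Markovian flow obeying (\ref{eq:reward_matching}); maximizing $\gH[F]$ is therefore equivalent to maximizing $\E_{\x\sim P_T}[\gH[P_F(\tau\mid\x)]]$ over such flows. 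For each fixed $\x$ there are exactly $D!$ trajectories from $\s_0$ to $\x$ (one per ordering of the coordinates), so $\gH[P_F(\tau\mid\x)]\le\log D!$ with equality iff $P_F(\tau\mid\x)$ is uniform over those $D!$ trajectories. This yields the universal bound $\gH[F]\le \gH[P_T]+\log D!$ for all flows satisfying (\ref{eq:reward_matching}).

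\emph{Step 3: $F^\circ$ attains the bound.} It remains to show $P_F(\tau\mid\x)$ is uniform for the flow $F^\circ$ determined by $P_B^\circ$ and $R$. Here I use the characterization of $F^\circ$ from \citet{bengio2021foundations}: under reward matching one has $P_F(\tau)=P_T(\s_n)\prod_{t=0}^{n-1}P_B^\circ(\s_t\mid\s_{t+1})$ (equivalently, substitute $P_B^\circ(\s\mid\s')=F^\circ(\s)P_F(\s'\mid\s)/F^\circ(\s')$ in terms of state flows and telescope, with $F^\circ(\s_0)=Z$ and $F^\circ(\x)=R(\x)$), so $P_F(\tau\mid\x)=\prod_{t=0}^{D-1}P_B^\circ(\s_t\mid\s_{t+1})$. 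Along any trajectory to $\x$ the states satisfy $|\s_{t+1}|=t+1$, hence $\s_{t+1}$ has $t+1$ parents and $P_B^\circ(\s_t\mid\s_{t+1})=1/(t+1)$; the product telescopes to $\prod_{t=0}^{D-1}1/(t+1)=1/D!$, independent of the trajectory. Thus $P_F(\tau\mid\x)$ is uniform, $\gH[P_F(\tau\mid\x)]=\log D!$ for every $\x$, and $\gH[F^\circ]=\gH[P_T]+\log D!$ meets the bound, proving maximality.

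\emph{Main obstacle.} Steps 1 and 2 are routine (tower rule and $\gH(\cdot)\le\log|\cdot|$), and the combinatorics of the hypercube DAG (each $\s$ has $|\s|$ parents, trajectories have length $D$, there are $D!$ of them) is exactly what makes the telescoping product constant. The one place that needs care is Step 3: invoking the explicit form of $F^\circ$ / the identity $P_F(\tau\mid\x)=\prod_t P_B^\circ(\s_t\mid\s_{t+1})$ and making sure this is precisely the flow uniquely singled out by $P_B^\circ$ and $R$ under (\ref{eq:reward_matching}) — this is the only step that uses the uniqueness statement cited from \citet{bengio2021foundations}, and it should be stated cleanly in terms of state/edge flows.
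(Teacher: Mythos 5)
Your proof is correct, and while it lands on the same underlying decomposition as the paper's, it gets there by a genuinely different and somewhat more elementary route. The paper establishes, via an explicit computation with state and edge flows, that $\gH[F]$ equals the expected total entropy of the backward policy along a trajectory plus a flow-independent constant $\Delta$; one can check that $\Delta=\gH[P_T]$ and that the expected backward-policy entropy is exactly your $\E_{\x\sim P_T}\bigl[\gH[P_F(\tau\mid\x)]\bigr]$, so your chain-rule identity $\gH[F]=\gH[P_T]+\E_{\x}[\gH[P_F(\tau\mid\x)]]$ is the same statement, derived in two lines from $H(\tau)=H(\x)+H(\tau\mid\x)$ rather than by flow algebra. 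The closing steps differ in organization: the paper bounds $\gH[P_B(\cdot\mid\s_d)]\le\log d$ state by state and sums to $\log D!$ along a trajectory, which makes it transparent that uniformity of $P_B$ is needed at every state; you instead bound $\gH[P_F(\tau\mid\x)]\le\log D!$ by counting the $D!$ trajectories per endpoint and then verify attainment via the identity $P_F(\tau\mid\x)=\prod_t P_B^\circ(\s_t\mid\s_{t+1})$ and the telescoping product $\prod_{t=0}^{D-1}\tfrac{1}{t+1}=\tfrac{1}{D!}$. Your version is shorter and makes the combinatorial reason for the maximizer more visible; the paper's version isolates the general forward--backward entropy identity, which holds on arbitrary DAGs, as a reusable intermediate fact. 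Both arguments are complete; your Step 3 correctly identifies the only place where the uniqueness of the flow determined by $P_B^\circ$ and $R$ is invoked.
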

\begin{proof}
We use the following definitions from \citet{bengio2021foundations}:

For a trajectory flow $F$ and for any state $\s$, define the state flow $F(\s)=\sum_{\s\in\tau}F(\tau)$, and, for any edge $\s\ra\s'$, the edge flow \[F(\s\ra\s')=\sum_{\tau=(\dots\ra \s\ra \s'\ra\dots)}F(\tau).\] Notice that $Z=F(s_0)$ immediately from (\ref{eq:markovian_factorization}).

If $F$ is a Markovian flow, then $P_F$ and $P_B$ can be computed in terms of state and edge flows:
\begin{equation}
    P_F(\s'|\s)=\frac{F(\s\ra \s')}{F(\s)},\quad P_B(\s|\s')=\frac{F(\s\ra \s')}{F(\s')},
    \label{eqn:flow_to_policy}
\end{equation}
and we have 
\begin{equation}
F(\s)=\sum_{(\s''\ra \s)\in \gA}F(\s''\ra \s)=\sum_{(\s\ra \s')\in \gA}F(\s\ra \s').
\label{eqn:fm_constraint}
\end{equation}

The following computation shows that the entropy of the forward policy, defined by (\ref{eqn:entropy}), equals a similar expression for the backward policy:
\begin{align*}
    \gH[F]
    &=\E_{(\s_0\ra\dots\ra \s_n)\sim P_F}\left[\sum_{t=0}^{n-1}\gH[P_F(\cdot|\s_t)]\right] & \text{(definition (\ref{eqn:entropy}))}\\
    &=\sum_{\s\in\gS\text{ nonterminal}}\P[\tau=(\dots\ra \s\ra\dots)]\gH[P_F(\cdot|\s)]&\text{(linearity of expectation)}\\
    &=-\sum_{\s\in\gS\text{ nonterminal}}\frac{F(\s)}{Z}\sum_{\s':(\s\ra \s')\in\gA}P_F(\s'|\s)\log P_F(\s'|\s) & \text{(by definition of state flow)}\\
    &=-\sum_{(\s,\s')\in\gA}\frac{F(\s)}{Z}\frac{F(\s\ra \s')}{F(\s)}\log\frac{F(\s\ra \s')}{F(\s)}&\text{(grouping terms and (\ref{eqn:flow_to_policy}))}\\
    &=\frac{-1}{Z}\left(\sum_{(\s\ra \s')\in\gA}F(\s\ra \s')\log F(\s\ra \s')-\sum_{\s\in\gS\text{ nonterminal}}F(\s) \log F(\s)\right)&\text{(rearrangement and (\ref{eqn:fm_constraint}))}\\
    &=-\sum_{(\s,\s')\in\gA}\frac{F(\s')}{Z}\frac{F(\s\ra \s')}{F(\s')}\log\frac{F(\s\ra \s')}{F(\s')}+\underbrace{\frac{1}{Z}\left(Z\log Z-\sum_{x\text{ terminal}}F(x)\log F(x)\right)}_{\Delta}&\text{(rearrangement and (\ref{eqn:fm_constraint}))}\\
    &=-\sum_{\s'\in\gS\text{ noninitial}}\frac{F(\s')}{Z}\sum_{\s:(\s\ra \s')\in\gA}P_B(\s|\s')\log P_B(\s|\s')+\Delta&\text{(grouping terms and (\ref{eqn:flow_to_policy}))}\\
    &=\sum_{\s'\in\gS\text{ noninitial}}\P[\tau=(\dots\ra \s'\ra\dots)]\gH[P_B(\cdot|\s')]+\Delta&\text{(by definition of state flow)}\\
    &=\E_{(\s_0\ra\dots\ra \s_n)\sim P_F}\left[\sum_{t=1}^n\gH[P_B(\cdot|\s_t)]\right]+\Delta.&\text{(linearity of expectation)}
\end{align*}
Because we have assumed $F(x)=R(x)$ for all $x$ terminal (condition (\ref{eq:reward_matching})), and we have $Z=\sum_{\x\in\gX}F(\x)$ clearly from the definitions, the quantity $\Delta$ is independent of the choice of Markovian flow. Therefore, maximizing $\gH[F]$ is equivalent to maximizing the expected entropy of $P_B$.

Finally, notice that every complete trajectory $s_0\ra\dots\ra s_n$ passes through exactly one state $\s_d$ with $\gH[P_B^\circ(\cdot|\s_d)]=\log d$ for each $d=1,\dots,D$, and that 
\[\gH[P_B(\cdot|\s_d)]\leq\gH[P_B^\circ(\cdot|\s_d)]\] with equality if $P_B$ is uniform over the parents of $\s_d$. Thus $\gH[F]$ is maximized when $P_B(\cdot|\s)=P_B^\circ(\cdot|\s)$ for all $\s$.
\end{proof}

Then we prove the Proposition \ref{prop:mh_step_noneffective}:
\begin{proposition*}
\mhstepprop
\end{proposition*}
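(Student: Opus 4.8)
The plan is to show that the acceptance ratio in Eq.~(\ref{eq:mh_prob}) collapses to $1$ when the GFlowNet satisfies the reward matching constraint (\ref{eq:reward_matching}) with reward $R(\x)=e^{-\gE_\vphi(\x)}$. The key observation is that when the flow is Markovian and matches the reward, the forward and backward policies are both derived from a single trajectory flow $F$ (with total mass $Z=\sum_{\x\in\gX}F(\x)$), so the products of policy probabilities along the back-and-forth trajectory telescope against the flow values at intermediate states.

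First I would recall that for a complete trajectory the trajectory balance identity gives $Z\prod_{t=0}^{D-1}P_F(\s_{t+1}\mid\s_t)=R(\x)\prod_{t=0}^{D-1}P_B(\s_t\mid\s_{t+1})$ for any trajectory ending at $\x$. I want an analogous identity for the \emph{partial} (length-$K$) trajectories used in the proposal. Using the flow-to-policy relations (\ref{eqn:flow_to_policy}), namely $P_F(\s'\mid\s)=F(\s\ra\s')/F(\s)$ and $P_B(\s\mid\s')=F(\s\ra\s')/F(\s')$, the ratio $P_F(\s_{t+1}\mid\s_t)/P_B(\s_t\mid\s_{t+1})=F(\s_{t+1})/F(\s_t)$. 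Hence along the backward segment $\tau=(\x=\s_D\dra\dots\dra\s_{D-K})$ the product $P_B(\tau\mid\x)/P_F$-counterpart telescopes to $F(\s_{D-K})/F(\s_D)=F(\s_{D-K})/F(\x)$, and similarly along the forward segment $\tau'=(\s_{D-K}\ra\dots\ra\s'_D=\x')$ with $\s'_{D-K}=\s_{D-K}$ the product telescopes to $F(\x')/F(\s_{D-K})$.

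Then I would assemble the ratio appearing inside the $\min$ in (\ref{eq:mh_prob}):
\[
\frac{e^{-\gE_\vphi(\x')}}{e^{-\gE_\vphi(\x)}}\cdot\frac{P_B(\tau\mid\x)\,P_F(\tau')}{P_B(\tau'\mid\x')\,P_F(\tau)}
=\frac{R(\x')}{R(\x)}\cdot\frac{P_B(\tau\mid\x)}{P_F(\tau)}\cdot\frac{P_F(\tau')}{P_B(\tau'\mid\x')}.
\]
From the telescoping, $P_B(\tau\mid\x)/P_F(\tau)=\bigl(F(\s_{D-K})/F(\x)\bigr)\cdot\bigl(\text{terminating factor}\bigr)$ — one must be slightly careful about the terminating transition/flow bookkeeping at $\x$ and $\x'$, but since $F(\x)=R(\x)$ and $F(\x')=R(\x')$ by (\ref{eq:reward_matching}), all flow values cancel: the $F(\s_{D-K})$ from $\tau$ cancels the $F(\s_{D-K})$ from $\tau'$, and the $R(\x')/R(\x)$ prefactor cancels the remaining $F(\x)/F(\x')=R(\x)/R(\x')$. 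The product is therefore exactly $1$, so $A_{\tau,\tau'}(\x\to\x')=\min[1,1]=1$ and the proposal is always accepted.

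The main obstacle is bookkeeping the normalization and the terminating transitions correctly: the paper's $P_F(\tau),P_B(\tau\mid\x)$ for the partial trajectories are defined as bare products of policy factors without the $Z$ or $R$ that appear in the full-trajectory version, so I need to be precise about which flow identity ($F(\s\ra\s')=F(\s)P_F(\s'\mid\s)=F(\s')P_B(\s\mid\s')$) I invoke at the endpoints $\s_{D}=\x$ and $\s'_D=\x'$ versus the interior states. Once that is pinned down, the argument is a short telescoping cancellation; alternatively, one could phrase the whole thing via detailed balance, observing that $R(\x)\,P_B(\tau\mid\x)P_F(\tau')=R(\x')\,P_B(\tau'\mid\x')P_F(\tau)$ holds identically under reward matching, which is exactly the statement that the MH ratio equals $1$.
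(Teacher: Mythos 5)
Your proof is correct in substance and takes essentially the same route as the paper's: everything reduces to the detailed-balance identity $R(\x)\,P_B(\tau|\x)\,P_F(\tau') = R(\x')\,P_B(\tau'|\x')\,P_F(\tau)$. The paper imports this identity by citing Eq.~(21) of \citet{tbarxiv}, whereas you derive it from scratch using the flow-to-policy relations $P_F(\s'|\s)=F(\s\ra\s')/F(\s)$ and $P_B(\s|\s')=F(\s\ra\s')/F(\s')$ together with a telescoping product and $F(\x)=R(\x)$ from (\ref{eq:reward_matching}); that makes the argument self-contained, and your closing observation --- that the identity is precisely the statement that the back-and-forth proposal already satisfies detailed balance with respect to $e^{-\gE_{\vphi}}$ --- is the cleanest way to phrase the conclusion.

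The one concrete problem is the final cancellation, exactly at the bookkeeping step you flagged as the main obstacle. Your own telescoping gives $\frac{P_B(\tau|\x)}{P_F(\tau)}\cdot\frac{P_F(\tau')}{P_B(\tau'|\x')} = \frac{F(\s_{D-K})}{F(\x)}\cdot\frac{F(\x')}{F(\s_{D-K})} = \frac{F(\x')}{F(\x)} = \frac{R(\x')}{R(\x)}$, not $R(\x)/R(\x')$ as you assert when you ``cancel.'' Multiplying by the prefactor $e^{-\gE_{\vphi}(\x')}/e^{-\gE_{\vphi}(\x)} = R(\x')/R(\x)$ therefore yields $(R(\x')/R(\x))^2$, not $1$, for the quantity inside the $\min$ in (\ref{eq:mh_prob}) \emph{as literally printed}. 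What this exposes is that the proposal ratio in (\ref{eq:mh_prob}) has the forward-proposal probability $P_B(\tau|\x)P_F(\tau')$ in the numerator, i.e.\ upside-down relative to the standard Metropolis--Hastings rule $\min\bigl[1,\ \pi(\x')q(\x|\x')/(\pi(\x)q(\x'|\x))\bigr]$, which places the reverse-proposal probability $P_B(\tau'|\x')P_F(\tau)$ on top. With the conventional orientation your identity gives $\frac{R(\x')}{R(\x)}\cdot\frac{R(\x)}{R(\x')}=1$ immediately, which is what the proposition intends (and what the paper's one-line proof implicitly uses). So: your derivation of the key identity is right; fix the direction of the last ratio rather than forcing the cancellation, and state the acceptance probability in the standard MH orientation.
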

\begin{proof}
Recall that the acceptance probability for a move from $\x$ to $\x'$ along a reverse trajectory $\tau$ and a forward trajectory $\tau'$ is given by
\[
A_{\tau,\tau'}(\x\to\x')\triangleq\min\left(1, e^{\gE_{\vphi}(\x)-\gE_{\vphi}(\x')}\frac{P_B(\tau|\x)P_F(\tau')}{P_B(\tau'|\x')P_F(\tau)}\right).
\]
According to Eq. (21) 
of \citet{tbarxiv}, for a GFlowNet satisfying the reward matching constraint (\ref{eq:reward_matching}) with respect to a reward function $R$, we have $R(\x)P_B(\tau|\x)P_F(\tau')=R(\x')P_B(\tau'|\x')P_F(\tau)$. Elementary algebraic manipulation, and substituting $R(\x)=e^{-\gE(\x)}$, yields that $A_{\tau,\tau'}(\x\to\x')=1$.
\end{proof}

\section{More about experiments}
\subsection{Ising models}\label{app:appendix_ising}

For the architecture of the GFlowNet, we use a four-layer MLP with 256 units in each hidden layer. 
We used Adam optimizer and batch size 256 to train
EB-GFN
and the baselines. 
For baselines, we use $100$ steps of MCMC computation, \ie, PCD-100.
For all methods, we stopped the training when RMSE between ground truth and learnt $J$ reached its minimal value. 
We report the best result for each setting with a same hyperparameter searching protocol for all three methods.
We search the learning rate of energy function in $\{1\times 10^{-4}, 5\times 10^{-4}
\}$, the learning rate of GFlowNet in $\{1\times 10^{-3}, 1\times 10^{-2}\}$,
the coefficient $\ell_1$ regularization of $J$ in $\{0.01, 0.02, 0.05, 0.1\}$.
We keep other hyperparameters to be consistent with \citet{Grathwohl2021OopsIT}.
Notice that the benefit of EB-GFN is most clear when $\sigma<0$.
This matches our hypothesis that GFlowNet has a good inductive bias for modeling multimode distributions (when $\sigma>0$, the lattice Ising model only has two modes, namely when the configuration is all $+1$ or all $-1$).

\subsection{Synthetic tasks}
\label{sec:appendix_synthetic}

\begin{figure}[t]
\begin{minipage}{\textwidth}
    \begin{minipage}{\samplempwid\textwidth}
    \centering
    \includegraphics[width=\textwidth]{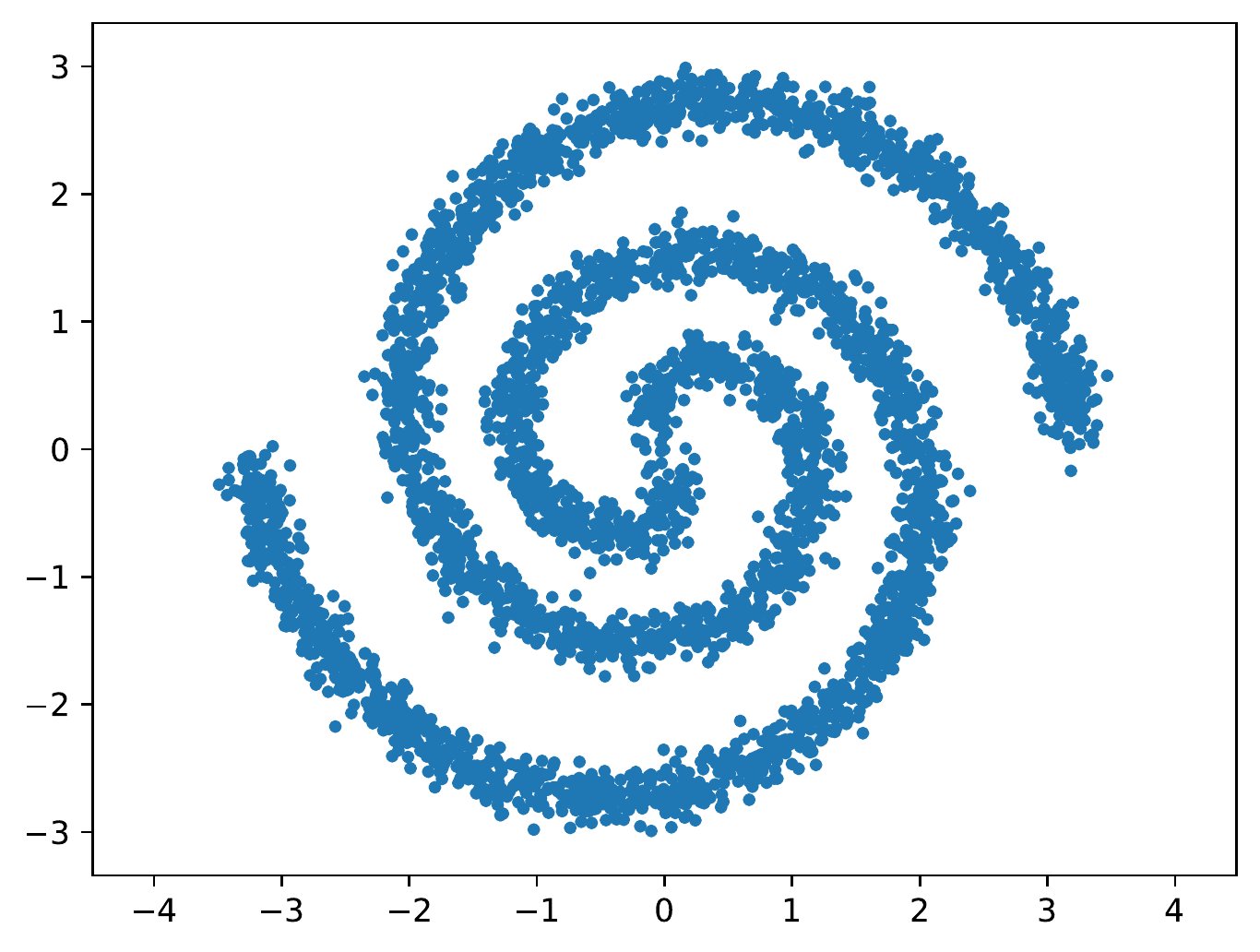}
    2spirals
    \end{minipage}
\hspace{-\samplehinterval}
    \begin{minipage}{\samplempwid\textwidth}
    \centering
    \includegraphics[width=\textwidth]{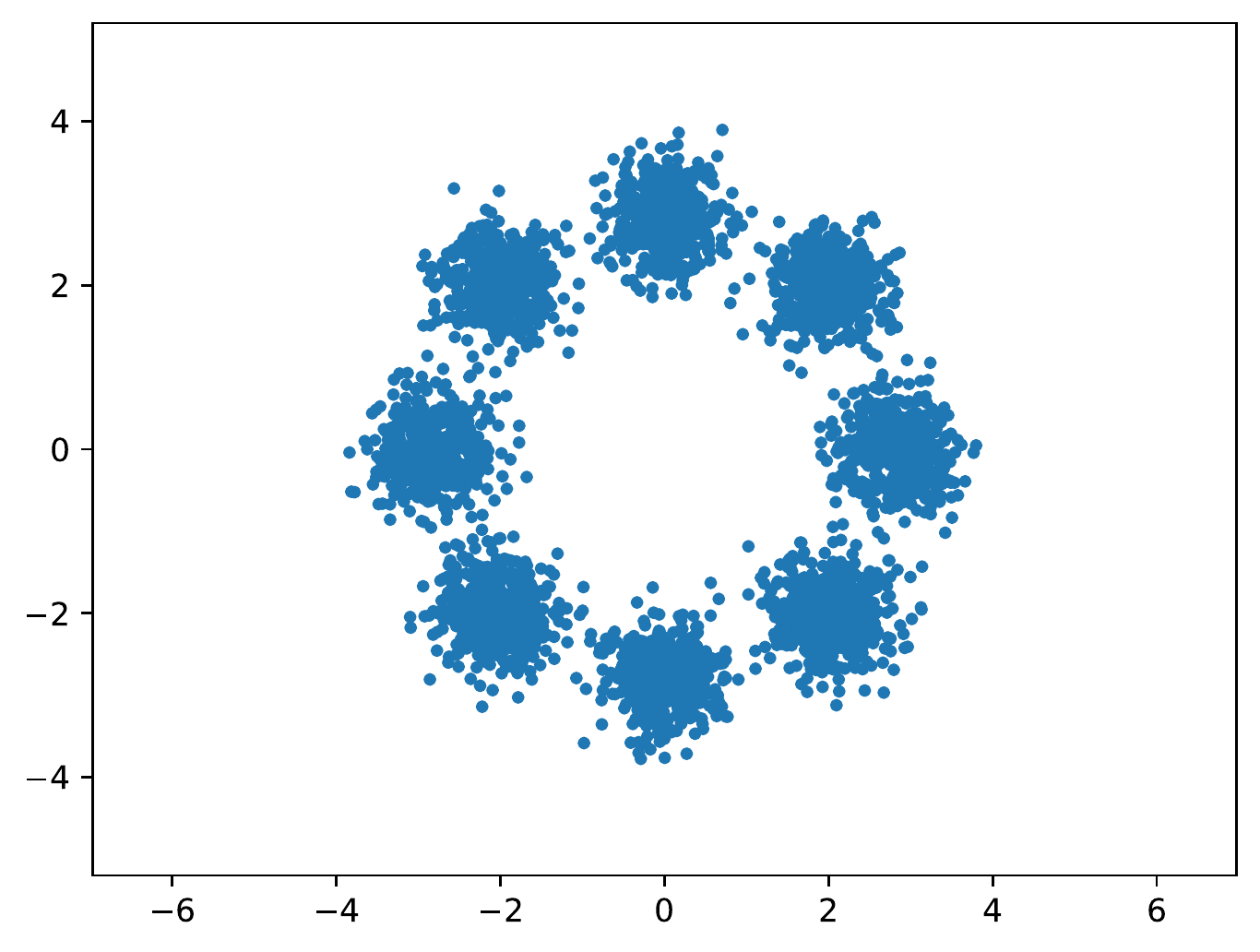}
    8gaussians
    \end{minipage}
\hspace{-\samplehinterval}
    \begin{minipage}{\samplempwid\textwidth}
    \centering
    \includegraphics[width=\textwidth]{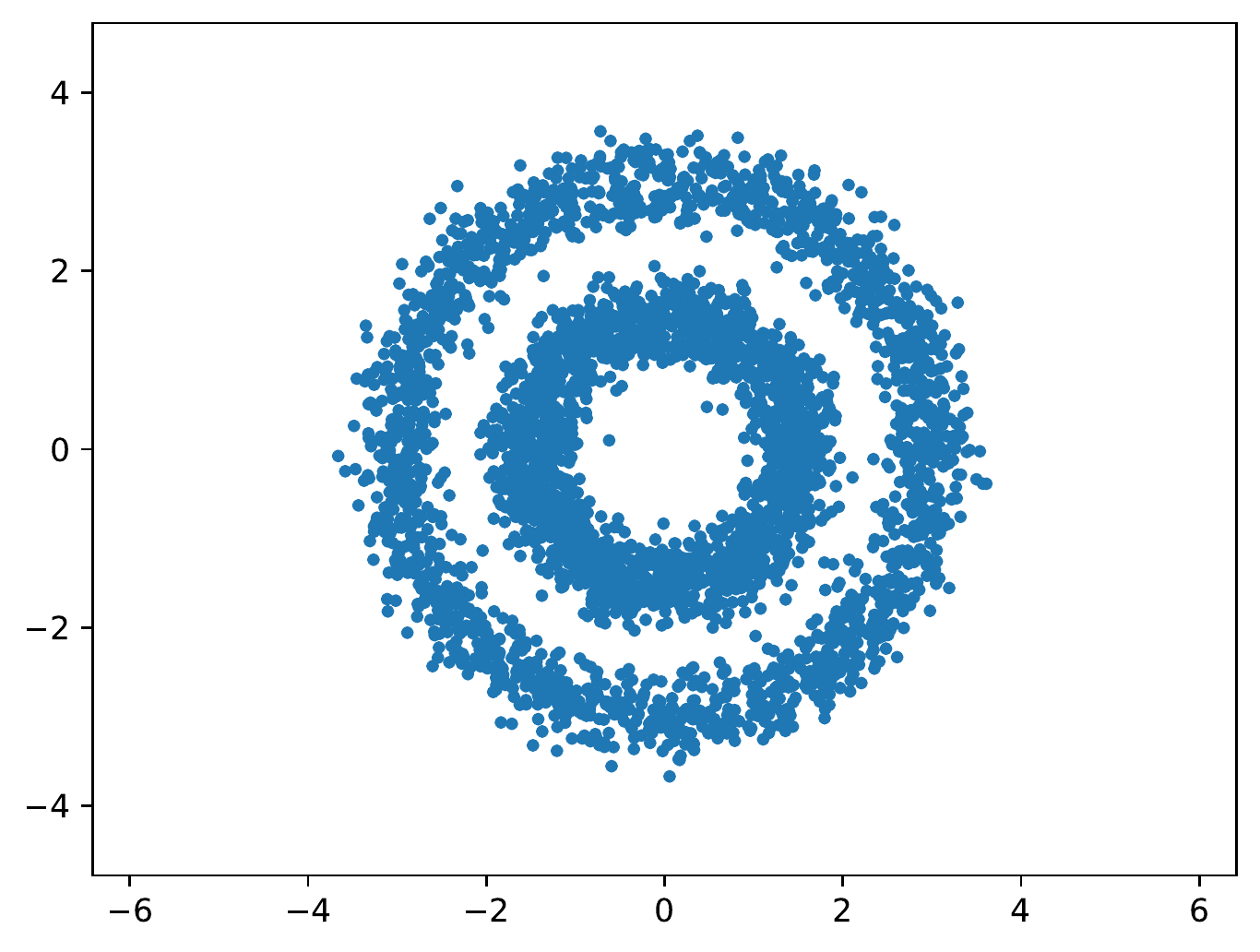}
    circles
    \end{minipage}
\hspace{-\samplehinterval}
    \begin{minipage}{\samplempwid\textwidth}
    \centering
    \includegraphics[width=\textwidth]{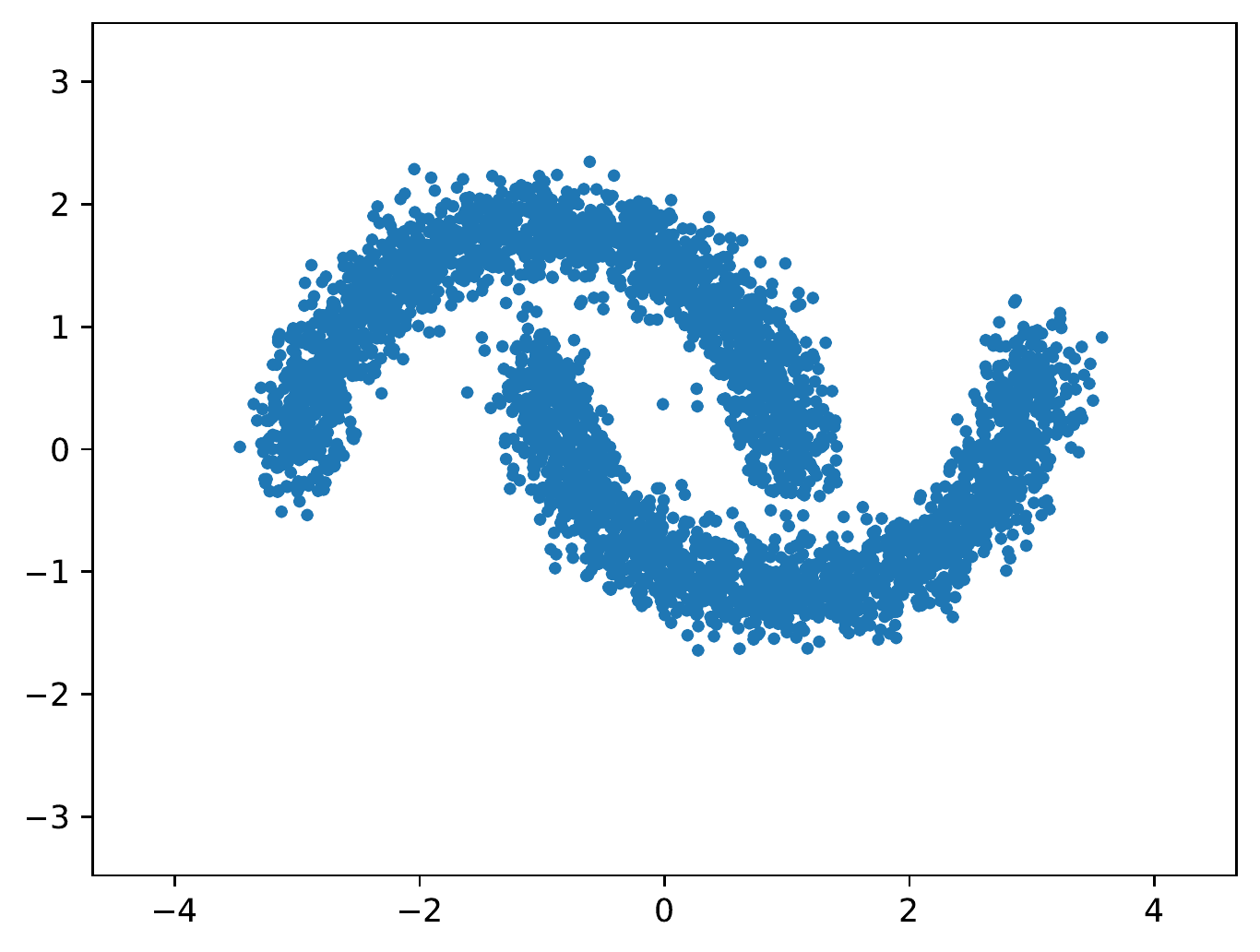}
    moons
    \end{minipage}
\hspace{-\samplehinterval}
    \begin{minipage}{\samplempwid\textwidth}
    \centering
    \includegraphics[width=\textwidth]{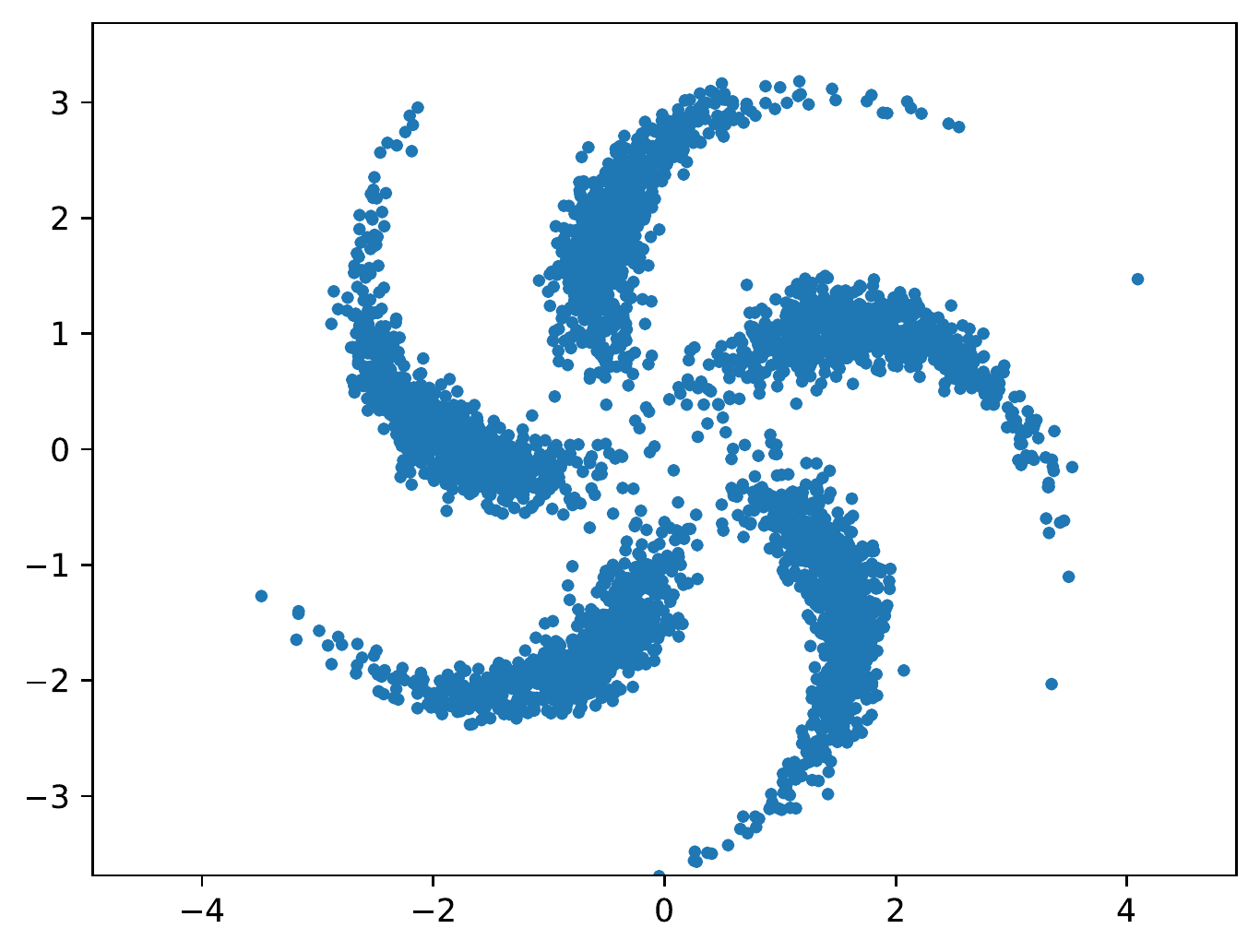}
    pinwheel
    \end{minipage}
\hspace{-\samplehinterval}
    \begin{minipage}{\samplempwid\textwidth}
    \centering
    \includegraphics[width=\textwidth]{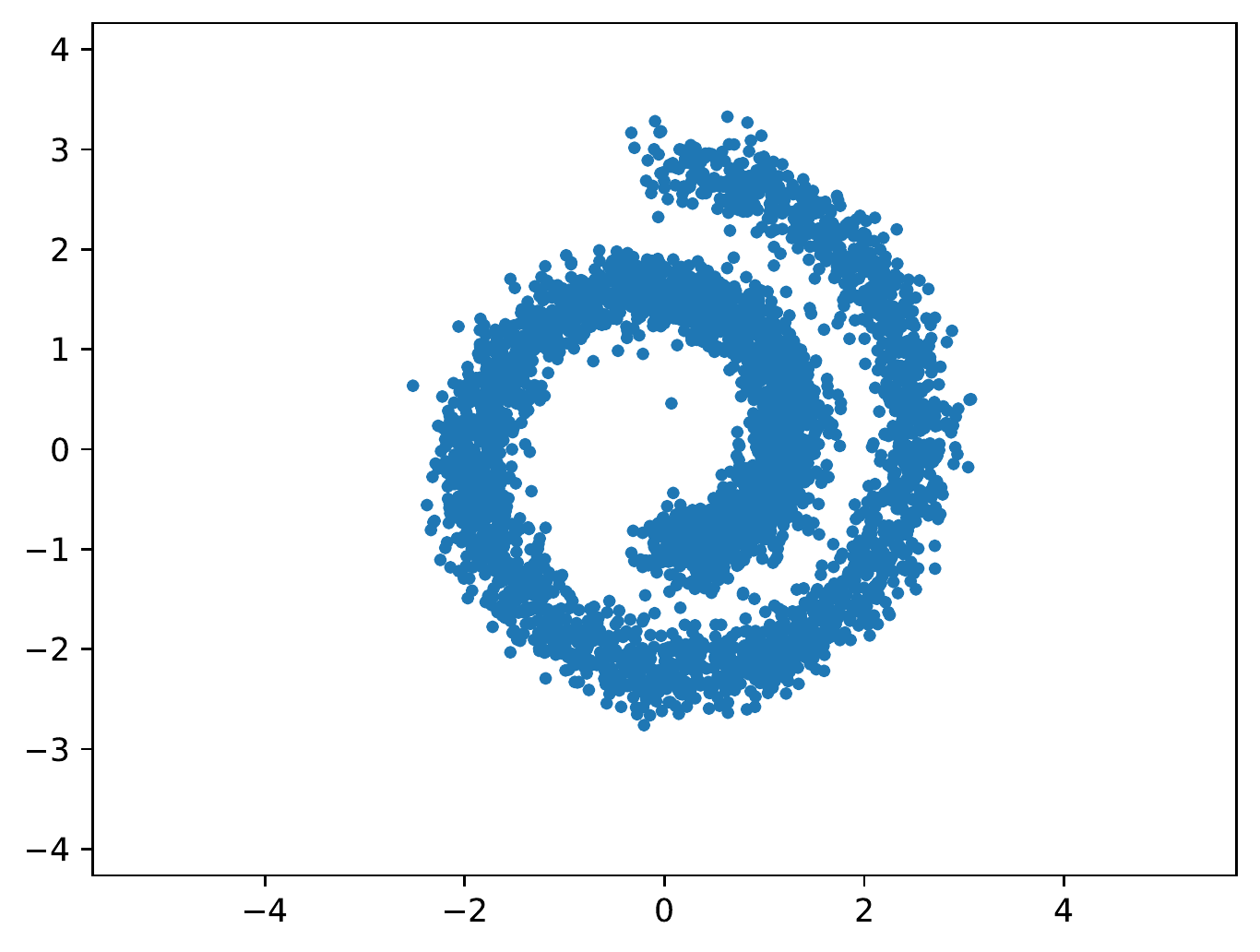}
    swissroll
    \end{minipage}
\hspace{-\samplehinterval}
    \begin{minipage}{\samplempwid\textwidth}
    \centering
    \includegraphics[width=\textwidth]{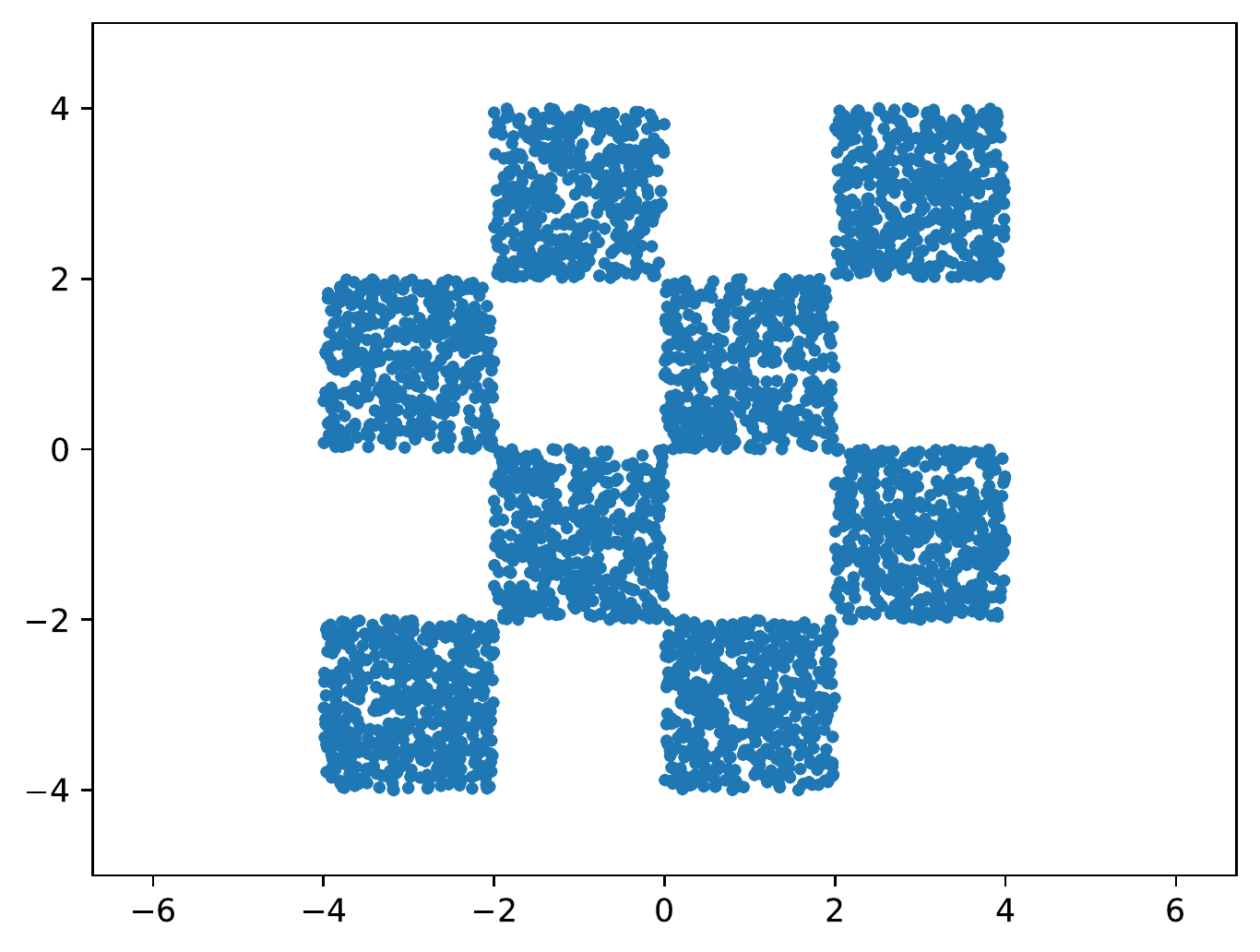}
    checkerboard
    \end{minipage}
\hspace{-\samplehinterval}
\end{minipage}
\caption{Visualization of samples for synthetic problems from ground truth.}
\label{fig:true_synthetic_samples}
\end{figure}

\begin{figure}[t]
\centering
\begin{minipage}{\textwidth}
\centerline{PCD}
    \begin{minipage}{\mpwid\textwidth}
    \centering
    \includegraphics[width=\textwidth]{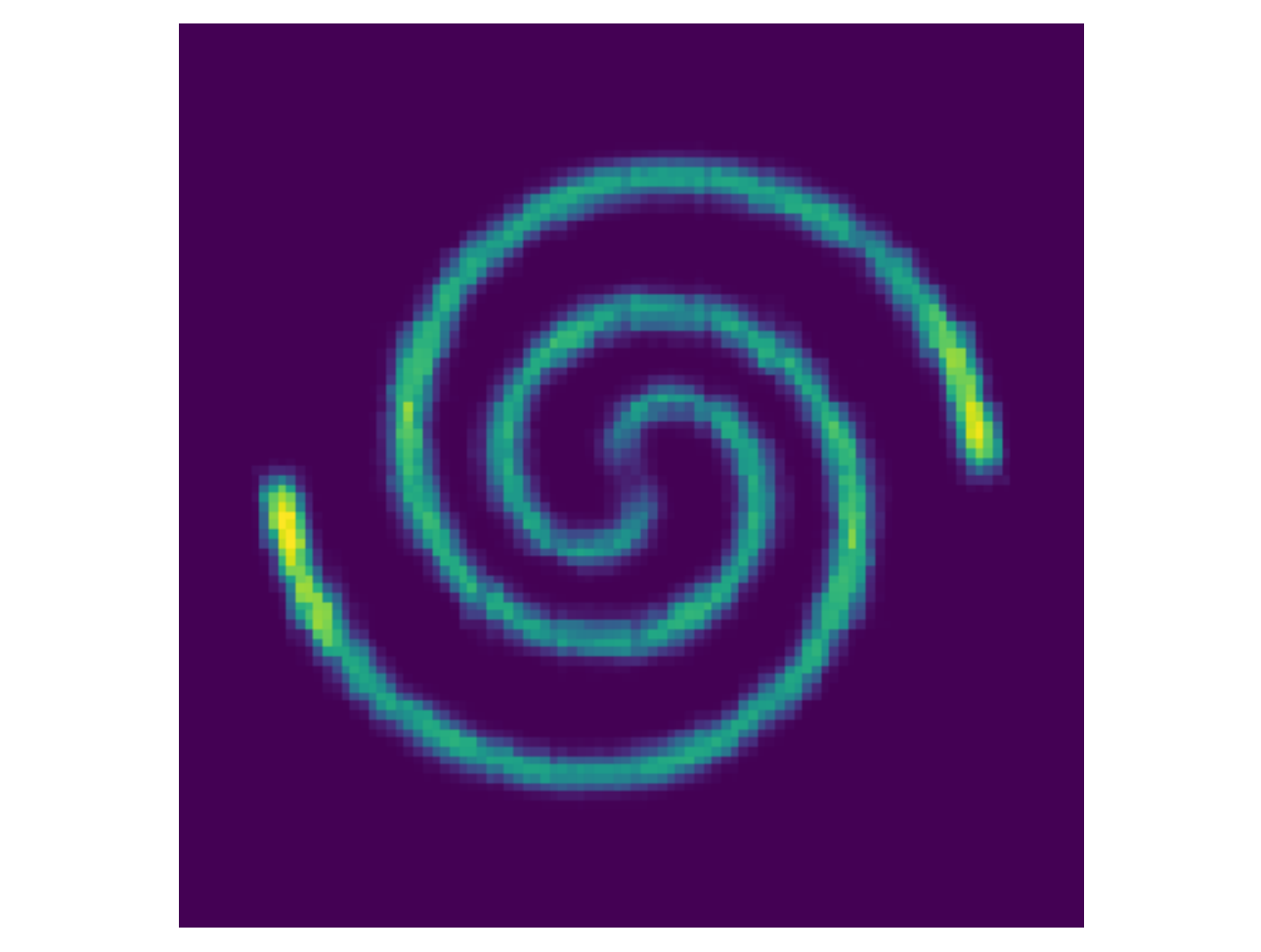}
    \end{minipage}
\hspace{-\hinterval}
    \begin{minipage}{\mpwid\textwidth}
    \centering
    \includegraphics[width=\textwidth]{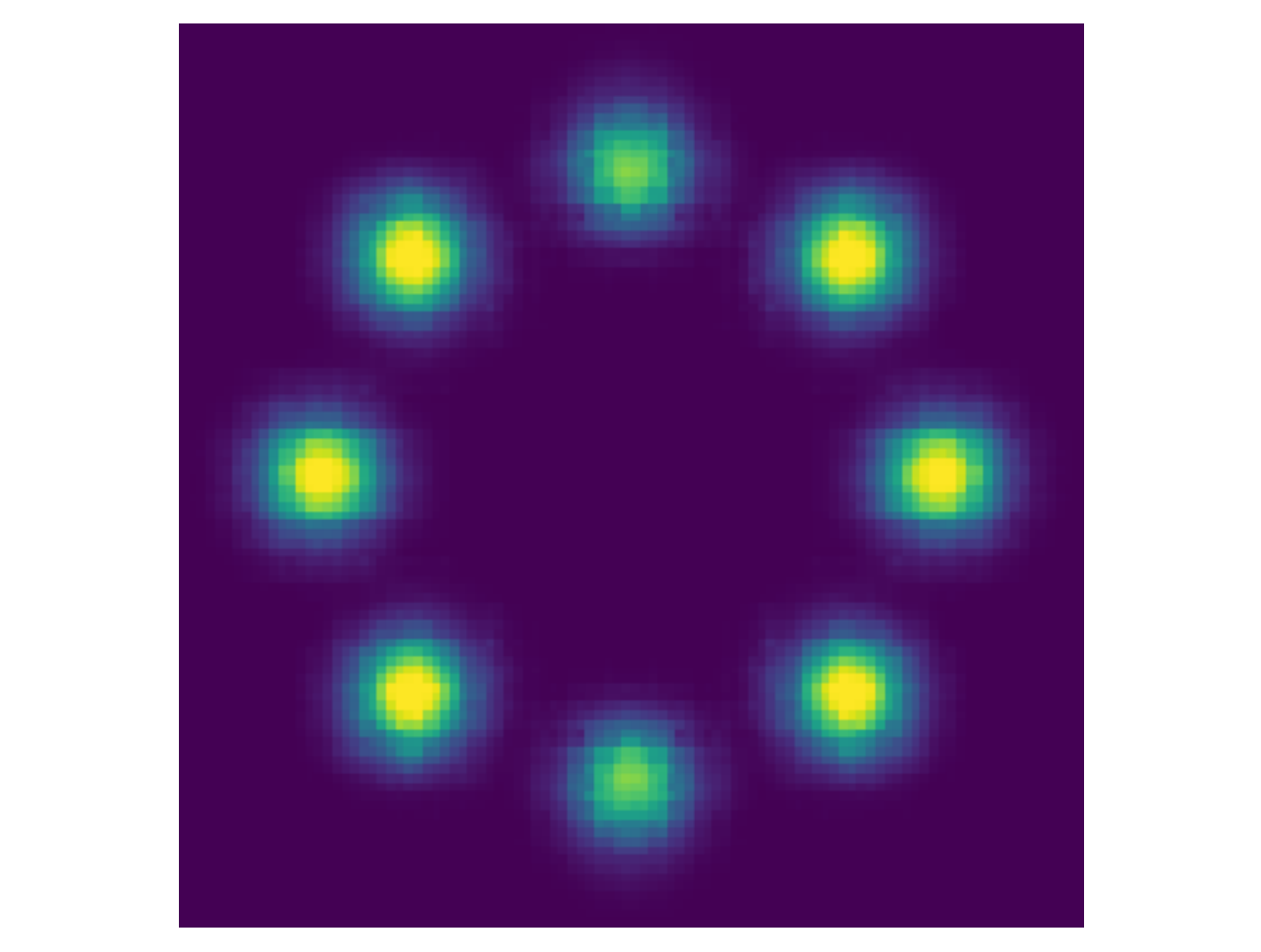}
    \end{minipage}
\hspace{-\hinterval}
    \begin{minipage}{\mpwid\textwidth}
    \centering
    \includegraphics[width=\textwidth]{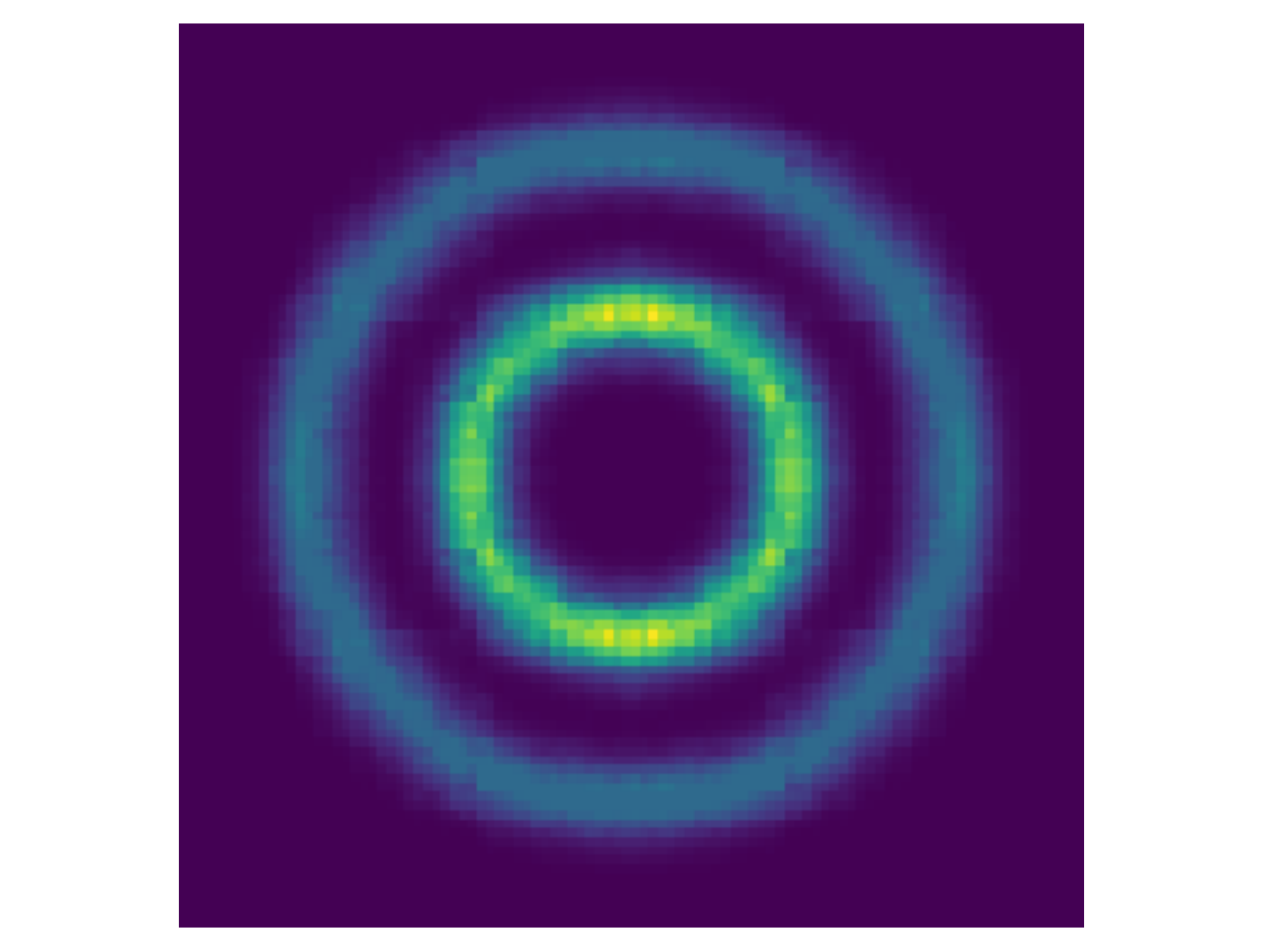}
    \end{minipage}
\hspace{-\hinterval}
    \begin{minipage}{\mpwid\textwidth}
    \centering
    \includegraphics[width=\textwidth]{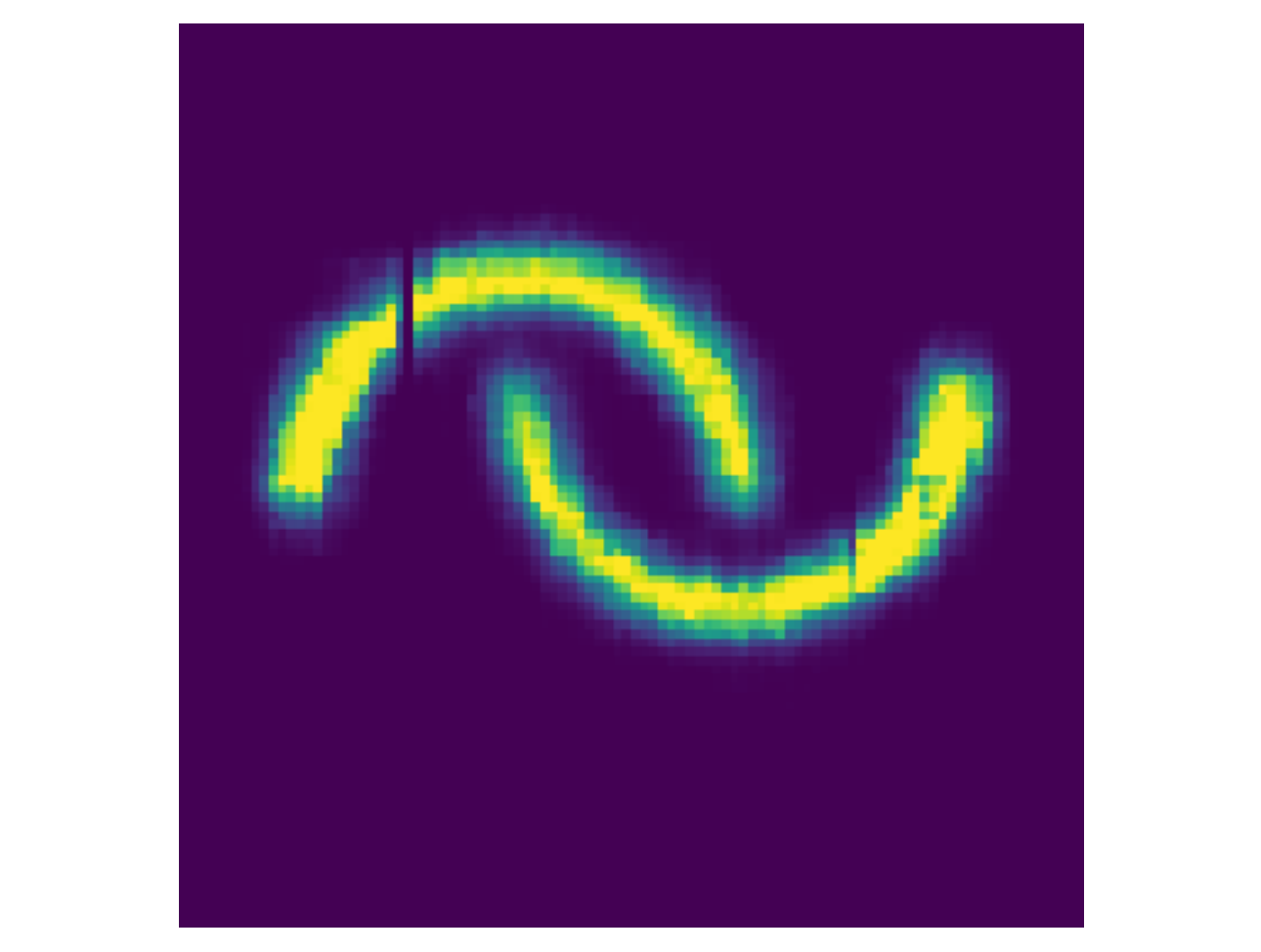}
    \end{minipage}
\hspace{-\hinterval}
    \begin{minipage}{\mpwid\textwidth}
    \centering
    \includegraphics[width=\textwidth]{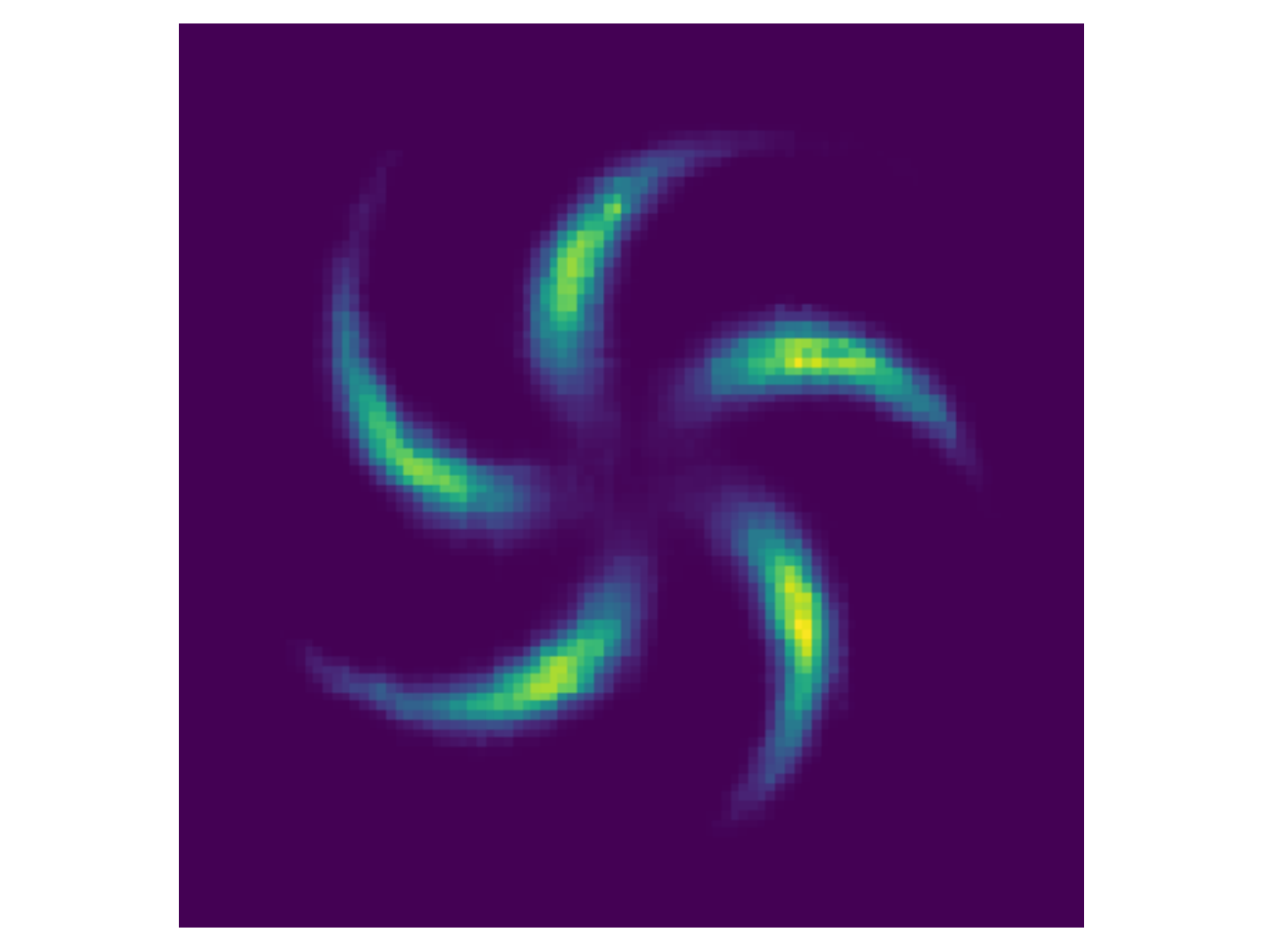}
    \end{minipage}
\hspace{-\hinterval}
    \begin{minipage}{\mpwid\textwidth}
    \centering
    \includegraphics[width=\textwidth]{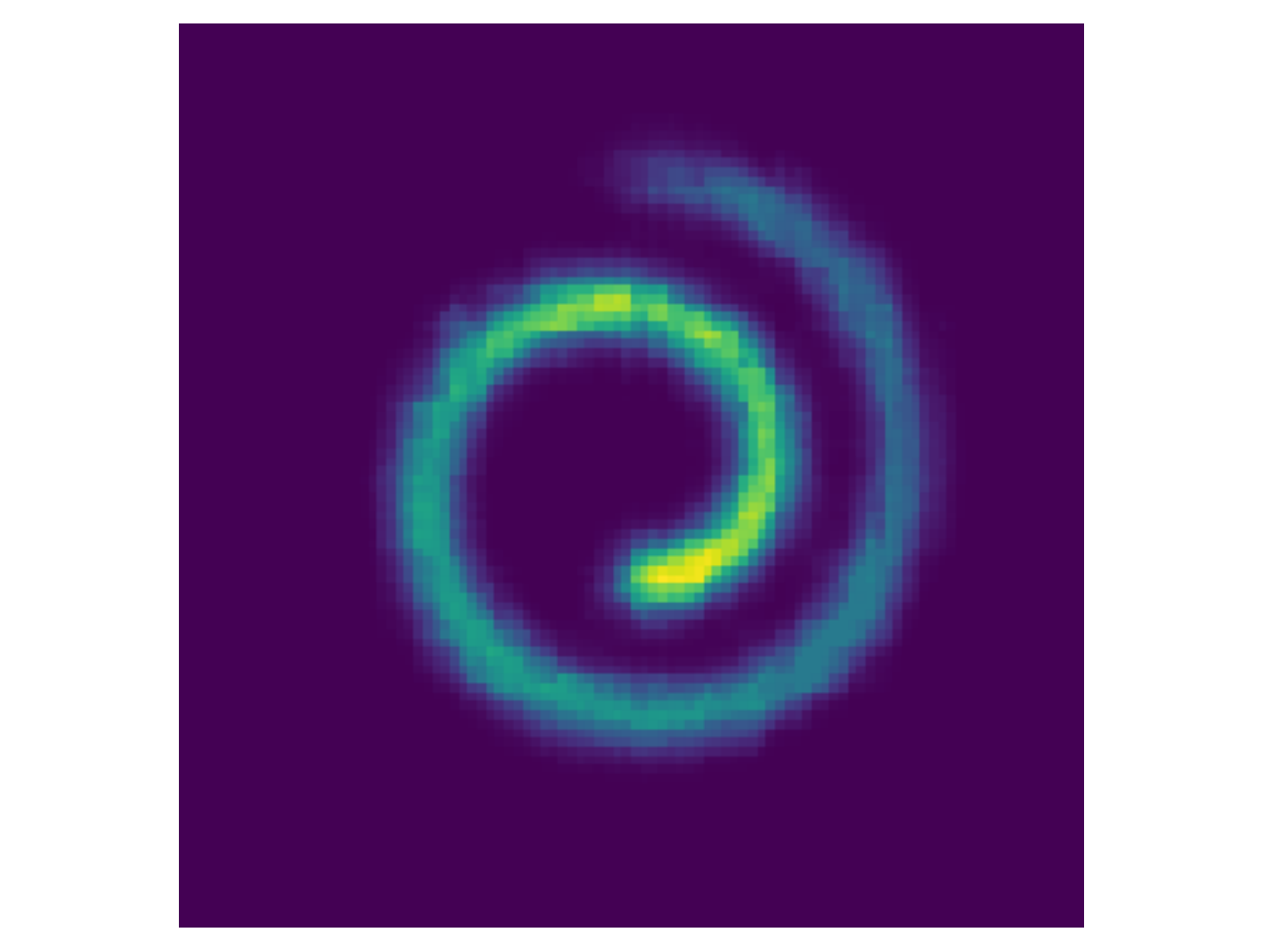}
    \end{minipage}
\hspace{-\hinterval}
    \begin{minipage}{\mpwid\textwidth}
    \centering
    \includegraphics[width=\textwidth]{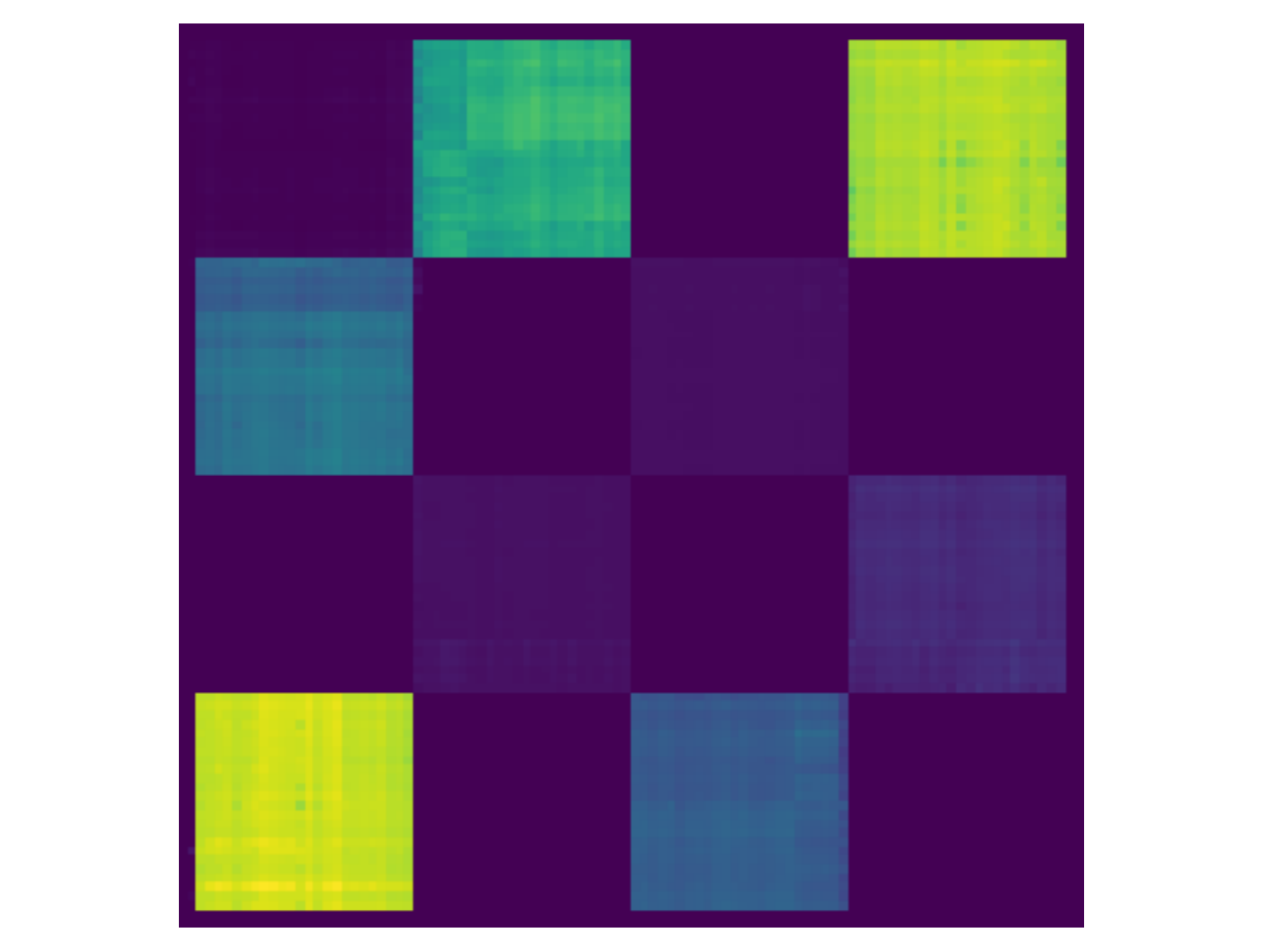}
    \end{minipage}
\hspace{-\hinterval}
\end{minipage}
\begin{minipage}{\textwidth}
\centerline{ALOE}
    \begin{minipage}{\mpwid\textwidth}
    \centering
    \includegraphics[width=\textwidth]{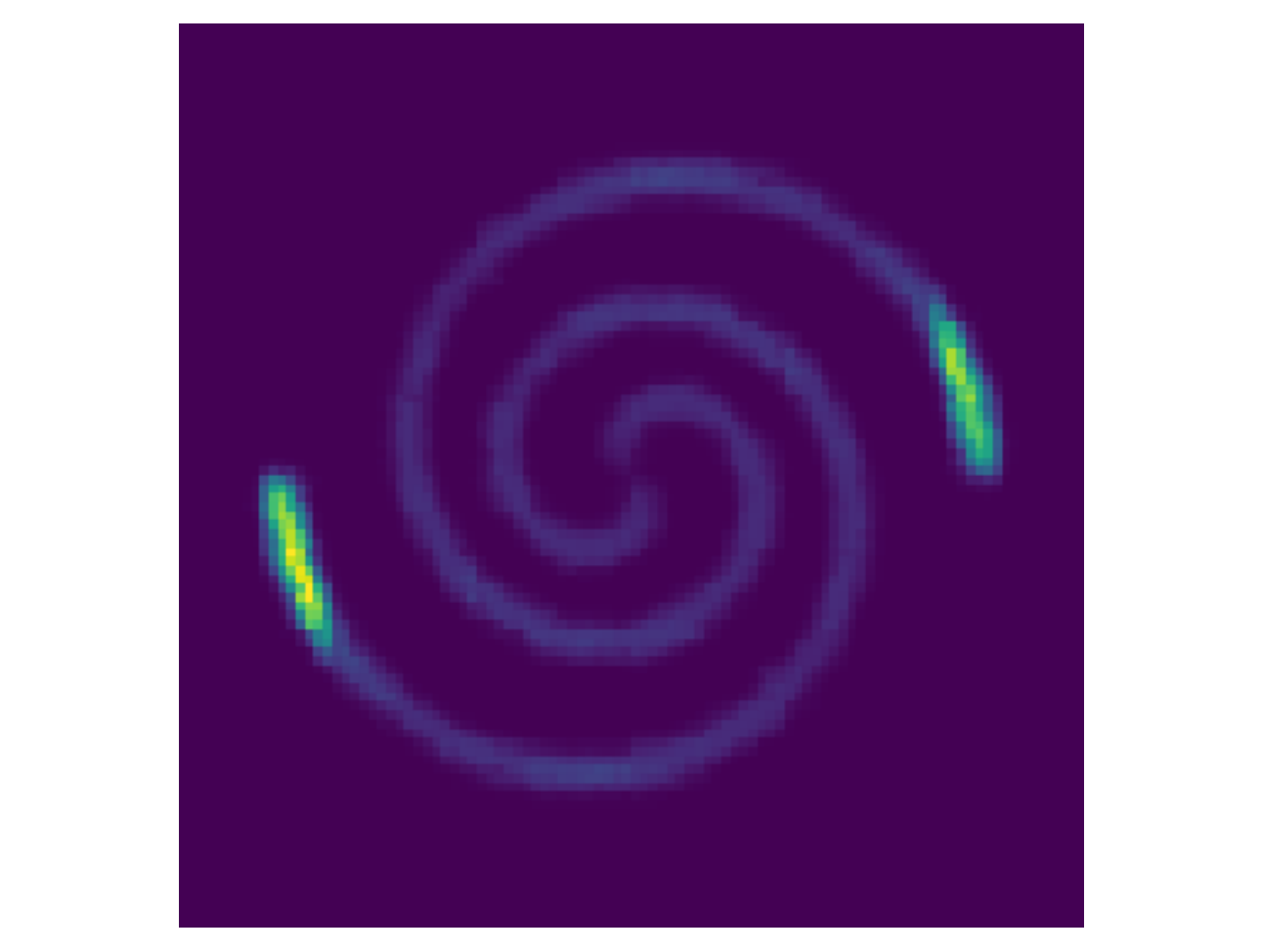}
    \end{minipage}
\hspace{-\hinterval}
    \begin{minipage}{\mpwid\textwidth}
    \centering
    \includegraphics[width=\textwidth]{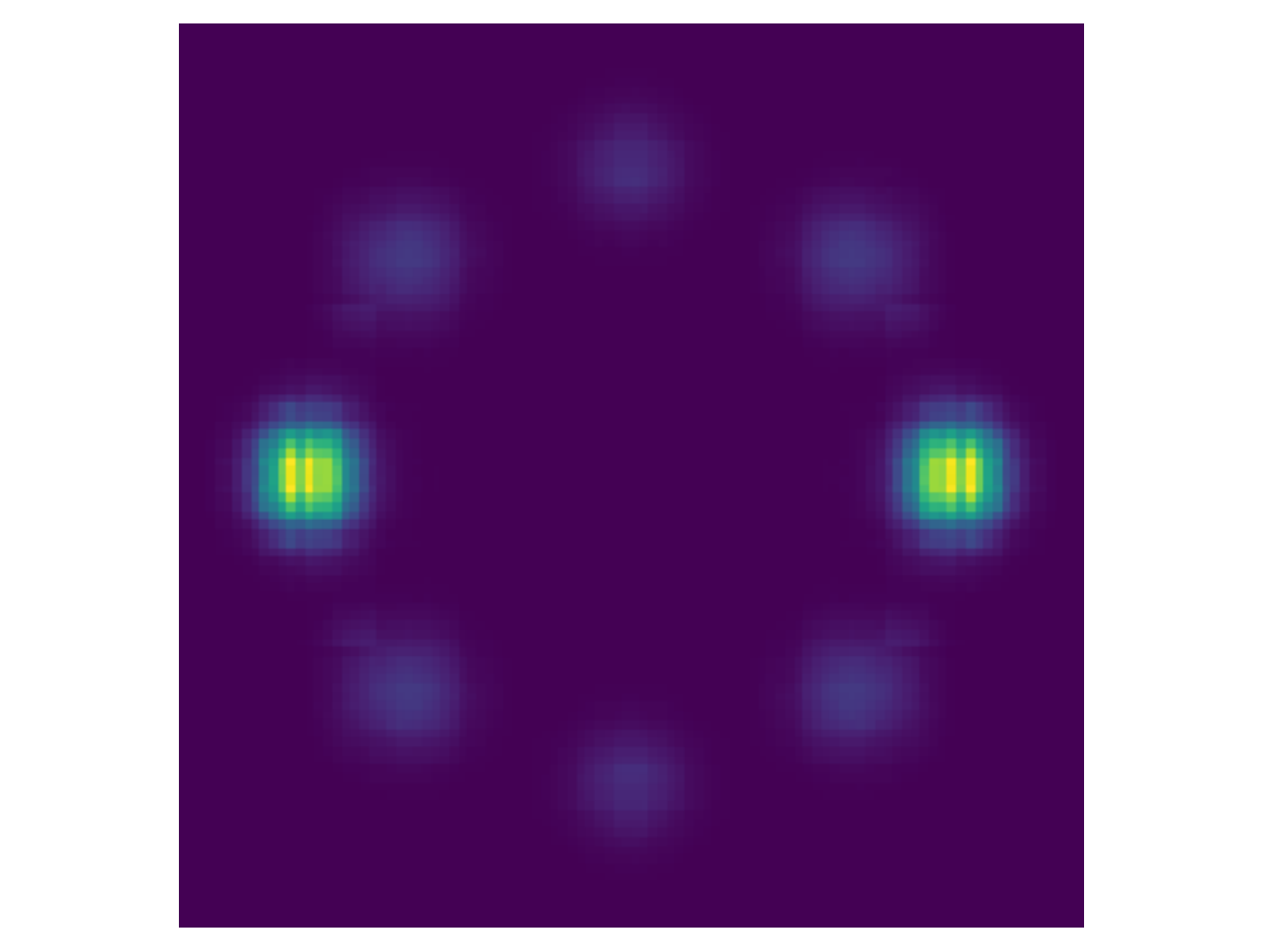}
    \end{minipage}
\hspace{-\hinterval}
    \begin{minipage}{\mpwid\textwidth}
    \centering
    \includegraphics[width=\textwidth]{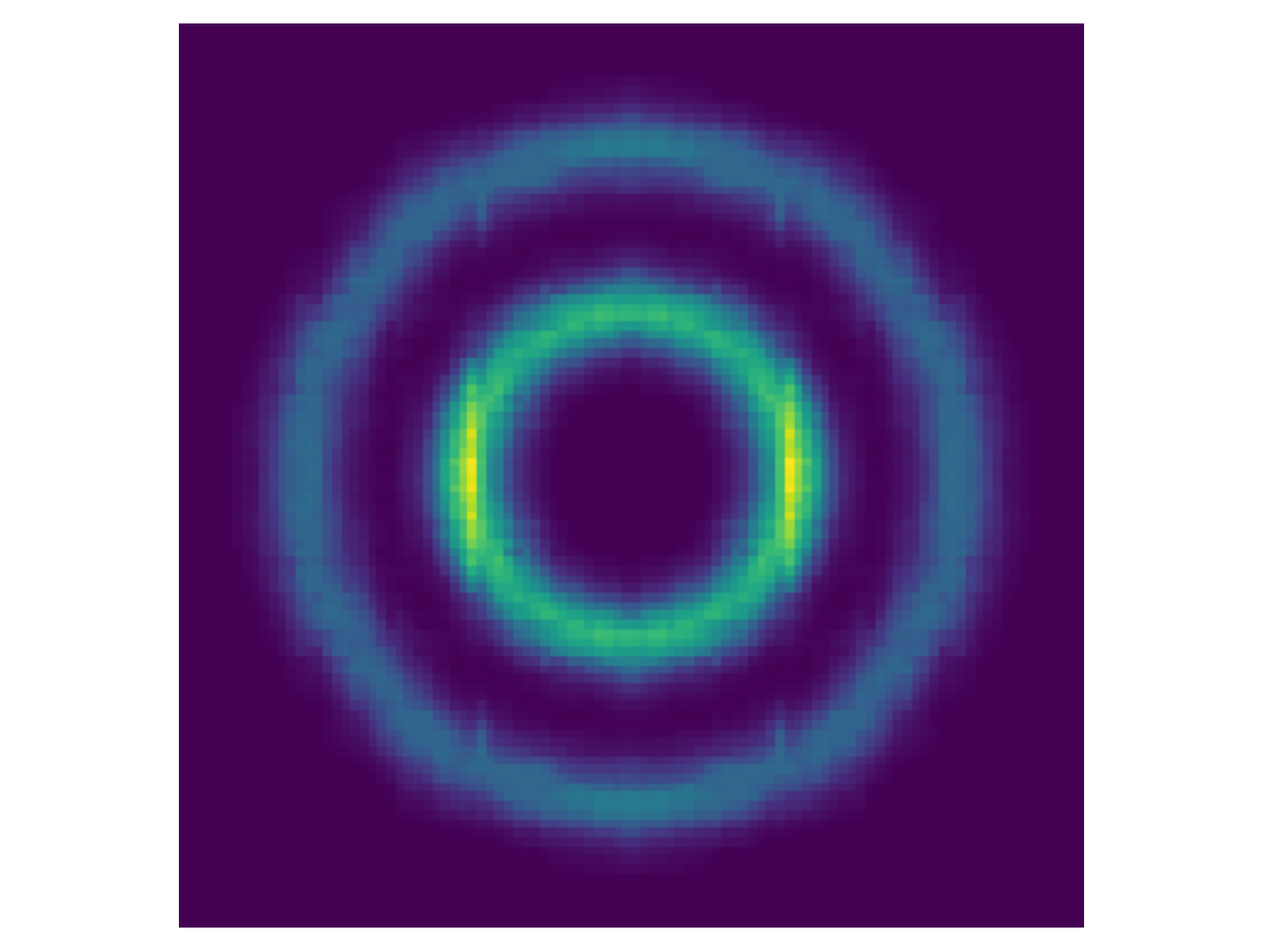}
    \end{minipage}
\hspace{-\hinterval}
    \begin{minipage}{\mpwid\textwidth}
    \centering
    \includegraphics[width=\textwidth]{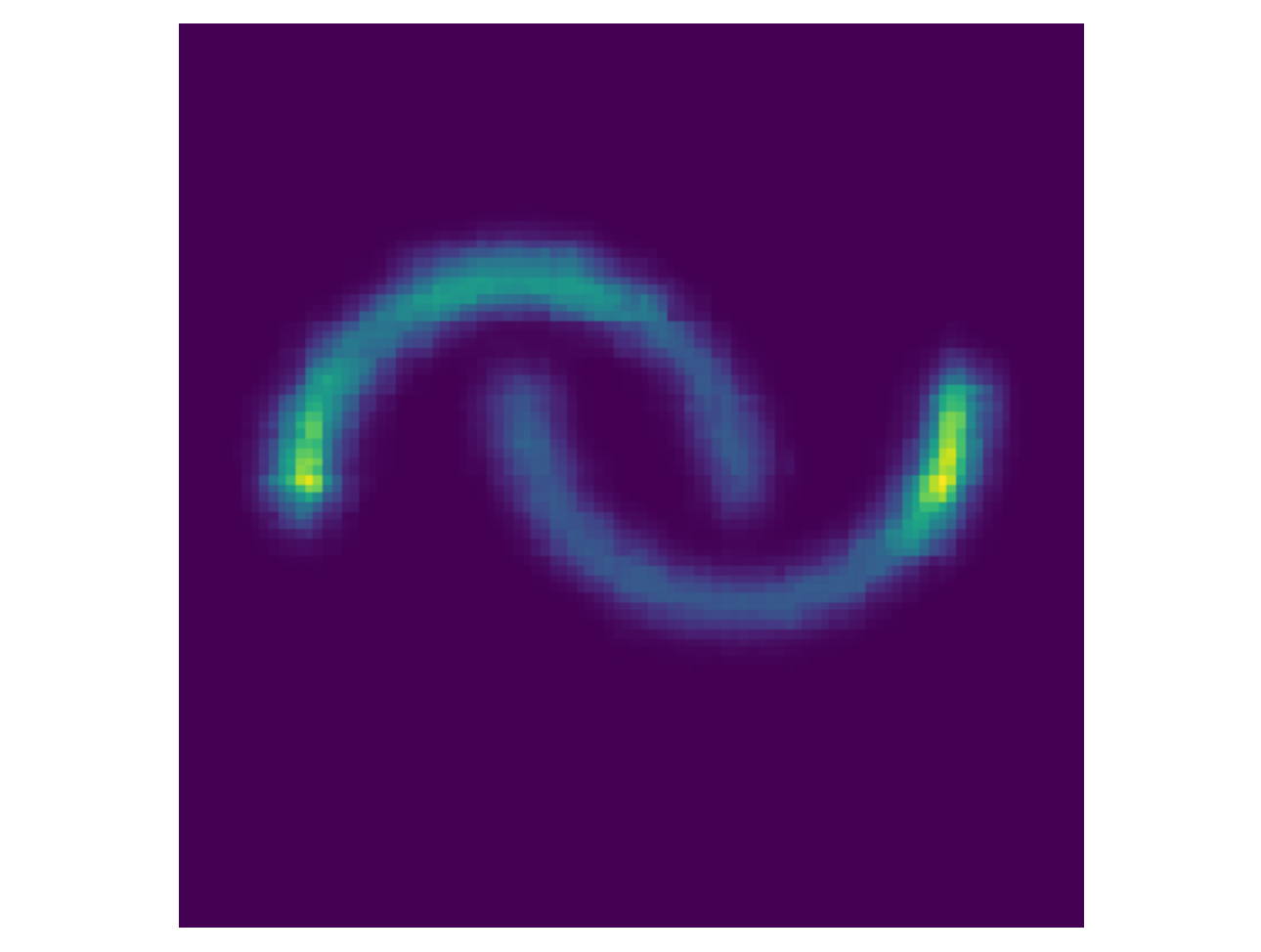}
    \end{minipage}
\hspace{-\hinterval}
    \begin{minipage}{\mpwid\textwidth}
    \centering
    \includegraphics[width=\textwidth]{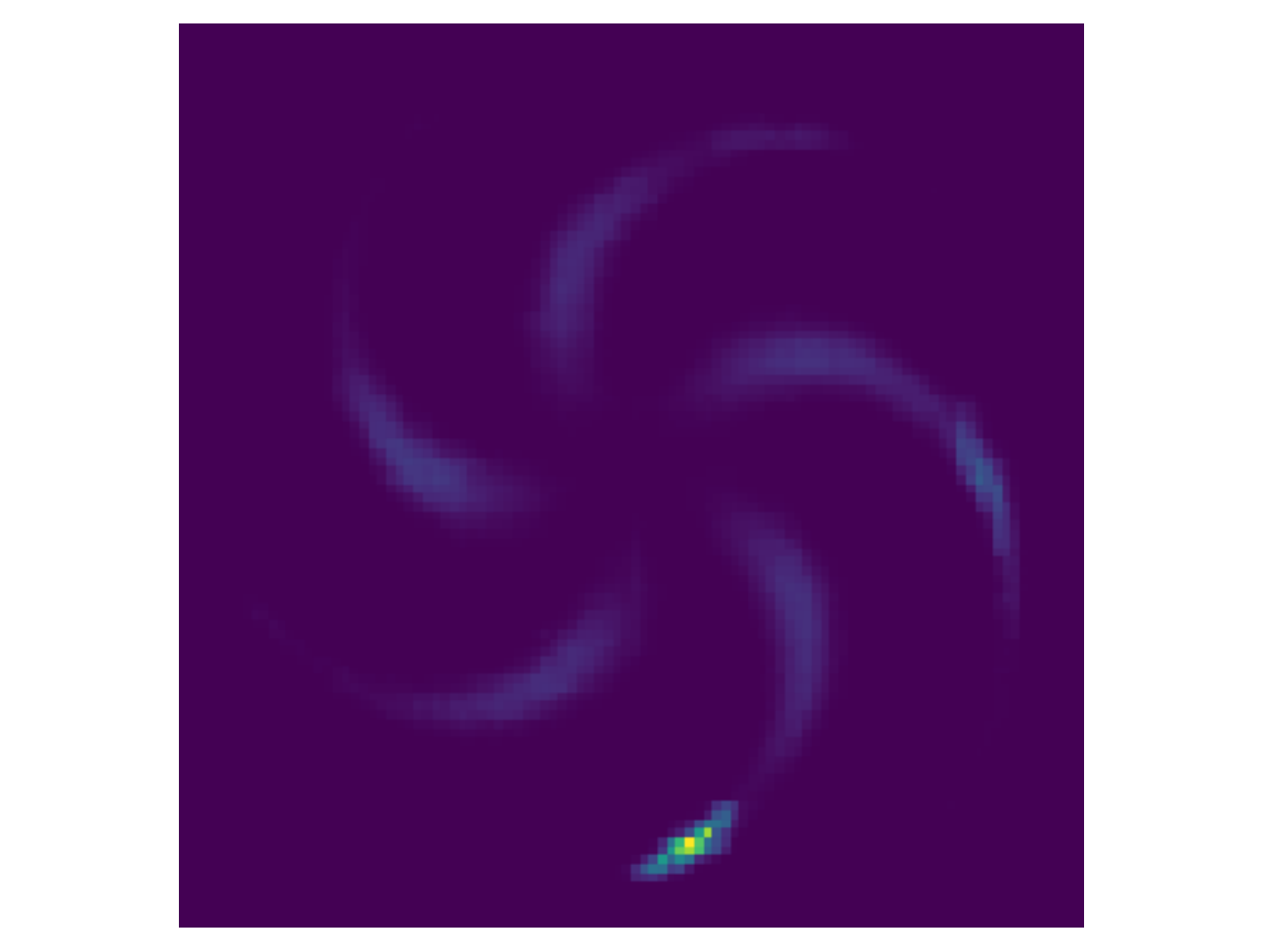}
    \end{minipage}
\hspace{-\hinterval}
    \begin{minipage}{\mpwid\textwidth}
    \centering
    \includegraphics[width=\textwidth]{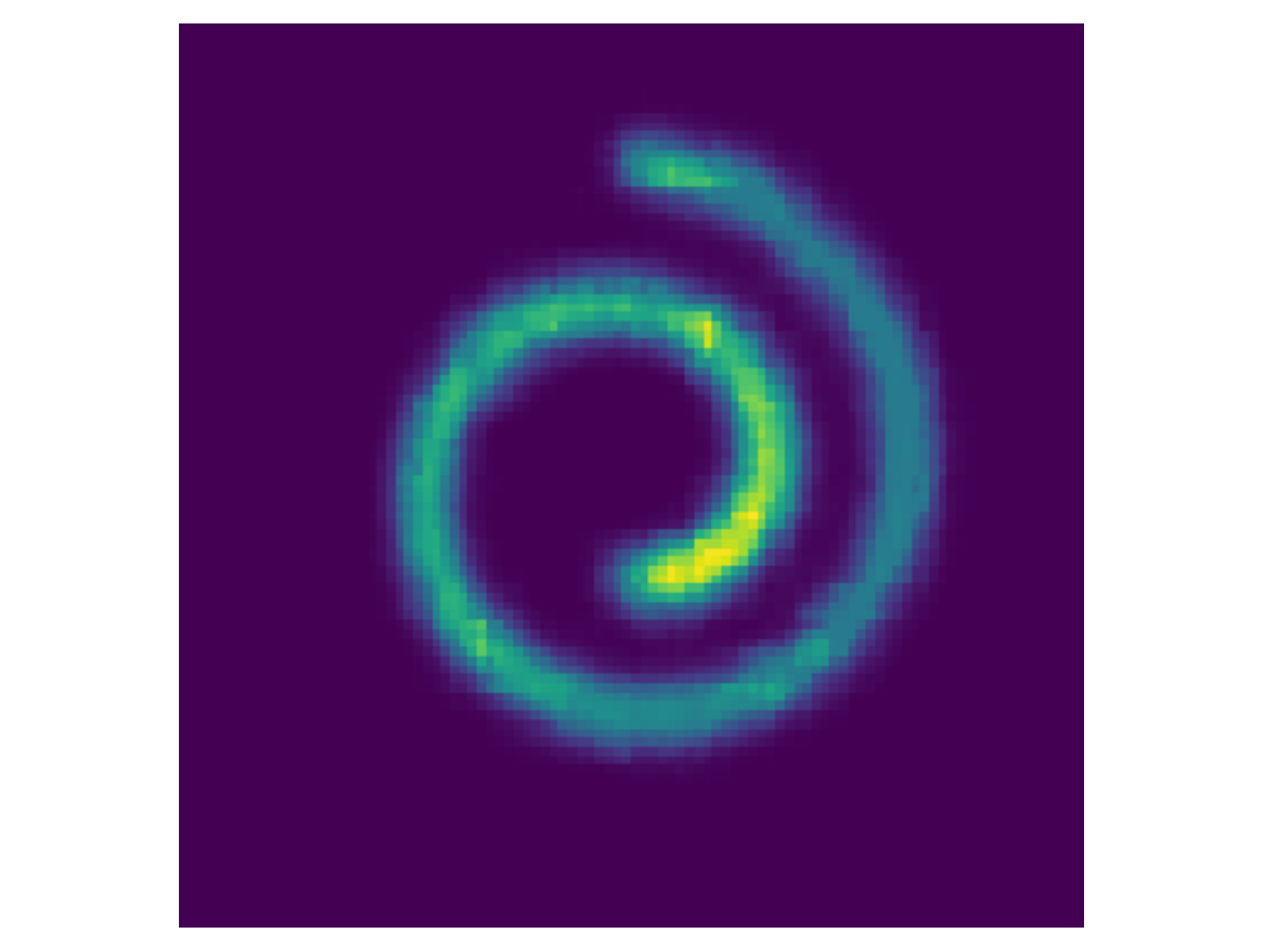}
    \end{minipage}
\hspace{-\hinterval}
    \begin{minipage}{\mpwid\textwidth}
    \centering
    \includegraphics[width=\textwidth]{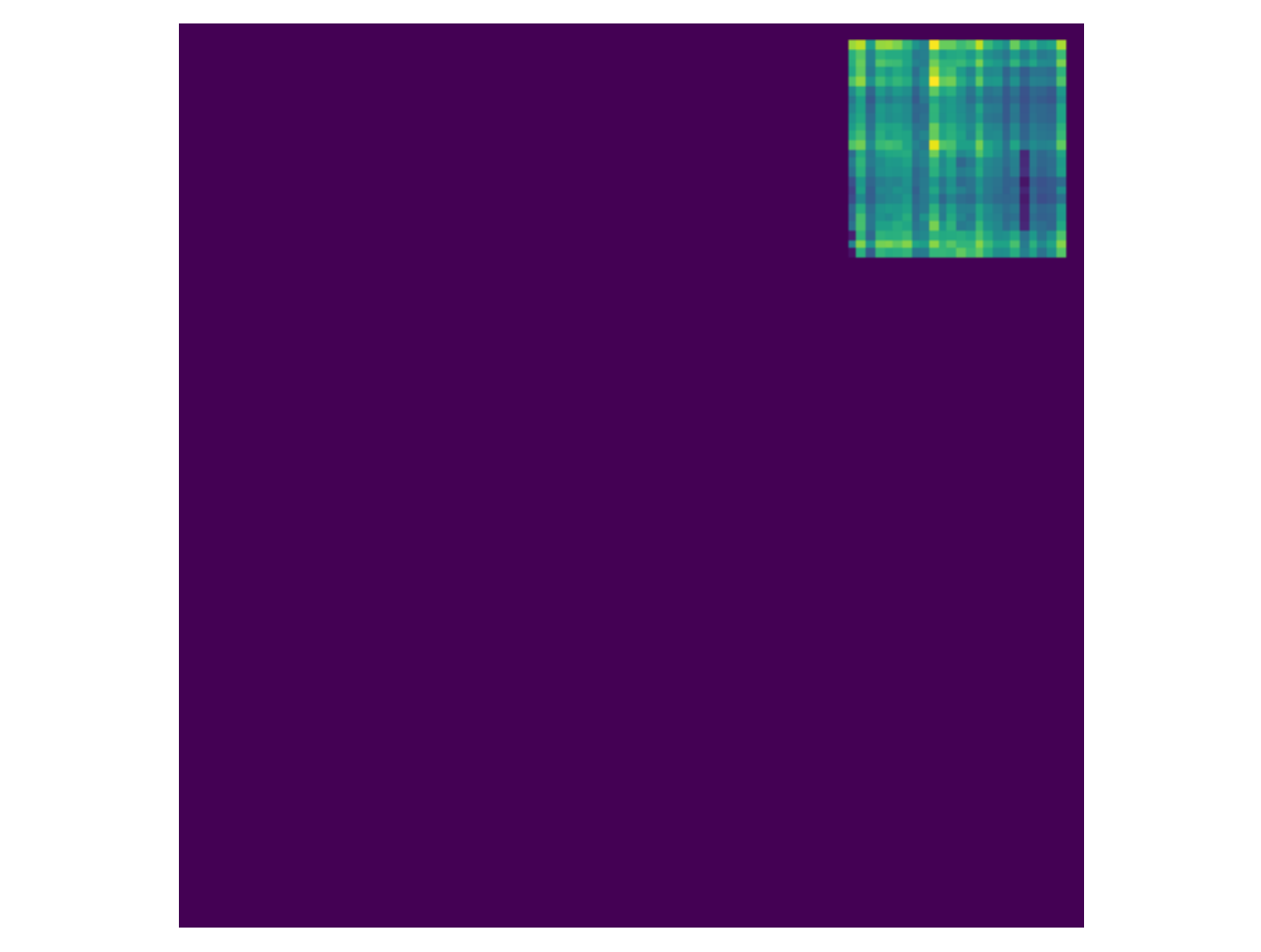}
    \end{minipage}
\hspace{-\hinterval}
\end{minipage}
\begin{minipage}{\textwidth}
\centerline{ALOE+}
    \begin{minipage}{\mpwid\textwidth}
    \centering
    \includegraphics[width=\textwidth]{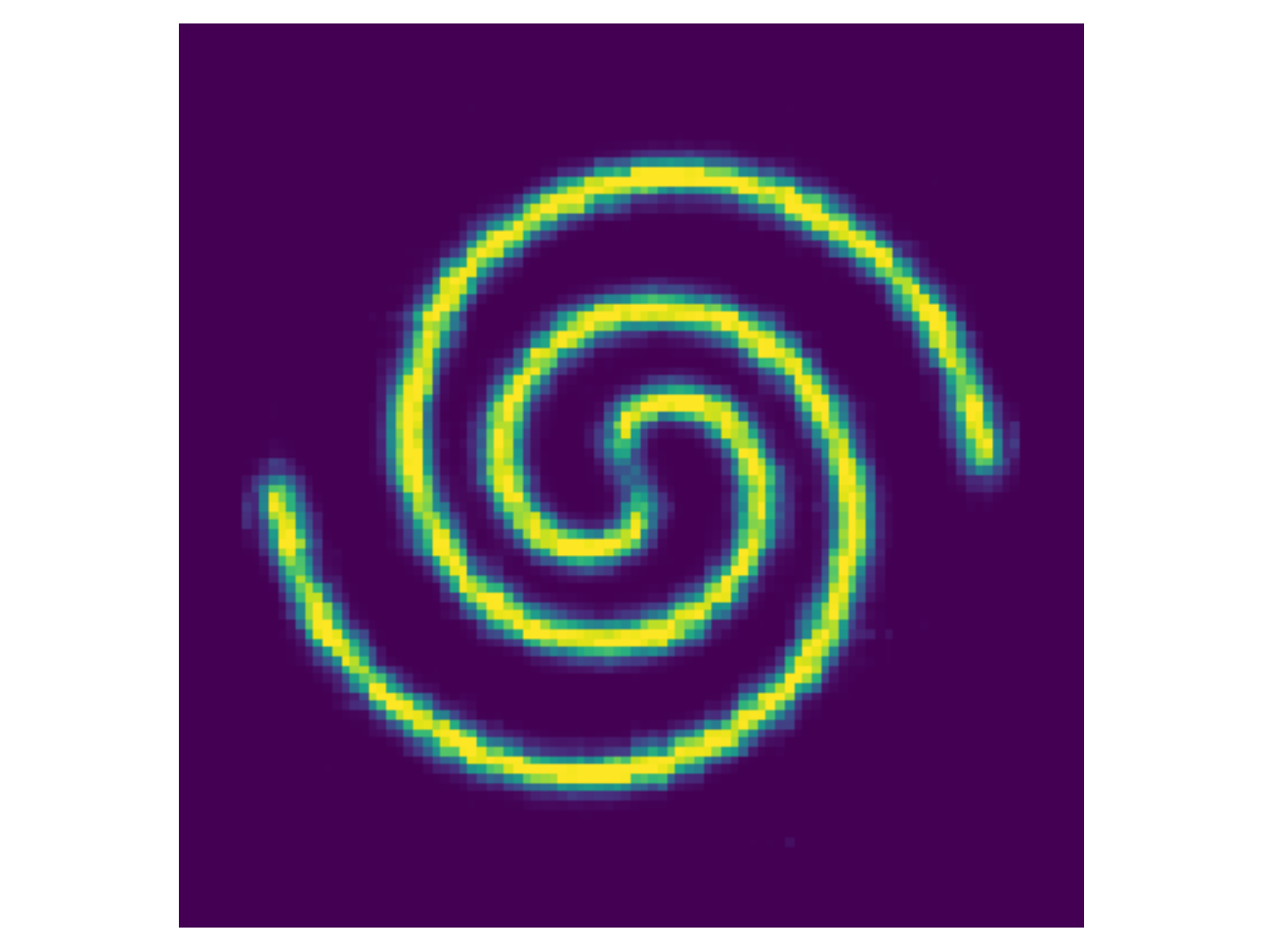}
    2spirals
    \end{minipage}
\hspace{-\hinterval}
    \begin{minipage}{\mpwid\textwidth}
    \centering
    \includegraphics[width=\textwidth]{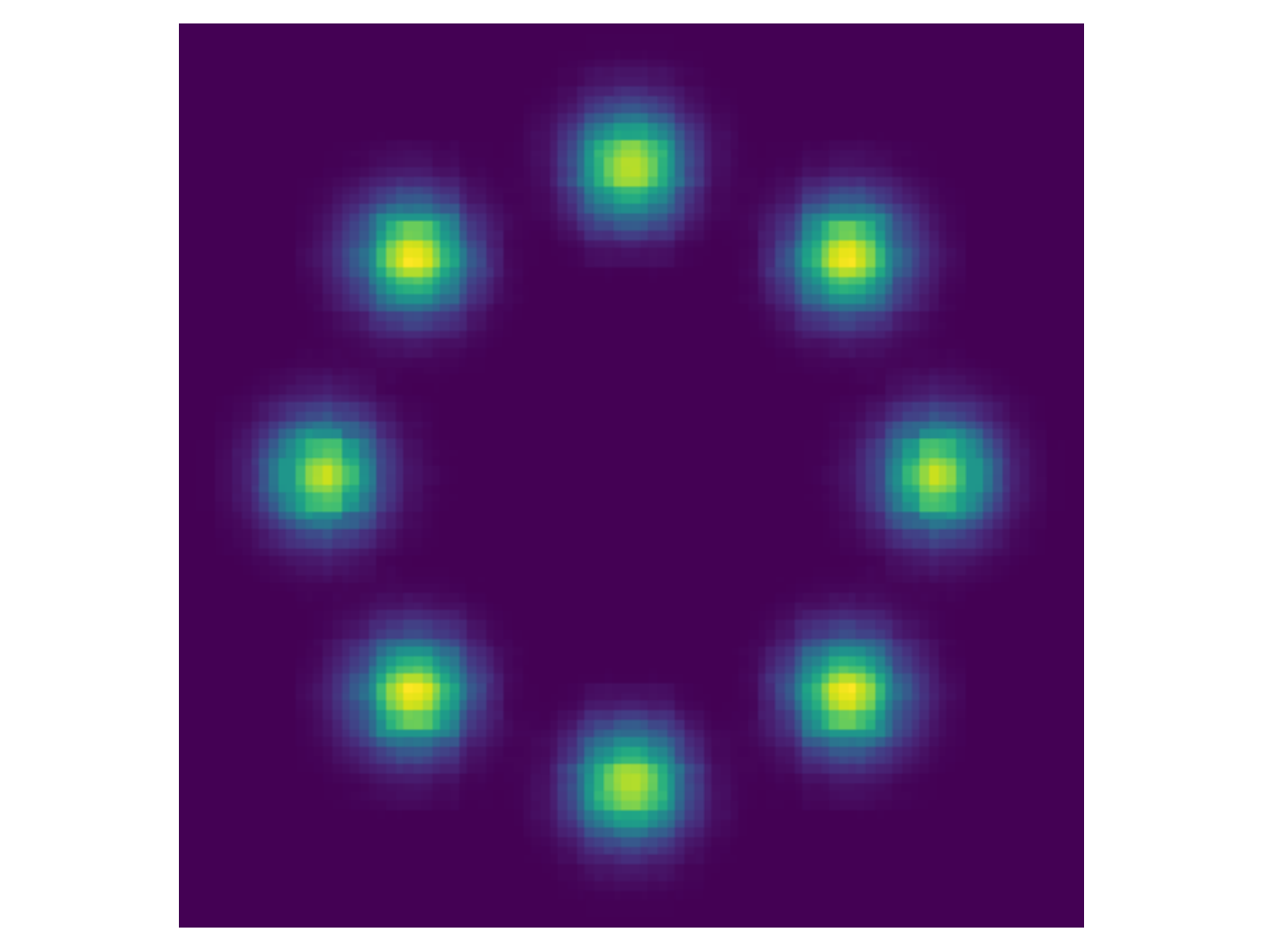}
    8gaussians
    \end{minipage}
\hspace{-\hinterval}
    \begin{minipage}{\mpwid\textwidth}
    \centering
    \includegraphics[width=\textwidth]{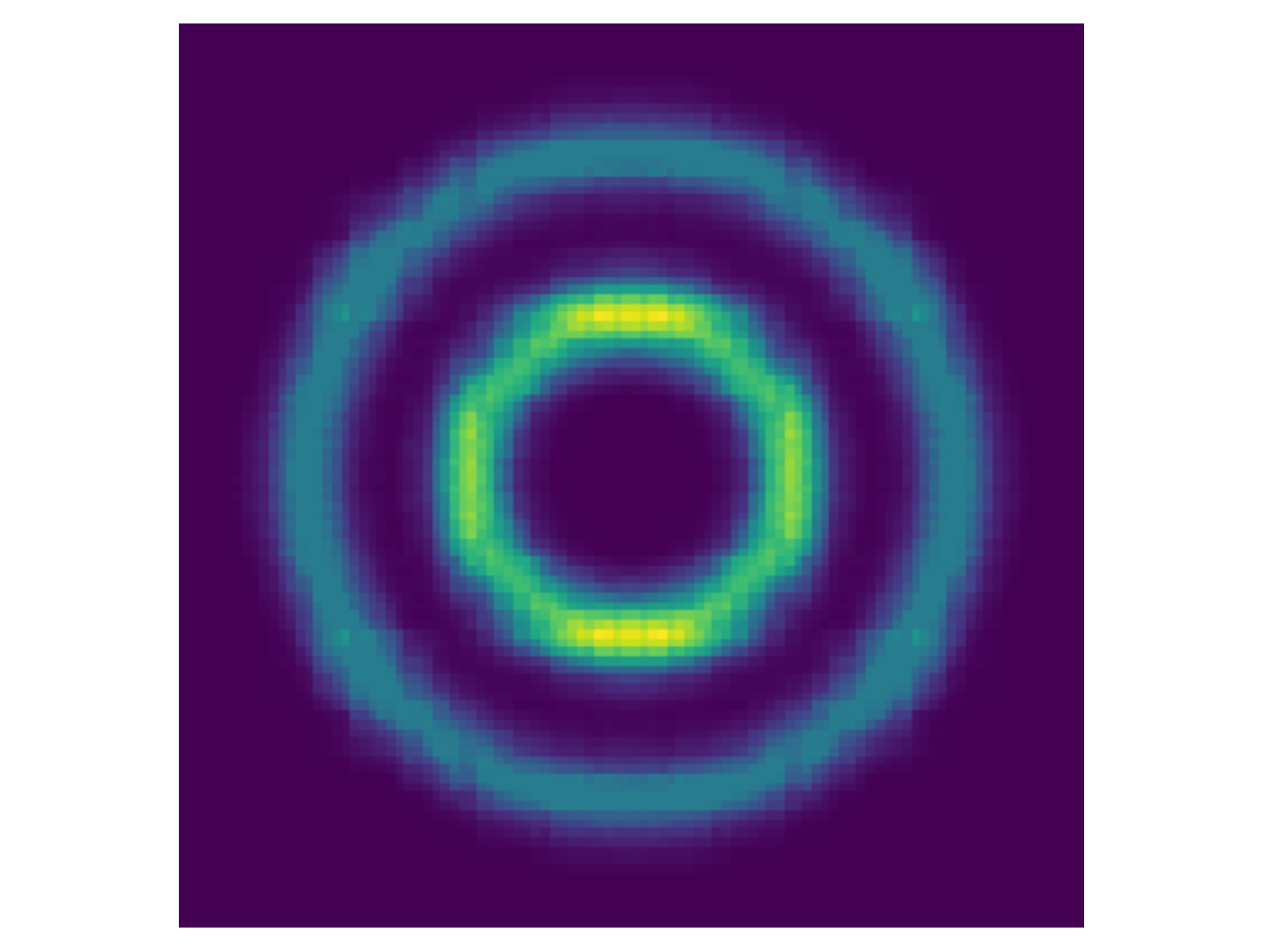}
    circles
    \end{minipage}
\hspace{-\hinterval}
    \begin{minipage}{\mpwid\textwidth}
    \centering
    \includegraphics[width=\textwidth]{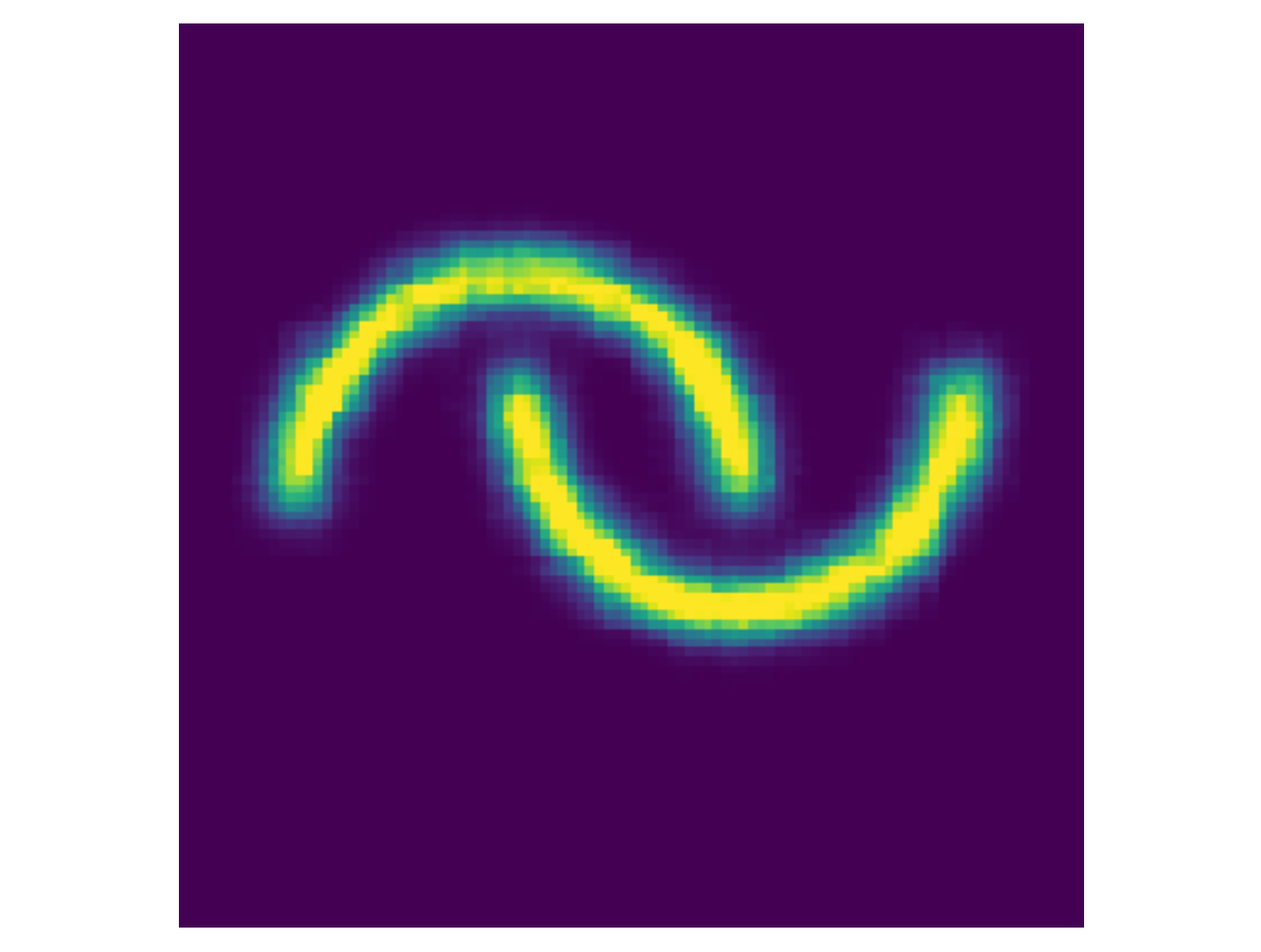}
    moons
    \end{minipage}
\hspace{-\hinterval}
    \begin{minipage}{\mpwid\textwidth}
    \centering
    \includegraphics[width=\textwidth]{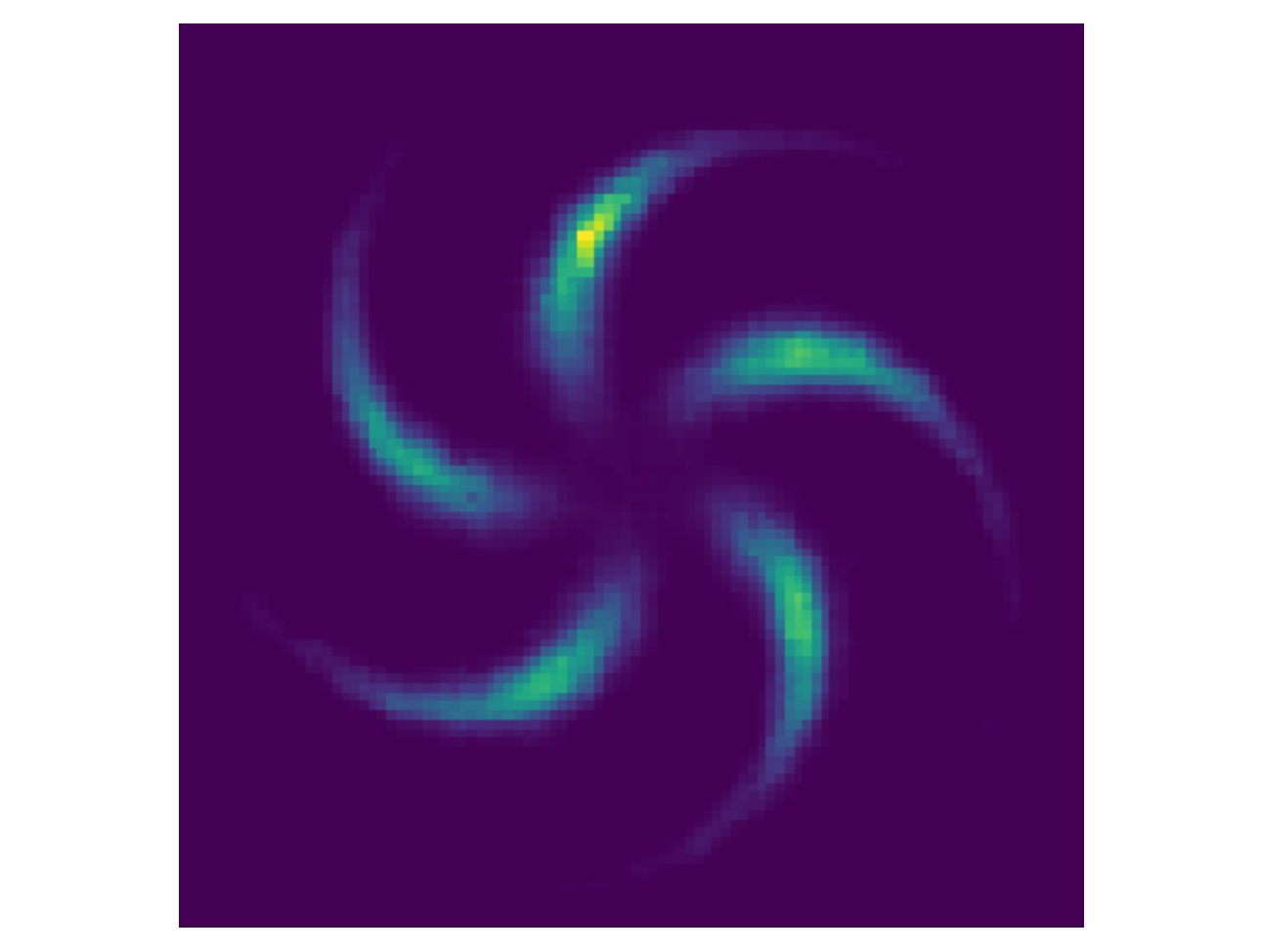}
    pinwheel
    \end{minipage}
\hspace{-\hinterval}
    \begin{minipage}{\mpwid\textwidth}
    \centering
    \includegraphics[width=\textwidth]{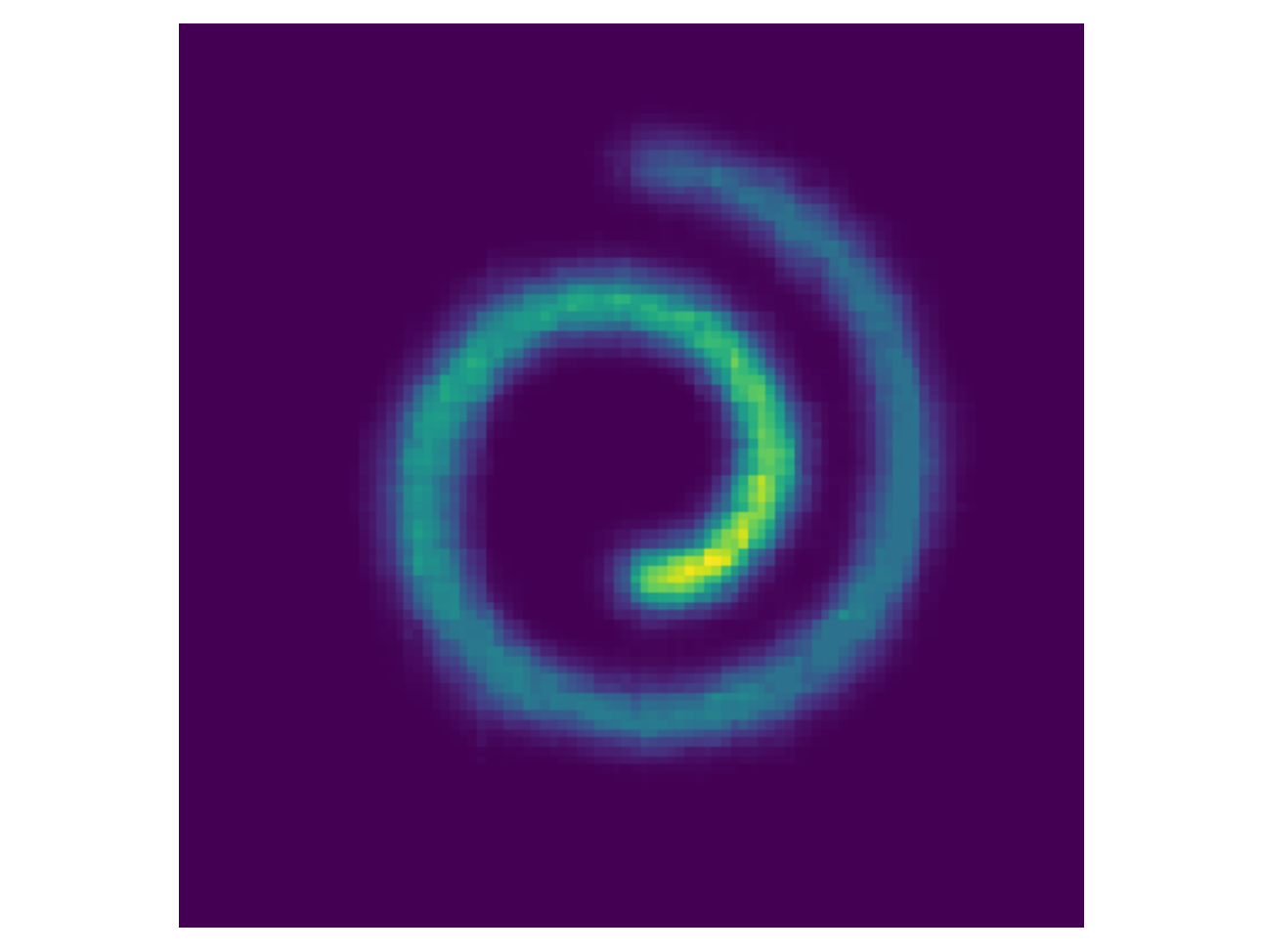}
    swissroll
    \end{minipage}
\hspace{-\hinterval}
    \begin{minipage}{\mpwid\textwidth}
    \centering
    \includegraphics[width=\textwidth]{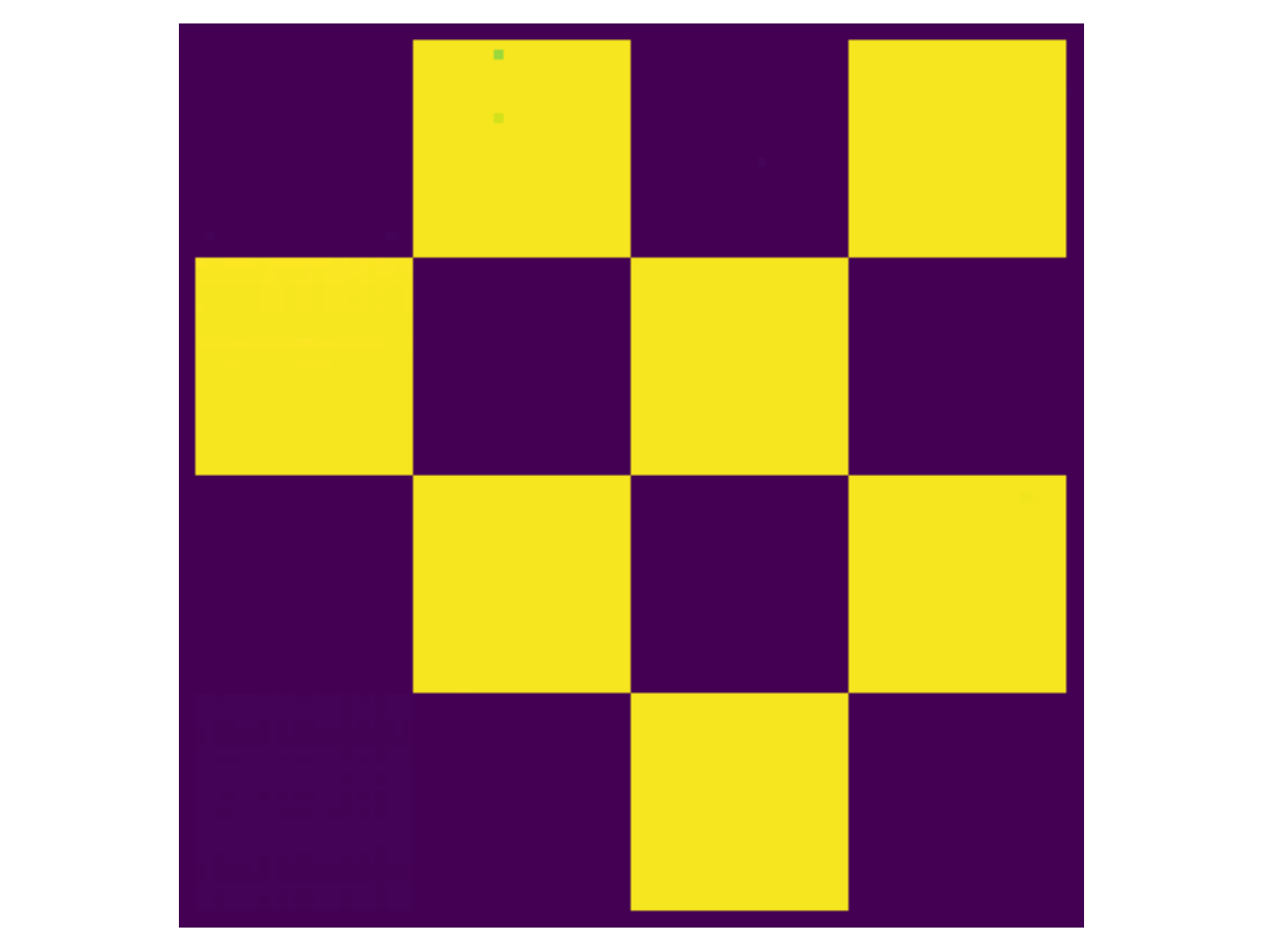}
    checkerboard
    \end{minipage}
\hspace{-\hinterval}
\end{minipage}
\caption{Visualization of the learned energy function from different baseline methods.}
\label{fig:baseline_energy}
\end{figure}

This synthetic task is also adopted by previous works such as \citet{Grathwohl2019FFJORDFC, Dai2020ALOE}.
We keep a consistent setting with \citet{Dai2020ALOE} unless specified.
The data is first generated by an infinite data oracle as 2D floating-points values. They are then turned into 16-bit Gray code.
This problem is challenging itself even without the existence of Gray code transformation, which is highly nonlinear.
We use a $4$ layer MLP with $256$ hidden dimension and ELU activation \citep{Clevert2016FastAA} as the energy function. 
The training of this energy function lasts $10^{5}$ steps.
An Adam optimizer with $1\times 10^{-3}$ learning rate is used to update the EBM. 
The batch size is $128$.
For PCD baseline, we use $10$ steps Gibbs sampling to generate negative samples, and choose the best results from three different replay buffer re-initialization rate: $\{0.05, 0.1, 1\}$.
For ALOE and ALOE+, we keep the same configuration as in \citet{Dai2020ALOE}, where the former uses a naive $32$-dim multinomial initial distribution proposal, and the latter use an autoregressive model which contains $32$ MLPs that each has three layers with $512$ hidden dimension.
Both methods have an editor network and stop policy network, either of which is three layer MLP with $512$ hidden feature.
For the EB-GFN algorithm, the policy network is a similar three layer MLP (the forward policy and backward policy share the same first layers and differ in the last layer),  and the output dimension is $3\times 32=96$.
The GFlowNet is optimized with an Adam optimizer, where the learning rate is $1\times 10^{-3}$.
We use an equal mix of $P_F(\tau)$ and $P_B(\tau|\x)$ to generate training trajectories for trajectory balance objective (\ie, set $\alpha=0.5$ in Algorithm~\ref{alg:gfn_training}).
For the back-and-forth proposal, we set $K$ to linearly increase from $1$ to $D$ through the training process.
The NLL computation of GFlowNet follows the method described in \S\ref{sec:gfn_training}.
We set the value of $M$ to be 100, which is large enough to converge 
(as a reference, 
for checkerboard dataset, the NLL is 
$20.695967$ when $M=10$, 
$20.695692$ when $M=50$, 
$20.695967$ when $M=100$, 
$20.695583$ when $M=500$, 
$20.695490$ when $M=1000$).
The number of samples is set to $10^5$, which is also enough for convergence in a similar sense.

To help better understanding the oracle of this task,
we visualize the ground truth samples in Fig.~\ref{fig:true_synthetic_samples}.
We can see that EB-GFN could generate samples very close to these true data.
We also plot the visualization of the baselines' energy function in Figure~\ref{fig:baseline_energy}.
It demonstrates that ALOE actually has a hard time modeling multimode distribution without the help of a large initial proposal model, as ALOE+ does.

\begin{table*}[t]
\setlength{\tabcolsep}{2.5mm}
\centering
\caption{
Experiment results with seven 2D synthetic problems.
We display the MMD with linear kernel (in units of $1\times 10^{-4}$).
Notice that ALOE+ uses a thirty times larger parametrization than ALOE and EB-GFN.
}
\label{tab:linear_mmd}
\begin{tabular}{c|l|ccccccc}
\toprule
Metric & Method & 2spirals & 8gaussians & circles & moons & pinwheel & swissroll & checkerboard \\
\midrule
\multirow{4}{*}{MMD$\downarrow$} 
& PCD    &  $48.22$&$19.28$&$6.029$&$18.72$&$10.48$&$29.45$& $64.05$ \\
& ALOE   &  $464.8$ & $2240$ & $10.18$ & $557.1$ & $810.4$ & $\textbf{5.001}$ & $1376$ \\
& ALOE+  &  $\textbf{3.802}$ & $\textbf{1.647}$ & $12.54$ & $9.181$ & $32.72$ & $15.01$ & $264.1$ \\
& EB-GFN &  $9.705$ & $10.22$ & $\textbf{4.056}$ & $\textbf{1.899}$ & $\textbf{1.298}$ & $10.23$ & $\textbf{27.18}$\\
\bottomrule
\end{tabular}
\end{table*}

In \citet{Dai2020ALOE}, the authors mentioned using Hamming kernel MMD, while in their public code linear MMD is adopted.
Further, in their public implementation, the MMD result is calculated within a fixed group of 4000 samples.
The variance of such a calculation results in many results in their experimental table being negative (note that MMD is a non-negative metric in theory).
Based on these considerations, we choose to a more commonly adopted exponential Hamming kernel with $0.1$ bandwidth in Table~\ref{tab:synthetic_mmd}.
Besides, we report the average of $10$ repeat results, each with $4000$ samples.
To make a fair comparison, we also report the results given by linear kernel MMD in Table \ref{tab:linear_mmd} which is also used in ALOE public code.
We can see that our algorithm keeps being state-of-the-art, and surpasses both PCD and the basic ALOE method on all datasets except \textit{swissroll}. 

One interesting property of the proposed EB-GFN framework, is that we can get samples either from the resulting GFlowNet (by sampling with the forward policy) or the learned EBM (by sampling with MCMC).
Theoretically, GFlowNet would benefit more from its inductive bias as we discussed in \S\ref{sec:introduction}, but we ideally want both models to achieve good performance.
To this end, we also track the performance of the learned EBM.
We find the learned EBM shares similar performance with the GFlowNet.
The comparison is shown in Table \ref{tab:ebm_of_gfn}.
On average, the EBM expresses slightly worse NLL and MMD than the GFlowNet, but is still very competitive if compared with other baseline methods.
This can also demonstrate the benefit of the GFlowNet prior.
As a result, we 
hypothesize
that 
\emph{the EB-GFN algorithm can achieve a good GFlowNet even with a not-so-good reward function}.
This is because GFlowNet only needs the reward function to be \emph{relatively} accurate with respect the true target distribution.
We point out an interesting analogy to this phenomenon in reinforcement learning: a policy can have good performance even when the agent has learned  a not-so-good Q function \citep{Sutton2005ReinforcementLA, Bengio2020InterferenceAG}.

\begin{table*}[t]
\setlength{\tabcolsep}{2.5mm}
\centering
\caption{
Experiment results of both the EBM and GFlowNet in EB-GFN framework with seven 2D synthetic problems.
We display the NLL and MMD with exponential kernel (in units of $1\times 10^{-4}$).
}
\label{tab:ebm_of_gfn}
\begin{tabular}{c|l|ccccccc}
\toprule
Metric & Method & 2spirals & 8gaussians & circles & moons & pinwheel & swissroll & checkerboard \\
\midrule
\multirow{2}{*}{NLL$\downarrow$} 
& EBM  & $20.061$ & $19.984$ & $20.568$ & $19.736$ & $19.574$ & $20.161$ & $20.683$ \\
& GFlowNet   &  $20.050$ & $19.982$ & $20.546$ & $19.732$ & $19.554$ & $20.146$ & $20.696$ \\
\midrule
\multirow{2}{*}{MMD$\downarrow$} & EBM & $0.585$ & $0.068$ & $0.160$ & $0.186$ & $1.298$ & $0.459$ & $9.138$ \\
& GFlowNet & $0.583$ & $0.531$ & $0.305$ & $0.121$ & $0.492$ & $0.274$ & $1.206$\\
\bottomrule
\end{tabular}
\end{table*}

\textbf{Ablation study on synthetic tasks.}
For completeness, we conduct ablation study to understand the importance of two features in EB-GFN algorithms:
(1) backward trajectory sampling in GFlowNets training distribution mentioned in \S\ref{sec:gfn_training}, and (2) the back-and-forth proposal proposed in \S\ref{sec:joint_training}.
We do experiments on \textit{checkerboard} and \textit{moons} tasks.
We first remove the usage of backward training samples and only use $\tau\sim P_F(\tau)$ to train the GFlowNet.
The GFlowNet NLL on \textit{moons} becomes $19.746$ from $19.732$, and Hamming exponential MMD becomes $0.342$ from $0.121$ (in units of $1\times 10^{-4}$).
For \textit{checkerboard}, the NLL becomes $20.709$ from $20.696$ and the MMD becomes $2.648$ from $1.206$ (in units of $1\times 10^{-4}$).
This shows removing the backward trajectory feature would only do little harm to the performance.
For the second part, once we remove the back-and-forth proposal and always use $K=D$, the training loss of EB-GFN quickly diverges on both tasks.
This indicates the suggested proposal trick is crucial to a reasonable optimization landscape.

\textbf{Understanding the learned backward policy.} In Fig.~\ref{fig:gfn_noise}, we give a visualization to show that the learned erasure policy $P_B(\cdot|\cdot;\vtheta)$ is meaningful. The GFlowNet-damaged samples have a clear visual structure that, interestingly, indicates that the several highest-magnitude bits in the Gray code are the first to be deleted by $P_B$ and, correspondingly, are the last to be generated by a forward policy that minimizes the trajectory balance loss jointly with this $P_B$.

\begin{figure*}[t]
\centering
\includegraphics[width=\textwidth]{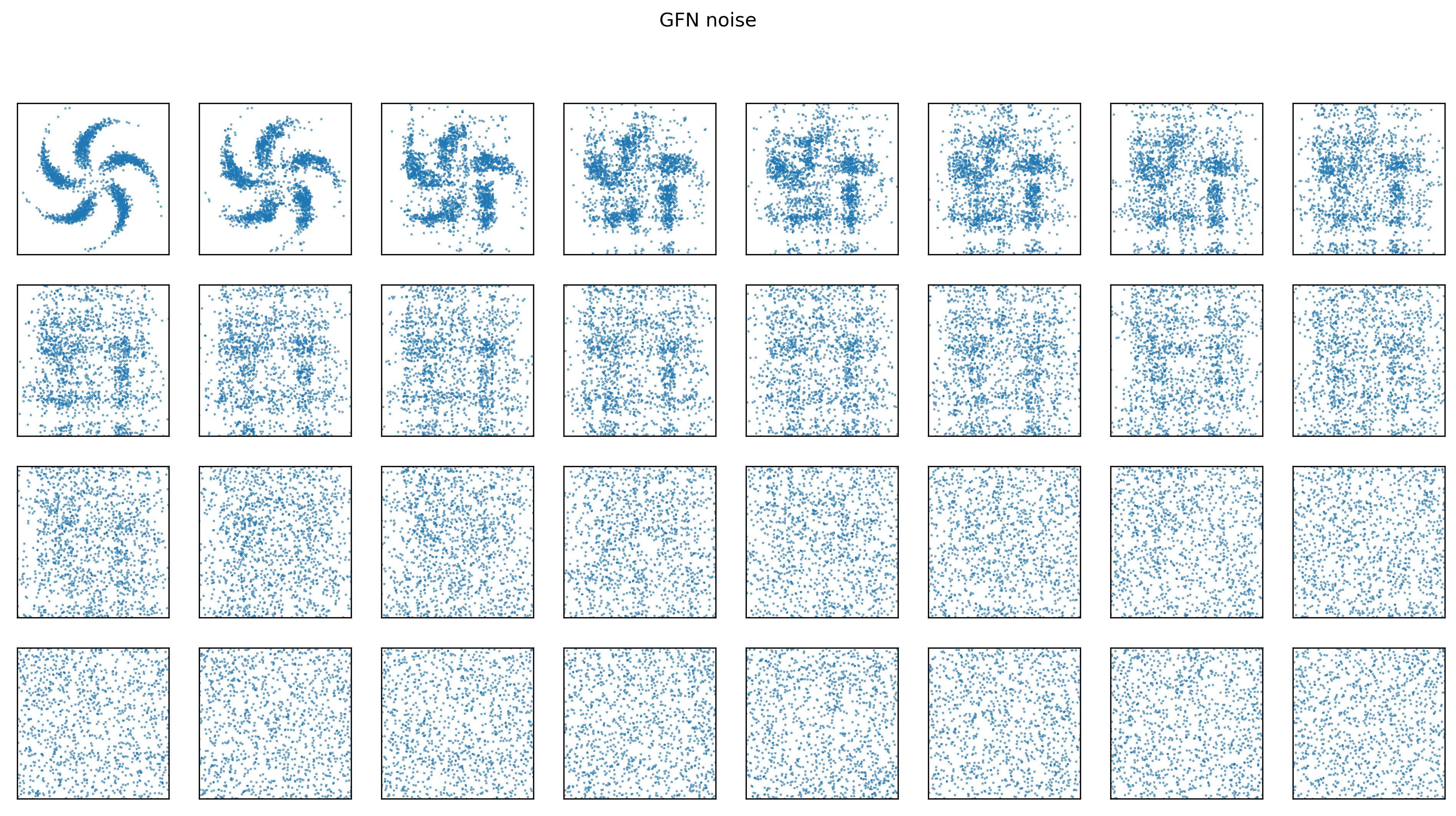}\\
\includegraphics[width=\textwidth]{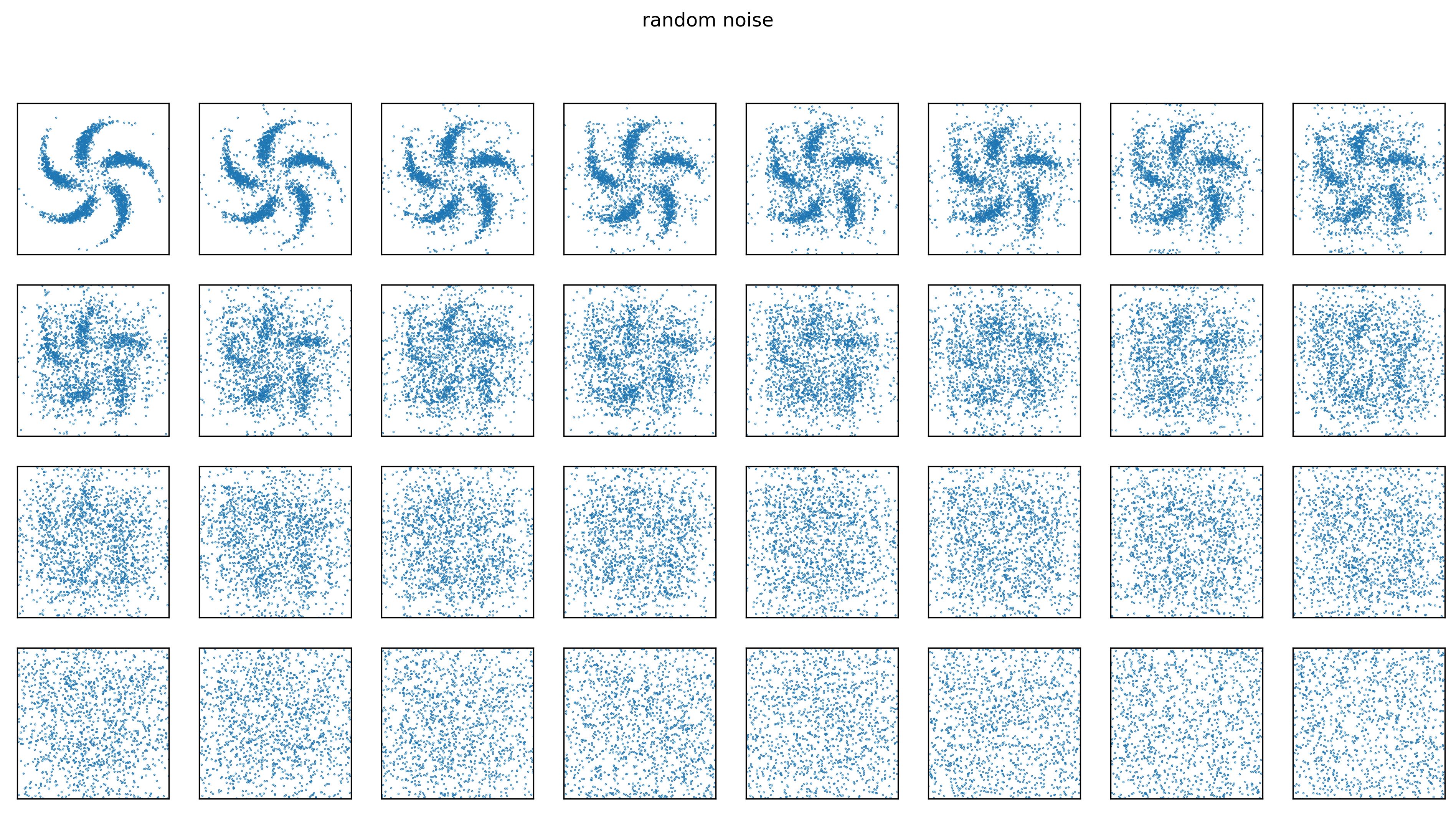}
\caption{We begin with real data from the \textit{pinwheel} dataset (top left) and take increasing numbers of steps (up to 31) from either the learned backward policy $P_B(\cdot|\cdot;\vtheta)$ of a GFlowNet \textit{(top)} or a uniform backward policy $P_B^\circ$ \textit{(bottom)}, then randomly (i.i.d. Bernoulli) pick new 0/1 values for the voided bits and visualize the resulting damaged samples.}
\label{fig:gfn_noise}
\end{figure*}

\subsection{Discrete image modeling}
\label{sec:appendix_image}

We explain the details of our discrete image modeling task here.
Discrete image modeling with EBM is a hard problem, and pure PCD training would diverge if the number of MCMC steps is not large enough or if there is no replay buffer trick to help training.
This is not the case with continuous circumstances \citep{Nijkamp2020OnTA}.
In this part, we follow the settings of \citet{Grathwohl2021OopsIT}, which are stated below.
We use Adam optimizer with $1\times 10^{-4}$ learning rate to update the energy function.
The batch size is set to be $100$ and the training lasts for $5\times 10^4$ steps.
The energy function is an MLP with $256$ hidden units and three hidden layers\footnote{We do not use the ResNet-18 backbone for EBM, because we find that it takes 2 weeks for the training of Gibbs-With-Gradients to finish with the original code of \href{https://github.com/wgrathwohl/GWG_release}{GWG public repo}.}. 
We do not use exponential moving average for simplicity.
The GFlowNet is optimized with an Adam optimizer, where the learning rate is $1\times 10^{-3}$.
The validation of EBM likelihood is achieved with $300000$ step annealed importance sampling (AIS).
GFlowNet is modelled as an MLP with three hidden layers and $512$ hidden units.
In this part we take the canonical design of backward policy, as we find that it could stabilize the training process.
We choose the checkpoint which has the best validation result, and report the corresponding test set performance. 
The same GFlowNet training techniques are utilized as in synthetic tasks:
we use an equal mix of $P_F(\tau)$ and $P_B(\tau|\x)$ to generate training trajectories for trajectory balance objective (\ie, set $\alpha=0.5$ in Algorithm~\ref{alg:gfn_training}).
For the back-and-forth proposal, we set $K$ to linearly increase from $1$ to $D$ through the training process.
We also do ablation study on these features in this task, and we get different results from \S\ref{sec:appendix_synthetic}.
To be precise, in the static mnist experiment, 
we find that these two techniques are both important:
EB-GFN would diverge without either trick.
This partially reflects the difficulty of this task.

We also find that introducing LayerNorm~\citep{Ba2016LayerN} into the forward policy network architecture is of great benefit to the generative modeling performance.
We add a LayerNorm after each linear layer except the last one.
The negative likelihood of this ablation is presented in the following table.

\begin{center}
\begin{tabular}{c|cccc}
\toprule
Method / Dataset & Omniglot & Silhouettes & Static MNIST & Dynamic MNIST \\
\midrule
\midrule
w/o LayerNorm & $112.59$ & $185.57$ & $102.43$ & $105.75$ \\
w/ LayerNorm & $104.88$ & $174.48$ & $89.48$ & $88.76$ \\
\bottomrule
\end{tabular}
\end{center}

\end{document}